\newcommand{\ignore}[1]{}  
\newcommand{\R}{\mathbb{R}}
\newcommand{\matrank}{\mathrm{rank}}
\newcommand{\diag}{\mathrm{diag}}
\newcommand{\onesvec}{\mathbf{1}}
\newcommand{\zeromat}{\mathbf{0}}
\newcommand{\eye}{\mathbf{I}}
\newcommand{\hadamard}{\circ}
\newcommand{\satrange}{r}
\newcommand{\singlepoint}{\mathbf{x}}
\newcommand{\fault}{f}
\newcommand{\faultmag}{\bar{f}}
\newcommand{\noise}{w}
\newcommand{\sqrtterm}{s}
\newcommand{\rrefleftover}{\Delta}
\newcommand{\PointMat}{\mathbf{X}}
\newcommand{\EDM}{\mathbf{D}}
\newcommand{\EDMnoisy}{\Tilde{\mathbf{D}}}
\newcommand{\GCEDM}{\mathbf{G}}
\newcommand{\GCEDMnoisy}{\Tilde{\mathbf{G}}}
\newcommand{\centeringJ}{\mathbf{J}}
\newcommand{\FaultMat}{\mathbf{F}}
\newcommand{\SqrtMat}{\mathbf{S}}
\newcommand{\FaultMatSuper}{\FaultMat^{n, m}}
\newcommand{\SqrtMatSuper}{\SqrtMat^{n, d}}
\newcommand{\meanPoints}{\mu}
\newcommand{\teststatistic}{\gamma_{\text{test}}}
\newcommand{\teststatisticthreshold}{\bar{\gamma}_{\text{test}}}
\newtheorem{proposition}{Proposition}[section]
\newtheorem{corollary}{Corollary}[proposition]
\newcolumntype{I}{!{\vrule width 1.5pt}}
\newcommand{\newtext}[1]{\textcolor{black}{#1}}
\newcommand{\newcaption}[1]{\caption{\textcolor{black}{#1}}}
\newcommand{\newnewtext}[1]{\textcolor{black}{#1}}
\newcommand{\newnewcaption}[1]{\caption{\textcolor{black}{#1}}}
\newenvironment{editsection}[1]
  {\begingroup\color{black}}
  {\endgroup}
\newenvironment{newnewsection}[1]
  {\begingroup\color{black}}
  {\endgroup}
\journalname{NAVIGATION}
\title{\newtext{Satellite Autonomous Clock Fault Monitoring with Inter-Satellite Ranges Using Euclidean Distance Matrices}}
\author[1]{Keidai Iiyama}
\author[1]{Daniel Neamati}
\author[1]{Grace Gao}
\address[1]{Department of Aeronautics and Astronautics, Stanford University, California, United States}
\begin{document}

\abstract[Abstract]{
\newnewtext{
To support reliable positioning, navigation, and timing (PNT) services for lunar satellite constellations, this paper proposes a framework for detecting satellite clock phase jumps using dual one-way inter-satellite range (ISR) measurements. 
The method models the constellation as a weighted graph and exploits rigidity properties to detect faults. 
Specifically, satellite clock faults are identified by monitoring the realizability of vertex-redundantly rigid subgraphs through the singular values of geometric-centered Euclidean distance matrices (GCEDMs).
We establish the graph-topological conditions required for fault detection and derive the statistical distribution of the GCEDM test statistic under both nominal and faulty conditions.
To address sparse link connectivity, we also introduce an ephemeris-augmented formulation that combines measured and estimated ranges and performs fault detection on fault-detectable subgraphs.
Simulation results using a terrestrial GPS constellation and a notional lunar constellation demonstrate that the proposed approach provides robust fault detection under both dense and sparse measurement conditions.
}

}

\keywords{lunar positioning, navigation, and timing (PNT), satellite constellation, fault detection, inter-satellite links}

\maketitle

\section{Introduction}

To meet the growing need for robust positioning, navigation, and timing (PNT) services at the lunar surface and lunar orbits, 
NASA and its international partners are collaborating to develop LunaNet~\citep{Israel2020}, 
a network of networks providing data relay, PNT, detection, and science services.
In LunaNet, these services are provided through cooperation among interoperable systems that evolve over time to meet the growing needs for these services, efficiently establishing a reliable, sustainable, and scalable network. 

It is crucial to monitor LunaNet navigation satellites for the reliable operation of safety-critical missions, such as lunar landing and human missions on the lunar surface.
The quality of lunar navigation signals can be compromised by various system faults, such as clock runoffs (e.g.\newtext{,} phase and frequency jumps~\citep{weiss2010board}), unflagged maneuvers, 
failures in satellite payload signal generation components, and code-carrier incoherence~\citep{Walter2018}.
While the LunaNet Relay Service Documents (SRD) state that robustness of the \newtext{navigation} signal should be a key consideration for LunaNet~\citep{nasa2022srd}, 
\newnewtext{the} specific methodology to monitor faults on LunaNet satellites has not yet been solidified.

\begin{editsection}{blue}
The currently implemented integrity monitoring method for terrestrial GNSS can be categorized into three methods: (1) ground-based monitoring, (2) receiver autonomous integrity monitoring (RAIM), and (3) satellite-based integrity monitoring (SAIM).
The first method relies on a network of ground-based monitoring stations at known positions to track and collect the GNSS signals, which are then collected and processed at a central computing center to compute corrections and integrity information~\citep{MISRA_1993}.
The system fault alerts are provided as integrity information from GNSS augmentation systems, such as the satellite-based augmentation system (SBAS)~\citep{van2009gps}.
The second method, RAIM, uses redundant measurements from GNSS satellites to detect and exclude faulty satellites on the receiver side~\citep{MISRA_1993}.
The third method, SAIM, computes the integrity information onboard the navigation satellite to detect satellite-related faults such as clock anomalies and ephemeris errors~\citep{xu2011gnss}.

While the first two methods are well established and widely adopted for integrity monitoring of terrestrial GNSS, their adoption may be more limited for lunar constellations.
The first method requires a network of ground-based monitoring stations, which would be expensive and impractical to deploy widely on the lunar surface, especially for near-term constellations.
The second method, RAIM, requires at least five satellites in view to detect (six satellites in view to exclude) a faulty satellite, which limits its availability in lunar constellations with a smaller number of navigation satellites than terrestrial GNSS.

The third method, SAIM, has several advantages compared with the other two methods:
(1) faster time to alert enabled by eliminating the latency associated with data collection, processing, and uplink on the ground
(2) the use of precise inter-satellite range (ISR) measurements (BeiDou satellites have demonstrated 10 cm accuracy~\citep{Zhang2024}) 
that are unaffected by the ionosphere and troposphere, 
and 
(3) the monitoring can be operated autonomously within the constellation and independently of ground monitoring.
The autonomous nature of SAIM is particularly valuable for early lunar deployments, when lunar surface stations will be scarce or absent.

SAIM has been implemented in the BeiDou GNSS system to detect clock phase and frequency jumps
by comparing the outputs of multiple onboard clocks~\citep{Cao2019, Chen2022},
at the cost of adding extra hardware.
Several other works have also proposed SAIM methods utilizing ISR measurements, which detect ephemeris anomalies by directly comparing the ISR measurement and the predicted range from the ephemeris
~\citep{wolf_onboard_2000, rodriguez-perez_inter-satellite_2011, xu2011gnss}.
However, \newnewtext{these methods} processes ISR measurements on a per-satellite basis, 
making their minimal detectable bias (MDB) dependent on ephemeris accuracy~\citep{xu2011gnss}.  
This dependence may prove challenging for lunar constellations, 
where orbit determination and time synchronization are more difficult than for terrestrial GNSS.  
\newnewtext{Another potential SAIM algorithm that can solve this problem} is to aggregate all ISR measurements centrally, form a linear system of residual equations, and apply data-snooping techniques~\citep{baarda1968testing, Imparato2018} to isolate faulty satellites. 
\end{editsection}

In this paper, we propose a new algorithm to detect phase jumps in the satellite clocks
that uses \newtext{dual one-way ISR} measurements.
The proposed method detects faults by identifying biases injected into ISR measurements whenever a satellite's onboard clock experiences a phase jump
between two dual one-way links.
We prove that we are able to detect fault satellites (satellites with clocks that experienced phase jumps) by assessing the realizability of the 5-clique (a fully connected graph of 5 nodes) subgraphs, since these subgraphs remain rigid (graphs that cannot be susceptible to continuous flexing are called rigid graphs), after removing any vertex from the graph~\citep{alireza_motevallian_robustness_2015}.
The concept of the algorithm is illustrated in Figure~\ref{fig:concept}.
\begin{figure}[htb!]
    \centering
    \includegraphics[width=0.4\linewidth]{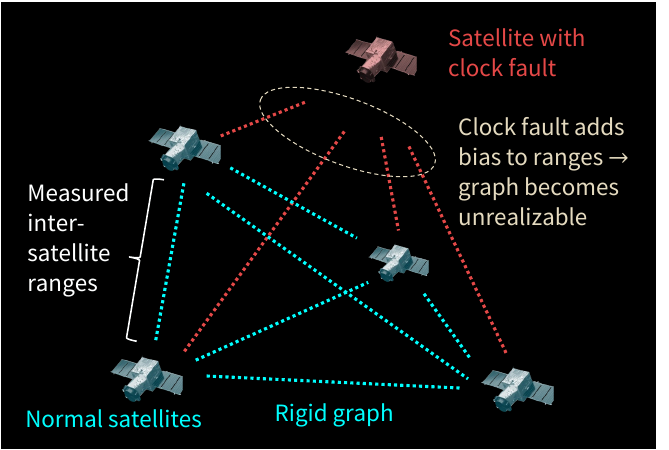}
    \newcaption{Concept of the proposed fault detection algorithm}
    \label{fig:concept}
\end{figure}

Our proposed method monitors the singular values of the geometric-centered \newtext{Euclidean distance matrix} (GCEDM)~\citep{dokmanic2015euclidean} constructed from the ISR measurements, 
to check if 5-clique subgraphs are realizable in 3-dimensional space. 
\newtext{In particular, we provide analytical expressions of the distribution of the squared 4th singular value under the presence of noise and biases in the range measurement, and design hypothesis tests to detect and identify the satellite clock with phase jumps.}
 
The contribution of this paper is summarized below. This work is based on our prior conference paper presented at the 2024 ION GNSS+ Conference~\citep{Iiyama2024Rigid}, 
\newtext{but includes key theoretical and simulation updates beyond our prior publication.}
\begin{editsection}{blue}
\begin{itemize}
    \item We establish the necessary graph topology, specifically, vertex redundant rigidity, for unambiguous fault detection from ISR measurements.
    \item We prove key rank properties of EDMs and GCEDMs, and derive the distribution of the squared 4th singular value under noise and bias.
    \item We develop a hypothesis test based on the GCEDM's 4th singular value to detect and identify satellite clock phase jumps.
    \item We validate our approach via simulations of a GPS constellation and a notional lunar constellation, analyzing the impact of hyperparameters and fault magnitudes, and comparing fault detection and false alarm rates against residual-based baseline methods.
\end{itemize}
\end{editsection}

\newtext{
\newnewtext{Our} method is not intended to 
replace decades of research in the existing fault detection methods. Rather, our method provides new theoretical and practical insights to view the fault detection problem from a different perspective.}

The paper is arranged as follows. 
\newtext{Section~\ref{sec:isl} introduces the formulation of the dual one-way ISR measurements.}
\newtext{Section~\ref{sec:residual_methods} describes two baseline clock jump detection methods, which use ISR measurement residuals computed from the ephemeris.}
\newtext{Section~\ref{sec:rigidity_fd} describes the proposed fault detection algorithm based on singular values of the GCEDM.}
\newtext{Section~\ref{sec:simulation} provides simulation results for the GPS constellation and a notional lunar constellation.}
The paper concludes with section~\ref{sec:conclusion}.

\begin{editsection}{blue}

\section{Measurements From Inter-satellite Links}
\label{sec:isl}
\subsection{Dual One-Way Inter-satellite Range Measurement}
We consider two satellites establishing a dual one-way inter-satellite link (ISL) as shown in Figure \ref{fig:dual_oneway} to obtain ISR measurements.
In this section, we formulate the ISR measurement model for the dual one-way link, based on \citet{Zhang2024}.

Let $i, j$ be the indices of the two satellites establishing the link, and $\singlepoint_i, \singlepoint_j \in \R^3$ the position vector of the two satellites.
The two observation equations for the dual one-way link can be expressed as follows:
\begin{align}
    \rho_{ij}(t_1) &= \|\singlepoint_j(t_1) - \singlepoint_i(t_1 - \Delta t_1) \| + c \cdot \left[ \tau_j(t_1) - \tau_i(t_1 - \Delta t_1) \right] 
    + c \cdot \left[\Delta t_j^{rx} + \Delta t_i^{tx} \right] + \omega_1 \\
    \rho_{ji}(t_2) &= \|\singlepoint_i(t_2) - \singlepoint_j(t_2 - \Delta t_2) \| + c \cdot \left[ \tau_i(t_2) - \tau_j(t_2 - \Delta t_2) \right] 
    + c \cdot \left[\Delta t_i^{rx} + \Delta t_j^{tx} \right] + \omega_2
\end{align}
where $\rho_{ij}, \rho_{ji}$ represents the measured pseudoranges for the two links where satellite $i$ and $j$ receives the signal at $t_1$ and $t_2$, respectively. 
$\Delta t_1$ and $\Delta t_2$ are the signal travel times; $\tau_i$ and $\tau_j$ are the clock offsets; $t^{rx}$ and $t^{tx}$ represent hardware delays of the receiving channel and the transmitting channels; $\omega_1$ and $\omega_2$ are the measurement noises.
We assume the phase center offset and relativistic effects are corrected, and not included in the equations.

To eliminate the common errors between the two links, the two measurements are transformed into the common time $t_0$ using the following equations:
\begin{align}
    \rho_{ij}(t_0) &= \rho_{ij}(t_1) + \Delta \rho_{ij} = \|\singlepoint_j(t_0) - \singlepoint_i(t_0) \| + c \cdot \left[ \tau_j(t_0) - \tau_i(t_0) \right] 
    + c \cdot \left[\Delta t_j^{rx} + \Delta t_i^{tx} \right] + \omega_{1, 0} \\
    \rho_{ji}(t_0) &= \rho_{ji}(t_2) + \Delta \rho_{ji} = \|\singlepoint_i(t_0) - \singlepoint_j(t_0) \| + c \cdot \left[ \tau_i(t_0) - \tau_j(t_0) \right] 
    + c \cdot \left[\Delta t_i^{rx} + \Delta t_j^{tx} \right] + \omega_{2, 0}
\end{align}
where $\Delta \rho_{ij}$ and $\Delta \rho_{ji}$ are the corrections of satellite position and clocks that can be expressed as:
\begin{align}
    \Delta \rho_{ij} &= \| \singlepoint_j(t_0) - \singlepoint_i(t_0)  \| - \|\singlepoint_j(t_1) - \singlepoint_j(t_1 - \Delta t_1)\| + c \cdot \left[ \tau_j(t_0) - \tau_i(t_0) - \tau_j(t_1) + \tau_i(t_1 - \Delta t_1) \right] \label{eq:corrected_rhoij} \\
    \Delta \rho_{ji} &= \| \singlepoint_i(t_0) - \singlepoint_j(t_0)  \| - \|\singlepoint_i(t_2) - \singlepoint_j(t_2 - \Delta t_2)\| + c \cdot \left[ \tau_i(t_0) - \tau_j(t_0) - \tau_i(t_1) + \tau_j(t_2 - \Delta t_2) \right]
    \label{eq:corrected_rhoji}
\end{align}
The corrections $\Delta \rho_{ij}$ and $\Delta \rho_{ji}$ are obtained using the predicted satellite ephemeris. 
Therefore, the accuracy of the corrections depends on the accuracy of the satellite velocity and the satellite clock drift of the ephemeris.
For Beidou satellites, the accuracy of the predicted satellite velocity and clock drift is better than 0.1 mm/s and 1 $\times 10^{-13}$ s/s, 
resulting in $\Delta \rho_{ij}, \Delta \rho_{ij} < 1.0$ cm when $t_2 - t_1 < 3$ s~\citep{Xie2022}.
For LunaNet satellites, the specification for the sum of velocity and clock drift error (converted to velocity by multiplying with lightspeed) in the ephemeris is 1.2 mm/s (for 3 $\sigma$), so the error of the correction terms can be in order of centimeters~\citep{nasa2022srd}.
Whenever further accuracy is required or velocity ephemeris information is not available, we can also utilize the inter-satellite range rate measurements between the satellites for correction~\citep{Iiyama2024contact}.

The sum and differences of equations \eqref{eq:corrected_rhoij} and \eqref{eq:corrected_rhoji} can be expressed as 
\begin{align}
    r_{ij}(t_0) &= \frac{\rho_{ij}(t_0) + \rho_{ji}(t_0)}{2} 
    = \| \singlepoint_j(t_0) - \singlepoint_i(t_0) \| + \bar{b}_{ij}^{range} + \frac{\omega_{1, 0} + \omega_{2, 0}}{2} \\
    \tau_{ij}(t_0) &= \frac{\rho_{ij}(t_0) - \rho_{ji}(t_0)}{2} 
    = c \cdot \left[ \tau_j(t_0) - \tau_i(t_0) \right] + \bar{b}_{ij}^{clock} + \frac{\omega_{1, 0} - \omega_{2, 0}}{2}
\end{align}
where $r_{ij}(t_0)$ and $\tau_{ij}(t_0)$ are the clock-free and geometry-free observations. $b_{ij}$ and $b_{ji}$ are the bias terms due to hardware delays as follows
\begin{align}
    \bar{b}_{ij}^{range} = c \cdot \frac{\Delta t_j^{rx} + \Delta t_i^{tx} + \Delta t_i^{rx} + \Delta t_j^{tx}}{2} \\
    \bar{b}_{ij}^{clock} = c \cdot \frac{\Delta t_j^{rx} + \Delta t_i^{tx} - \Delta t_i^{rx} - \Delta t_j^{tx}}{2}
\end{align}

The bias terms $b_{ij}^{range}$ and $b_{ij}^{clock}$ are usually constant over a short term and can therefore be estimated using filtering techniques. 
In this paper, we assume that these biases can be estimated within certain errors, and the error terms are included in the measurement noises, resulting in the following two equations.
\begin{align}
    r_{ij}(t_0) &=  \| \singlepoint_i - \singlepoint_j\| + \omega_{range}  \quad &\text{clock-free combination} \\
    \tau_{ij}(t_0) &= c \left[ \tau_j(t_0) - \tau_i(t_0) \right]  + \omega_{clock}  \quad &\text{geometry-free combination} 
\end{align}
For ISLs in Beidou, the observed $\omega_{range}$ and $\omega_{clock}$ can be approximated as a Gaussian distribution with standard deviations in the order of centimeters.

\begin{figure}[ht!]
\centering
\begin{subfigure}[b]{0.48\textwidth}
    \centering
    \includegraphics[height=65mm]{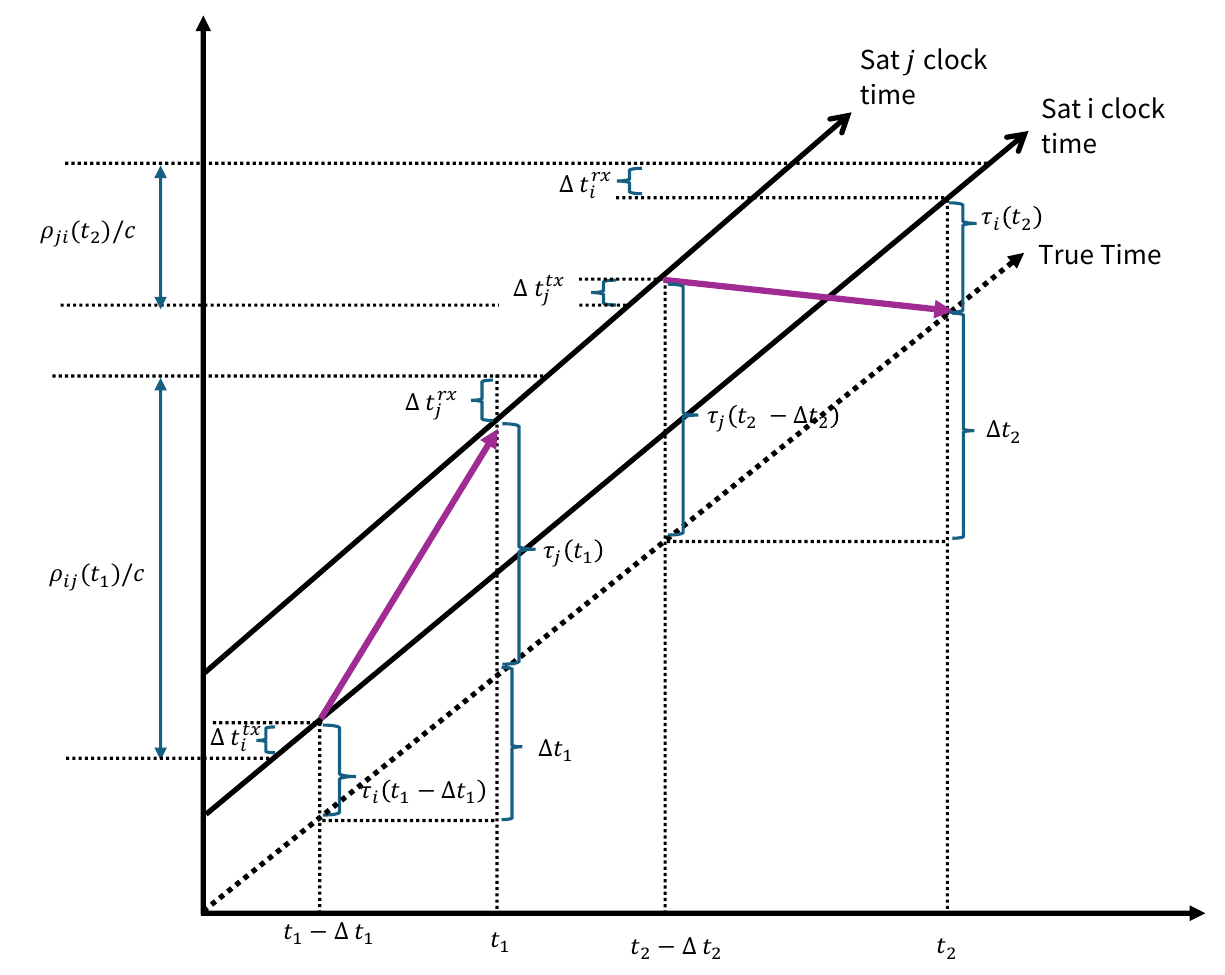}
    \subcaption{
    Dual one-way link between the two satellites, when there is no phase jump at either satellite.
    Two pseudorange measurements $\rho_{ij}(t_1)$ and $\rho_{ji}(t_2)$ are obtained.
    }
    \label{fig:dual_oneway_normal}
\end{subfigure}
\hfill
\begin{subfigure}[b]{0.48\textwidth}
    \centering
    \includegraphics[height=65mm]{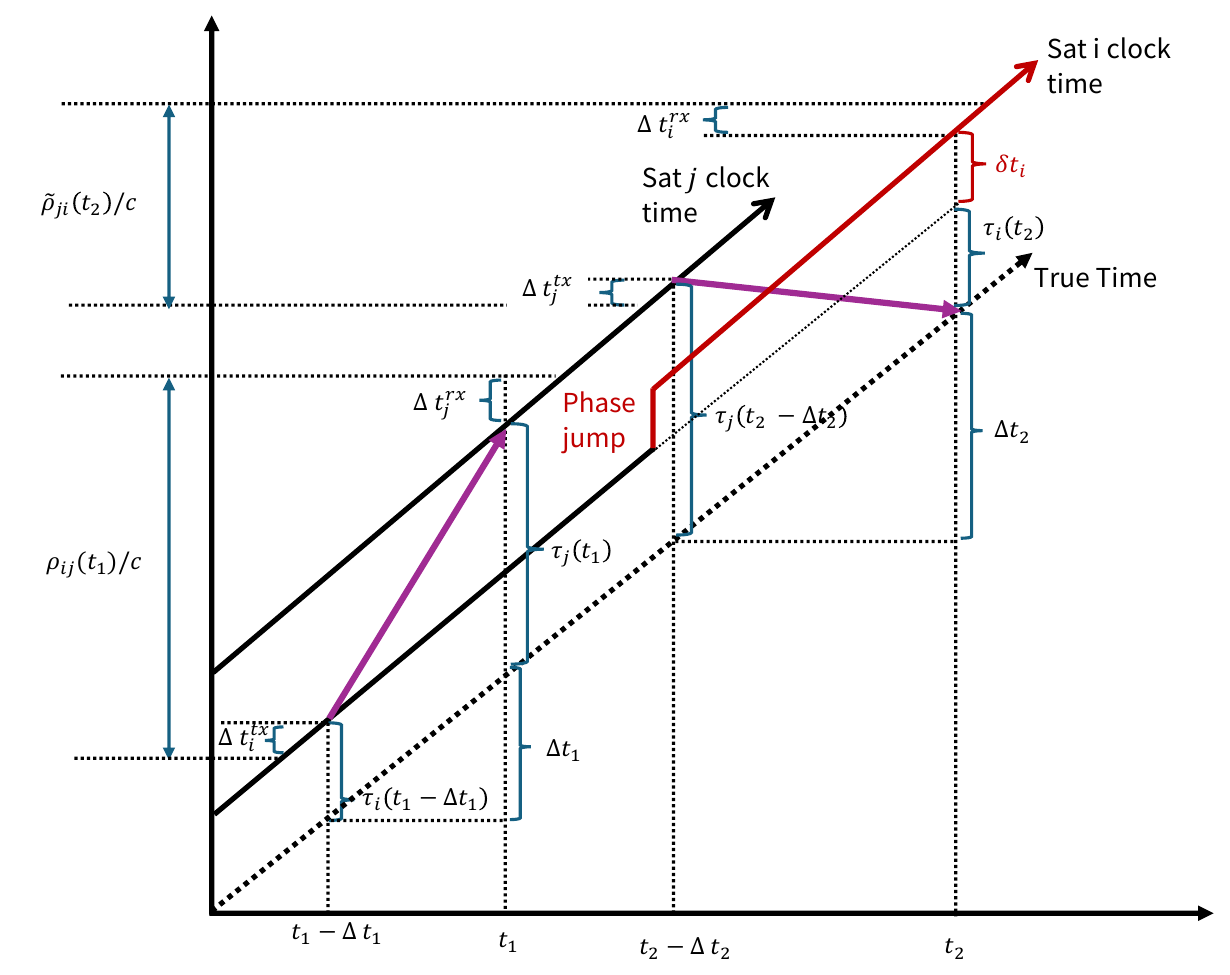}
    \subcaption{Dual one-way link between the two satellites when there is a phase jump at satellite $i$. 
    When the phase jump occurs at $t_1 - \Delta t_1 < t < t_2$, a bias $c \delta t_i$ is added to the 
    pseudorange measurement $\tilde{\rho}_{ji}(t_2)$.}
    \label{fig:dual_oneway_fault}
\end{subfigure}
\newcaption{Dual one-way inter-satellite link measurement between two satellites.
Satellite $i$ transmits the signal to satellite $j$, received at $t_1$, and satellite $j$ transmits the signal to satellite $i$, received at $t_2$.}
\label{fig:dual_oneway}
\end{figure}

\subsection{Satellite Clock Phase Jumps}
Atomic clocks on navigation satellites occasionally experience a sudden jump in the clock phase.
When satellite $i$'s clock experiences a phase jump of $\delta t_i$ (delay) between $t_1 - \Delta t_1$ and $t_2$,
as illustrated in Figure \ref{fig:dual_oneway_fault},
the corrupted pseudorange measurement $\tilde{\rho}_{ji}(t_2)$ can be re-written as follows
\begin{equation}
\begin{aligned}
    \tilde{\rho}_{ji}(t_2) 
    &= \|\singlepoint_i(t_2) - \singlepoint_j(t_2 - \Delta t_2) \| 
    + c \cdot \left[ \tau_i(t_2) - \tau_j(t_2 - \Delta t_2) \right] 
    + c \cdot \left[\Delta t_i^{rx} + \Delta t_j^{tx} \right] + c \cdot \delta t_i + \omega_2 \\
    &= \rho_{ji}(t_2) + c \cdot \delta t_i
\end{aligned}
\end{equation}
\newnewtext{The calculated} correction term will be
\begin{equation}
\begin{aligned}
    \Delta \tilde{\rho}_{ji} &= \| \singlepoint_i(t_0) - \singlepoint_j(t_0)  \| - \|\singlepoint_i(t_2 + \delta t_i) - \singlepoint_j(t_2 + \delta t_i - \Delta t_2)\| + c \cdot \left[ \tau_i(t_0) - \tau_j(t_0) - \tau_i(t_1) + \tau_j(t_2 + \delta t_i - \Delta t_2) \right] \\
    &= \Delta \rho_{ij} + \delta \Delta \rho_{ji}
\end{aligned}
\end{equation}
where $\delta \Delta \rho_{ji}$ is the error in the correction term due to the error in the time stamping of the reception time $t_2$ to $t_2 + \delta t_i$, which can be represented as follows
\begin{equation}
\begin{aligned}
    \delta \Delta \rho_{ji} 
    &= \left( \|\singlepoint_i(t_2) - \singlepoint_j(t_2 - \Delta t_2)\| - \|\singlepoint_i(t_2 + \delta t_i) - \singlepoint_j(t_2 + \delta t_i - \Delta t_2)\| \right) + c \cdot \left[\tau_j(t_2 + \delta t_i - \Delta t_2) - \tau_j(t_2 - \Delta t_2) \right] \\
    &\approx \left[ -\dot{\rho_{ji}}(t_2) + c \cdot \dot{\tau_j}(t_2 - \Delta t_2) \right] \delta t_i
    \label{eq:corrected_rhoji_fault}
\end{aligned}
\end{equation}
where $\dot{\rho_{ji}}$ represents the range-rate of the link between $j$ and $i$, and $\dot{\tau_j}$ represents the clock drift of the onboard clock at satellite $j$.
Therefore, the corrected pseudorange at $t_0$ is 
\begin{equation}
    \begin{aligned}
        \tilde{\rho}_{ji}(t_0) = \tilde{\rho}_{ji}(t_2) + \Delta \tilde{\rho}_{ji}
        &= (\rho_{ji}(t_2) + c \delta t_i) + (\Delta \rho_{ij} + \delta \Delta \rho_{ji})\\
        &\approx (\rho_{ij}(t_2) + \Delta \rho_{ij}) + \delta t_i \cdot \left[c - \dot{\rho_{ji}}(t_2) + c \dot{\tau_j}(t_2 - \Delta t_2) \right] \\
        &\approx \rho_{ij}(t_0) + c \cdot \delta t_i \quad (\because \dot{\rho}_{ji} \ll c, \dot{\tau_j} \ll 1)
    \end{aligned}
\end{equation}

Similary, when satellite $j$'s clock experiences a phase jump of $\delta t_j$ (delay) between $t_1$ and $t_2 - \Delta t_2$, the $\rho_{ji}$ can be re-written as 
\begin{equation}
    \tilde{\rho}_{ji}(t_0) \approx \rho_{ij}(t_0) - c \cdot \delta t_j
\end{equation}

Therefore, the clock-free and geometry combination considering satellite clock jumps is:
\begin{align}
    r_{ij}(t_0) &=  \| \singlepoint_i(t_0) - \singlepoint_j(t_0)\| + f_i - f_j + \omega_{range}  \quad &\text{clock-free combination} \label{eq:range_t0} \\
    \tau_{ij}(t_0) &= c \left[ \tau_j(t_0) - \tau_i(t_0) \right] + f_i + f_j + \omega_{clock}  \quad &\text{geometry-free combination} \label{eq:tdiff_t0}\\
    f_i &= \begin{cases}
        \frac{c}{2} \cdot \delta t_i  & \text{clock jump $\delta t_i$ occurs at satellite $i$ for $t$ within $t_1 - \Delta t < t < t_2$}  \\
        0  & \text{satellite $i$'s clock is normal}
    \end{cases} 
    \label{eq:fault_i}
    \\
    f_j &= \begin{cases}
        \frac{c}{2} \cdot \delta t_j  & \text{clock jump $\delta t_j$ occurs at satellite $j$ for $t$ within $t_1 < t < t_2 - \Delta t$}  \\
        0  & \text{satellite $j$'s clock is normal}
    \end{cases}
    \label{eq:fault_j}
\end{align}

Note that if the phase jump $\delta t_i$ occurs at $t < t_1 - \Delta t_1$ or $t > t_2$, and $\delta t_j$ occurs at $t < t_1$ or $t > t_2 - \Delta t_2$, 
the errors cancel when $\tilde{\rho}_{ij}(t_0)$ and $\tilde{\rho}_{ji}(t_0)$ are added or subtracted. 
Consequently, these errors do not appear in $r_{ij}(t_0)$ or $\tau_{ij}(t_0)$. Therefore, to improve the likelihood of detecting clock jumps, we should either increase the frequency of the ISL measurements or extend the interval between $t_1$ and $t_2$.
However, note that increasing the interval $t_2 - t_1$ will lead to larger correction errors ($\Delta \rho_{ij}$ and $\Delta \rho_{ji}$), 
which in turn results in greater $\omega_{range}$ and $\omega_{clock}$.


\end{editsection}  
\begin{editsection}{blue}

\section{Baseline Methods: Clock Fault Detection Using Inter-satellite Range Residuals}
\label{sec:residual_methods}
In this section, we introduce two baseline methods for detecting clock jump faults \newnewtext{based on existing literature}.
The two methods \newnewtext{use} the range residuals, 
which are computed from the ISL range measurements and the ephemeris information.

\subsection{Fault Detection Using \newnewtext{Expected Range from Ephemeris}}
\label{sec:sum_of_residuals}
The first method is based on the ephemeris fault detection method proposed 
by \citet{rodriguez-perez_inter-satellite_2011} and \citet{xu2011gnss}.
While the original method was proposed for ephemeris fault detection, it could also
be used for clock phase jump detection.

Let $g_{ij}$ be the range residual of the $i$th ISL at time $t_0$, which is defined as
\begin{equation}
\begin{aligned}
    g_{ij} &= r_{ij}(t_0) - \| \hat{\mathbf{x}}_i(t_0) - \hat{\mathbf{x}}_j(t_0) \| \\
           &= \| \mathbf{x}_i(t_0) - \mathbf{x}_j(t_0) \| - \| \hat{\mathbf{x}}_i(t_0) - \hat{\mathbf{x}}_j(t_0) \| + f_{i} - f_{j} + \omega_{range}
\end{aligned}
\end{equation}
where $\hat{\mathbf{x}}_i(t_0)$ is the estimated position (from ephemeris) of the $i$th satellite at time $t_0$,

The test statistics for the $i$th satellite are defined as
\begin{equation}
    T_i = \sum_{j=1}^{l_i} \left(\frac{g_{ij}^2}{\sqrt{2 \sigma_r^2 + \sigma_m^2}} \right)^2
\end{equation}
where $l_i$ is the number of ISLs that involve the $i$th satellite, and $\sigma_r^2$ and $\sigma_m^2$ are the variances of the position estimates in the ephemeris
and the variance of ISL range measurement noise ($\omega_{range}$), respectively.

We define the null hypothesis $\mathcal{H}_{0, i}$ as the absence of faults in the $i$th satellite, 
and the alternative hypothesis $\mathcal{H}_{a, i}$ as the presence of a fault in the $i$th satellite.
The test statistic $T_i$ is distributed as a Chi-squared distribution with $l_i$ degrees of freedom under the null hypothesis $\mathcal{H}_0$.
The critical value for the test $\chi_{\alpha}^2(l_i, 0)$ for the false alarm rate $\alpha$ is given as
\begin{equation}
    \chi_{\alpha}^2(\newnewtext{l_i}, 0) = P\left[ \chi^2(\newnewtext{l_i}, 0) \leq \chi_{\alpha}^2(\newnewtext{l_i}, 0) \right] = \alpha
\end{equation}
where $\chi^2(r, \lambda)$ is the non-central Chi-squared distribution with $r$ degrees of freedom and the non-centrality parameter $\lambda$.
%
\newnewtext{Based on \citet{xu2011gnss}, the testing procedure is}

\begin{equation}
    \text{Accept} \ \mathcal{H}_0 \ \text{if} \ \text{max}_{j \in \{1, \ldots, n\}} \ T_i < \sqrt{\chi_{\alpha}^2(\newnewtext{l_i}, 0)}
\end{equation}
otherwise
\begin{equation}
    \text{Accept} \ \mathcal{H}_k \ \text{where} \ k = \text{argmax}_{j \in \{1, \ldots, n\}} \ \frac{T_j}{\sqrt{l_j}}
\end{equation}

The minimum detectable bias\newnewtext{, $MDB(k)$,} for $\mathcal{H}_k$ is computed as
\begin{equation}
    MDB(k) = \sqrt{\lambda(\alpha, \gamma, 1) \cdot (2 \sigma_r^2 + \sigma_m^2)}
    \label{eq:mdb_residual}
\end{equation}
where the non-centrality parameter $\lambda(\alpha, \gamma, \newnewtext{l_i})$ is computed as a function of false alarm rate $\alpha$ and detection power $\gamma$ (equal to the miss-detection rate of $1-\gamma$), by solving for the following equation
\begin{equation}
    P_{MD} = P \left[ \chi^2(1, \lambda) \leq \chi_{\alpha}^2(1, 0) \right] = 1 - \gamma
\end{equation}

Due to the correlation between $g_{i, j}$ and $g_{i, k} (i \neq k)$ via the common ephemeris error, the distribution of the test statistic under both hypotheses will not exactly follow a Chi-squared and non-central Chi-squared distribution, respectively. 
Based on this observation, \citet{xu2011gnss} proposed to compute a conservative threshold for the test statistic via numerical simulation 
by using the correlation between two links $g_{i, j}$ and $g_{i, k}$ as follows
\begin{equation}
    \rho(g_{ij}, g_{ik}) \leq \frac{\sigma_r^2}{2 \sigma_r^2 + \sigma_m^2}
\end{equation}
In this paper, we used Imhof's method~\citep{imhof1961computing} to compute the critical value for the test statistic. The detailed procedure is explained in Appendix~\ref{sec:imhof}.

The advantage of this \newnewtext{``ephemeris comparison''} approach is that the fault detection is fully distributed among the satellites.
\newnewtext{In addition, the method does not require a particular geometry of links to perform fault detection: if the ephemeris accuracy is sufficiently high, the fault can be detected by directly comparing the expected and measured range.}
The test is also able to detect the faults in the ephemeris information if they are present.
However, the disadvantage of the fault detection method is that the minimum detectable bias is dependent
on the ephemeris error $\sigma_r$, as shown in Equation \eqref{eq:mdb_residual}.

\subsection{Data Snooping with Baarda's w-test Statistic}
\label{sec:baarda_test}
The test in Section \ref{sec:sum_of_residuals} does not use all the ISL measurements available in the constellation.
By testing the null hypothesis on a complete set of measurements from all ISLs, we can lower the minimum detectable bias 
when the measurement errors are smaller than the ephemeris errors.

Let $n$ be the number of satellites in the constellation, $\mathbf{X} \in \R^{3n}$ be the flattened vector of spacecraft positions at time $t_0$, $\mathbf{Y} \in \R^{m}$ be the vector of $m$ inter-satellite range measurements at time $t_0$.
We also define $b$ as the bias in the range measurement (equal to $c \delta t/2$ in Equations \eqref{eq:fault_i} and \eqref{eq:fault_j}) from one faulty satellite $f$ which experienced a clock phase jump $\delta t$.

The mapping between $\mathbf{X}$ and $\mathbf{Y}$ can be represented as the following nonlinear function $h$:
\begin{equation}
    \mathbf{Y} = h(\mathbf{X}) + \mathbf{c}_{f} b + \mathbf{\omega}, 
    \quad \mathbf{\omega} \sim \mathcal{N}(\mathbf{0}_m, \Sigma_{yy})
\end{equation}
where $\Sigma_{yy} = \sigma_m^2 \mathbf{I}_m$ is the covariance of the range measurements, and $\mathbf{c}_f$ is a known vector which maps the fault satellites to the corrupted range measurements, as follows
\begin{equation}
    {c}_{f}^{(i)} = \begin{cases}
        1  & \text{(satellite $f$ is \newnewtext{the} transmitter of link $i$) \& (clock phase jump at satellite $f$ occurred during link $i$)} \\
        -1 & \text{(satellite $f$ is \newnewtext{the} receiver of link $i$) \& (clock phase jump at satellite $f$ occurred during link $i$)}\\
        0  & \text{otherwise}
    \end{cases}
    \label{eq:cf_cases}
\end{equation}
where ${c}_{f}^{(i)}$ is the $i$th element of $\mathbf{c}_f$. 
By linearlizing the above equation around the ephemeris position $\bar{X}$, we obtain
\begin{equation}
    \mathbf{y} = \mathbf{Y} - h(\bar{X}) \approx \mathbf{H} (\mathbf{X} - \bar{\mathbf{X}}) + \mathbf{c}_f b + \mathbf{\omega} = \mathbf{Hx} + \mathbf{c}_f b + \mathbf{\omega}
\end{equation}
where $\mathbf{y} = \mathbf{Y} - h(\bar{X})$ and $\mathbf{x} = \mathbf{X} - \bar{\mathbf{X}}$, and $\mathbf{H} \in \R^{m \times 3n}$ is the Jacobian matrix.

For the test, we consider $n + 1$ hypotheses: The null hypothesis $\mathcal{H}_0$ where the model errors are absent, and the alternative hypotheses $\mathcal{H}_1, \ldots, \mathcal{H}_n$ which corresponds to the fault satellite being $f = 1, f = 2, \ldots, f = n$, respectively.
The Baarda's $w$-test statistic for testing $\mathcal{H}_0$ against single alternative $\mathcal{H}_k (k=1, \ldots, n)$ is given as~\citep{deJong2014}
\begin{equation}
    w_k = \frac{\mathbf{c}_k^{\top} \Sigma_{yy}^{-1} \newnewtext{\mathbf{P}_\mathbf{A}^{\perp}} y}{\sqrt{\mathbf{c}_k^{\top}  \Sigma_{yy}^{-1} \newnewtext{\mathbf{P}_\mathbf{A}^{\perp}} \mathbf{c}_k }}
\end{equation}
where the projector 
\newnewtext{$\mathbf{P}_\mathbf{A}^{\perp}$}
is defined as 
\begin{equation}
    \newnewtext{\mathbf{P}_\mathbf{A}^{\perp}} = \mathbf{I}_m - \mathbf{H} (\mathbf{H}^{\top} \mathbf{\Sigma_{yy}}^{-1} \mathbf{H})^{-1} \mathbf{H}^{\top} \mathbf{\Sigma_{yy}}^{-1}
    \label{eq:projector}
\end{equation}
and 
\begin{equation}
    {c}_{k}^{(i)} = \begin{cases}
        1  & \text{satellite $k$ is transmitter of link $i$} \\
        -1 & \text{satellite $k$ is receiver of link $i$} \\
        0  & \text{otherwise}
    \end{cases}
\end{equation}
Here, we assume that the clock phase jump at satellite $k$ corrupted all the ISLs that involve satellite $k$.
The testing procedure is summarized as follows~\citep{Imparato2018}.
\begin{equation}
    \text{Accept} \ \mathcal{H}_0 \ \text{if} \ \text{max}_{j \in \{1, \ldots, n\}} |w_j| < \sqrt{\chi_{\alpha}^2(1, 0)}
\end{equation}
otherwise
\begin{equation}
    \text{Accept} \ \mathcal{H}_k \ \text{if} \ |w_k| = \text{max}_{j \in \{1, \ldots, n\}} |w_j| \geq \sqrt{\chi_{\alpha}^2(1, 0)}
\end{equation}

The minimum detectable bias for $\mathcal{H}_k$ is computed as~\citep{deJong2014}
\begin{equation}
    MDB(k) = \sqrt{\frac{\lambda(\alpha, \gamma, 1)}{\mathbf{c}_k^{\top}  \Sigma_{yy}^{-1} \newnewtext{\mathbf{P}_\mathbf{A}^{\perp}} \mathbf{c}_k}}
\end{equation}

Iterating over $n$ alternate hypotheses using Baarda's $w$-test statistic (which is called data snooping) enables the detection of the existence of biases by considering the effect of the biases on the full set of observations. 
Each test is optimal for any bias magnitude $b$, whenever the assumptions of the test are true~\citep{Lehmann2017}. 
At the same time, it also has several limitations:
\begin{enumerate}
    \item In $c_k$, we assume that the biases are injected into all ISR measurements that contain a satellite with a clock phase jump. 
    However, this assumption is not always true since the clock jumps may have occurred outside the transmission and reception time for some links (the second condition in Equation \eqref{eq:cf_cases} is not satisfied), based on the satellite link schedule~\citep{Iiyama2024contact}.
    This will result in having smaller fault links than the assumed model ($\mathbf{c}_k$), and will result in decreased detection power.
    If we want to rigorously handle this uncertainty of $\mathbf{c}_k$, we would need to consider all possible combinations of the links as alternate hypothesis, but this would be computationally intractable.
    \item \newnewtext{As further discussed in Section~\ref{sec:topology}, the method fundamentally detects faults by checking the consistency among redundant measurements. 
    Therefore, a sufficient level of measurement redundancy is required. 
    When the measurement graph is sparse, and redundancy cannot be guaranteed, certain fault modes become unobservable, and the method may fail to detect them. 
    In these cases, it may be necessary to rely on direct comparison with ephemeris-based ranges, which do not depend on internal measurement consistency.
    }
\end{enumerate}
\newnewtext{
Despite these limitations, there remains significant room for methodological improvement to mitigate them. 
For example, instead of formulating $n$ hypotheses at the satellite (vertex) level, one could construct hypotheses at the edge (range) level, explicitly modeling which individual ISRs are corrupted. 
A subsequent inference layer could then estimate the faulty satellites from the set of detected faulty edges. 
Such a hierarchical formulation would better accommodate partial-link corruption.
We leave the development and evaluation of these extensions to future work.
}

\end{editsection}
\begin{editsection}{blue}
\section{Proposed Clock Fault Detection Algorithm Using Euclidean Distance Matrices}
\label{sec:rigidity_fd}
In this section, we propose a satellite clock fault detection algorithm based on rigid graph theory.
The method models the set of inter-satellite range measurements as a weighted graph and analyzes if the set of range measurements is realizable (embeddable) in 3D Euclidean space.
Instead of analyzing the realizability of the entire graph, the proposed method analyzes a set of subgraphs to detect faults.
The method directly processes the range measurements, and therefore does not require any prior information about the satellite positions or clocks.

The section is structured as follows. 
First, in section \ref{sec:topology}, we discuss the required topology to detect clock faults based on graph embeddability.
In particular, we prove that the graph needs to be vertex redundantly rigid (the minimum graph with such a property in 3D is a fully connected graph of 5 nodes) 
to detect biases in the range measurements from the graph's embeddability.
Next, in section \ref{sec:edm_property}, we show that the graph's embeddability can be determined by 
the properties of the geometric centered Euclidean distance matrix (GCEDM), without solving for the position of the nodes.
Finally, in section \ref{sec:fd_algorithm}, we propose a fault detection test based on the GCEDM distributions of
fully connected subgraphs in the graph.

\end{editsection}

\subsection{Required Graph Topology for Fault Detection}
\label{sec:topology}
\subsubsection{Definitions}


\begin{editsection}{blue}
Consider a graph of $n$ nodes, which corresponds to satellites. 
Let $\singlepoint_i \in \R^d,  (i \in \{1, \ldots, n\})$ be the position of the $i$th node, and $\PointMat \in \R^{d \times n}$ be a matrix collecting these points. 
\newtext{In this paper, we are interested in the 3D case, $d=3$.}
In this section and the latter sections, we use the following notation for the range measurements between the two satellites $i$ and $j$, instead of equation \eqref{eq:range_t0}:
\begin{equation}
    \satrange_{ij} = \| \singlepoint_i - \singlepoint_j\| + \fault_{ij} + \noise_{ij} \\
\end{equation}
where $f_{ij} = \fault_i - \fault_j$ is the bias of the range measurements added by the clock phase jumps, and $\noise_{ij}$ is the measurement noise.
\end{editsection}

Based on the graph and observed ranges $\satrange_{ij}$, we define a weighted graph $G = \langle V, E, W \rangle$. 
A \textit{realization (embedding)} of $G$ is $d$-dimensional Euclidean distance space, $\R^d$, is defined as a function $\Phi$ that maps $V$ into $\R^d$, such that for each edge $e=(v_i, v_j) \in E$, $W(e) = r_{ij}$ \citep{jian_beyond_2010}. 
A weighted graph $G = \langle V, E, W \rangle$ is called \textit{d-embeddable} when there is a realization (embedding) $\Phi: V \rightarrow \R^{d} $ that maps the edges to points ($\singlepoint_1, \ldots, \singlepoint_n$) in $\R^d$ space.

\newtext{We define \newnewtext{a} ``fault satellite" as a satellite that experiences a clock phase jump, and corrupts at least one range measurement containing itself.}
\newtext{A weighted graph is \textit{fault disprovable} if and only if the embeddability of $G$ implies that it contains no fault satellites, and the non-embeddability of $G$ implies that it contains at least one fault satellite.}

A graph is called \textit{rigid} if it has no continuous deformation other than rotation, translation, and reflection while preserving the relative distance constraints between the vertices~\citep{laman_graphs_1970}. 
Otherwise, it is called a \textit{flexible} graph. 
A graph is called \textit{k-vertex redundantly rigid} if it remains rigid after removing any $(k-1)$ vertices. 

\begin{figure}[ht]
\centering
\begin{minipage}[b]{0.33\columnwidth}
    \centering
    \includegraphics[width=0.9\columnwidth]{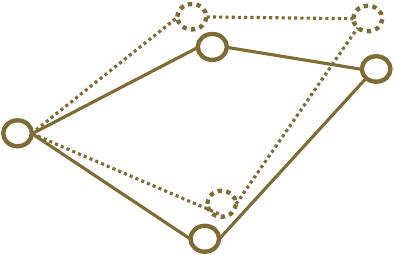}
    \subcaption{Flexible Graph}
    \label{fig:flexible_graph}
\end{minipage}
\begin{minipage}[b]{0.33\columnwidth}
    \centering
    \includegraphics[width=0.9\columnwidth]{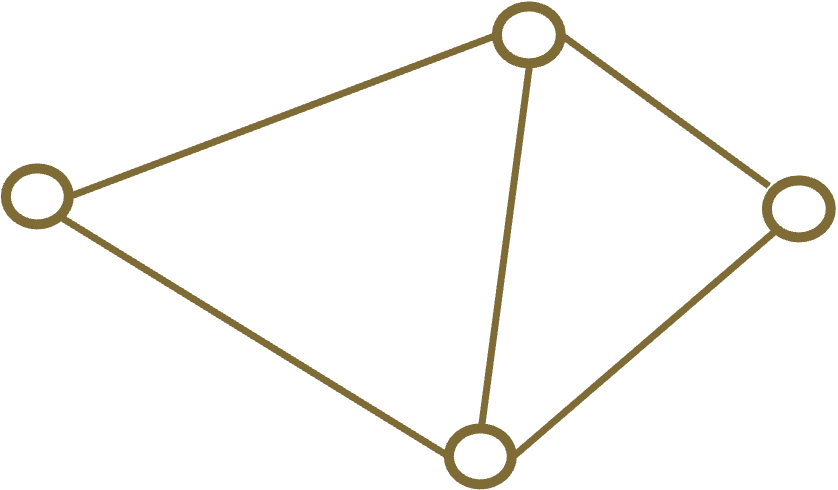}
    \subcaption{Rigid Graph}
    \label{fig:rigid_graphs}
\end{minipage}
\begin{minipage}[b]{0.33\columnwidth}
    \centering
    \includegraphics[width=0.9\columnwidth]{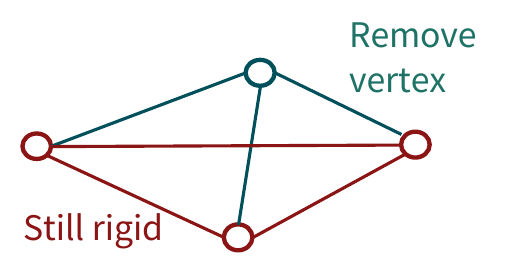}
    \subcaption{2-vertex redundantly rigid graph}
    \label{fig:red_rigid_graph}
\end{minipage}
\caption{Flexible, Rigid, and 2-vertex redundantly rigid graphs in $\R^2$}
\end{figure}

\subsubsection{Embeddability of Graphs with Fault Satellites}
In this section, we prove the sufficient condition for the graph to be fault disprovable in $\R^{d}$. 
\newtext{For the proofs of this subsection only, we ignore the noise added to the measurements.}

\begin{proposition}
    \label{prop:fd_redudantly_rigd}
    Suppose there is \newtext{at most $k-1$} fault satellite\newnewtext{s} on a graph $G = \langle V, E, W \rangle $ that is $k$-vertex ($k \geq 2$) redundantly rigid in $\R^{d}$. 
    If $G$ is $d$-embeddable, then $G$ contains no fault vertex (satellite) with probability 1.
\end{proposition}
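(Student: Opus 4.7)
The plan is a three-stage contradiction argument. First, use redundant rigidity to establish that the non-fault subgraph is rigid. Second, show via a degree count that every fault vertex has at least $d+1$ neighbors among the non-fault vertices. Third, argue by dimension counting that extending a realization of the non-fault subgraph across a fault vertex yields an overdetermined system that is generically infeasible whenever the fault is nonzero.

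Assume for contradiction that $G$ is $d$-embeddable by some $\{\mathbf{y}_i\}_{i\in V}$ satisfying $\|\mathbf{y}_i-\mathbf{y}_j\|=\satrange_{ij}$ on every edge, yet a nonempty set $F\subseteq V$ of fault satellites exists with $|F|\le k-1$. Set $G'=G[V\setminus F]$. Since $G$ is $k$-vertex redundantly rigid and $|F|\le k-1$, $G'$ is rigid in $\R^{d}$. On edges of $G'$ the fault contribution $\fault_i-\fault_j$ vanishes, so both the true positions $\{\singlepoint_i\}_{i\in V\setminus F}$ and the embedded points $\{\mathbf{y}_i\}_{i\in V\setminus F}$ realize the same edge lengths in $G'$. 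For generic $\{\singlepoint_i\}$, rigidity implies the set of realizations of $G'$ with those lengths is finite modulo Euclidean isometry; call them $\Phi_1,\dots,\Phi_N$.

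I would then invoke the standard consequence that $k$-vertex redundant rigidity in $\R^{d}$ forces each vertex to have degree at least $d+k-1$: after removing any $k-1$ of its neighbors the remaining graph must still be rigid, which requires every vertex in it to have at least $d$ incident edges. Hence any fault vertex $v_f\in F$ has at least $d+k-1$ neighbors in $G$; at most $|F|-1\le k-2$ of them lie in $F$, so at least $(d+k-1)-(k-2)=d+1$ neighbors $v_{j_1},\dots,v_{j_{d+1}}$ lie in $V\setminus F$. Fix a realization $\Phi_\alpha$ matched by $\{\mathbf{y}_i\}_{i\in V\setminus F}$ and denote its points by $\mathbf{p}_{j}^{(\alpha)}$. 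Extending the embedding to $v_f$ demands $\mathbf{y}_f\in\R^{d}$ satisfying
\begin{equation}
\bigl\|\mathbf{y}_f-\mathbf{p}_{j_\ell}^{(\alpha)}\bigr\|=\bigl\|\singlepoint_f-\singlepoint_{j_\ell}\bigr\|+\fault_f,\qquad \ell=1,\dots,d+1.
\end{equation}
Because $\mathbf{y}_f$ has only $d$ degrees of freedom, the map $\mathbf{y}\mapsto\bigl(\|\mathbf{y}-\mathbf{p}_{j_\ell}^{(\alpha)}\|\bigr)_\ell$ has image of dimension at most $d$ inside $\R^{d+1}$; the shifted target vector on the right-hand side lies in that image only on a measure-zero algebraic subvariety of the parameter space $(\singlepoint_f,\singlepoint_{j_1},\dots,\singlepoint_{j_{d+1}})$ whenever $\fault_f\ne0$. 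Taking a union over the finitely many $\Phi_\alpha$ preserves measure zero, so with probability $1$ no valid $\mathbf{y}_f$ exists, contradicting $d$-embeddability.

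The main obstacle I expect is the genericity claim in the last stage: one must verify that shifting all $d+1$ target distances by the common constant $\fault_f\ne0$ generically takes the target vector off the image of the multi-distance map, rather than onto an accidental alternative realization. A secondary subtlety is justifying the finiteness of realizations of the merely (generically) rigid $G'$, which I would handle by treating $\{\singlepoint_i\}$ as drawn from an absolutely continuous distribution so the generic-position hypotheses of rigidity theory apply.
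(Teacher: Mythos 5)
Your proposal is correct and follows essentially the same route as the paper's proof: delete the at most $k-1$ fault vertices, use redundant rigidity of the remaining subgraph, apply the degree bound $\deg(v)\ge d+k-1$ to obtain at least $d+1$ fault-free neighbors of each fault vertex, and conclude via the same dimension-counting/measure-zero argument that the $d+1$ sphere constraints are generically infeasible. The only difference is that you explicitly handle the non-uniqueness of the rigid subgraph's realization by enumerating its finitely many generic realizations, where the paper simply fixes coordinates with $\mathbf{p}_j=\mathbf{q}_j$; this is a minor refinement rather than a different approach.
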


\begin{proof}
    See Appendix \ref{sec:app21}.
\end{proof}

\begin{proposition}
    \label{prop:fd_red_rigid2}
    Given a weighted graph $G$, if $G$ is $k$-vertex redundantly rigid ($k \geq 2)$, then $G$ is fault disprovable \newtext{at probability 1, if there are at most $k-1$ fault satellites}.
\end{proposition}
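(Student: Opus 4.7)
The plan is to unpack the definition of ``fault disprovable'' into its two constituent implications and dispatch each. By definition, $G$ is fault disprovable precisely when (i) $d$-embeddability of $G$ implies $G$ contains no fault satellites, and (ii) non-embeddability of $G$ implies $G$ contains at least one fault satellite. So the proof reduces to establishing both directions under the hypothesis that $G$ is $k$-vertex redundantly rigid and carries at most $k-1$ fault satellites.

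For direction (i), I would invoke Proposition \ref{prop:fd_redudantly_rigd} essentially verbatim: its hypotheses exactly match what we assume here (at most $k-1$ fault satellites on a $k$-vertex redundantly rigid graph, $k \geq 2$), and its conclusion states that if $G$ is $d$-embeddable, then $G$ contains no fault satellite with probability 1. No further work is needed beyond citing it.

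For direction (ii), I would argue by contrapositive: assume $G$ has no fault satellites and show that $G$ must be $d$-embeddable. If $\fault_i = 0$ for all $i$ (and we ignore noise as instructed for this subsection), then each edge weight satisfies $W(e_{ij}) = \satrange_{ij} = \|\singlepoint_i - \singlepoint_j\|$, where $\singlepoint_1, \ldots, \singlepoint_n \in \R^d$ are the true satellite positions. The map $\Phi : V \to \R^d$ defined by $\Phi(v_i) = \singlepoint_i$ then satisfies the realization condition $W(e) = r_{ij}$ on every edge, so $\Phi$ is a valid $d$-dimensional embedding. Hence $G$ is $d$-embeddable, which is the contrapositive of the desired implication.

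Combining (i) and (ii) gives precisely the definition of fault disprovable with probability 1, completing the proof. I do not foresee a genuine obstacle here: the proposition is essentially a packaging result that pairs the nontrivial ``only if'' direction (Proposition \ref{prop:fd_redudantly_rigd}) with the trivial ``if'' direction provided by the true geometric configuration itself. The only subtlety worth flagging in the write-up is that the ``with probability 1'' qualifier is inherited from Proposition \ref{prop:fd_redudantly_rigd}; direction (ii) holds deterministically.
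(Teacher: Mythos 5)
Your proposal is correct and matches the paper's own proof: the paper likewise cites Proposition \ref{prop:fd_redudantly_rigd} for the embeddable direction and observes that non-embeddability trivially implies a fault (equivalently, your contrapositive that the true positions realize a fault-free graph). You merely spell out the second direction in slightly more detail than the paper does.
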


\begin{proof}
  For an arbitrary graph $G$, if $G$ is not embeddable, then $G$ contains \newtext{at least one} fault satellite.
  Therefore, with Proposition \ref{prop:fd_redudantly_rigd}, if $G$ is $k$-vertex redundantly rigid ($k \geq 2)$, then $G$ is fault disprovable \newtext{at probability 1, if there are at most $k-1$ fault satellites}.
\end{proof}






\subsection{Properties of the Euclidean Distance Matrix}
\label{sec:edm_property}
In the previous section, we proved that by using vertex redundantly rigid graphs, we can detect faults by analyzing their embeddability.
Analyzing the embeddability of an arbitrary graph is known to be NP-hard ~\citep{Hendrickson1992}, but if the graph is fully connected, we can evaluate it by analyzing the ranks of the GCEDM. 
Note that fully connected graphs of more than 5 nodes are 2-vertex redundantly rigid in $\R^3$.

In this section, we introduce and prove several properties of the EDM and GCEDM constructed from the observed ranges between the satellites. 
We show how we can use the singular values of the GCEDM to evaluate if a (fully-connected) (sub)graph is embeddable in $\R^3$.

\subsubsection{Geometric Centered Euclidean Distance Matrix}

Using the set of observed ranges $\satrange_{ij}$, we construct an EDM, where its elements are equivalent to the square of the observed ranges ($\satrange_{ij}^2)$. 

Let  $\EDM^{n, d, m} \in \R^{n \times n}$ be an EDM without measurement noise ($\noise_{ij} = 0$) and $m$ nodes out of $n$ nodes being fault.
For example, a noiseless EDM with 6 satellites (in 3D space) with 2 fault satellites is denoted as $\EDM^{6, 3, 2}$.
Similarly, we define $\EDMnoisy^{n,d,m} \in \R^{n \times n}$ as EDM with measurement noise and $m$ nodes out of $n$ nodes having a fault.
A GCEDM $\GCEDM^{n,d,m}$ (or $\GCEDMnoisy^{n,d, m}$ for noisy EDM $\EDMnoisy^{n, d, m}$) is constructed from the EDM from the following operation
\begin{equation}
    \GCEDM^{n,d,m} = -\frac{1}{2} \centeringJ^n \EDM^{n,d,m} \centeringJ^n
\end{equation}
where $\centeringJ^n$ is the geometric centering matrix as follows.
\begin{equation}
    \centeringJ^{n} = \eye^n - \frac{1}{n} \onesvec {\onesvec}^{\top}
\end{equation}
where $\onesvec \in \R^n$ is the ones vector.
When no fault or noise is present $(m=0)$, the $\GCEDM^{n,d,0}$ will be positive semi-definite, and its rank will satisfy $\matrank(\GCEDM^{n,d,0}) = \matrank(\PointMat^{\top} \PointMat) \leq d$ \citep{dokmanic2015euclidean}.
\subsubsection{Rank of the Euclidean Distance Matrices with Fault Satellites} 
\newtext{\citet{Knowles2023}} observed that there are more than three nonzero singular values when the GCEDM is constructed from an EDM with fault satellites. An example is provided in Figure \ref{fig:eigval_logplot}.
However, mathematical proofs were not provided in their paper. 
In this section, we provide and prove several properties of the EDM and GCEDM that are corrupted with faults and measurement noises. 

\begin{figure}
    \centering
    \begin{subfigure}[b]{0.46\textwidth}
        \centering
        \includegraphics[width=\textwidth]{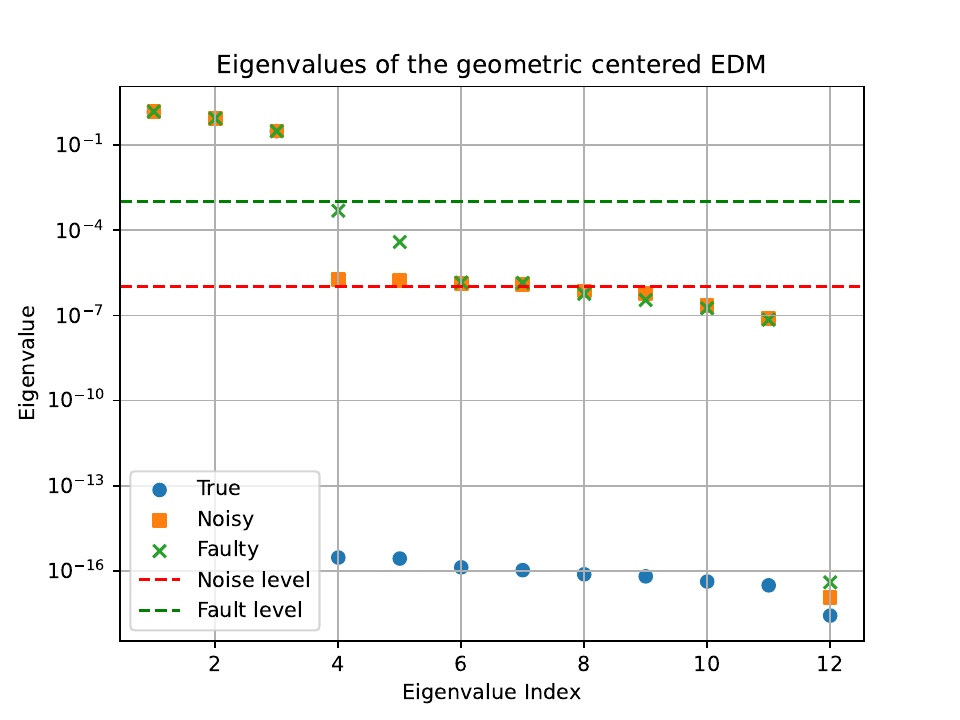}
        \subcaption{Singular values of the geometric centered EDM with one fault satellite. The fourth and fifth singular values increase compared to the non-fault case when the fault magnitude is sufficiently larger than the noise magnitude. \\
        }
    \end{subfigure}
    \hfill
    \begin{subfigure}[b]{0.46\textwidth}
        \centering
        \includegraphics[width=\textwidth]{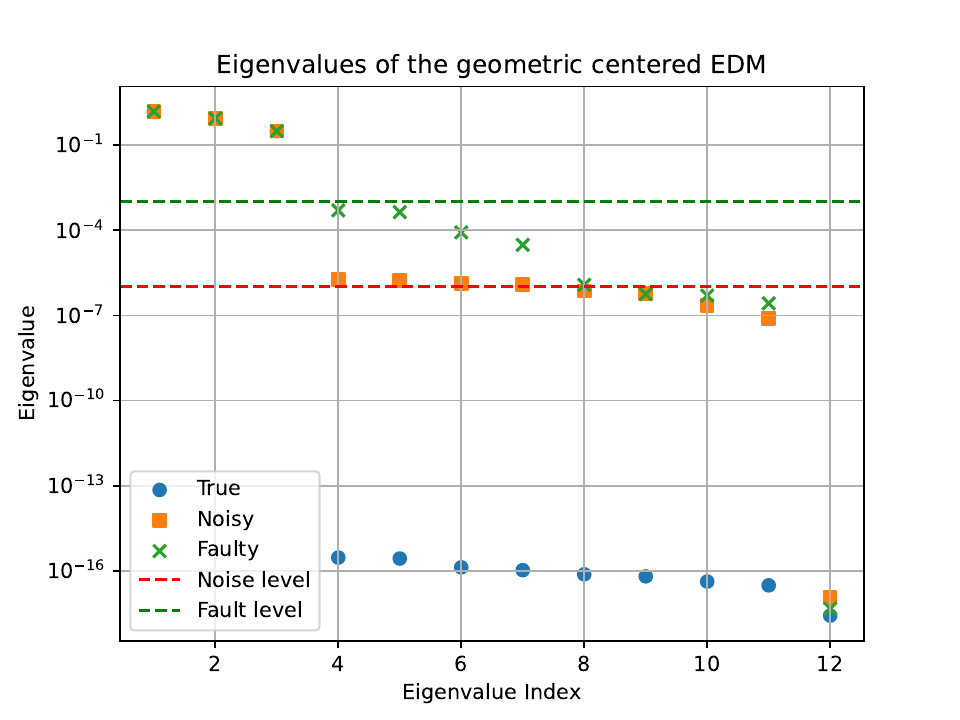}
        \subcaption{Singular values of the geometric centered EDM with two fault satellites. The fourth, fifth, sixth, and seventh singular values increase compared to the non-fault case when the fault magnitude is sufficiently larger than the noise magnitude.}
    \end{subfigure}
    \caption{The log plot of the singular values of an example geometric centered EDM for $n=12, d=3$. The singular values are ordered in descending order of their magnitude. The blue circles show the singular values when there is no fault or noise ($\GCEDM^{12, 3, 0}$), the orange squares show the singular values when there is noise (of scale $\sigma_w = 10^{-6}$) added to the range measurements ($\GCEDMnoisy^{12, 3, 0}$), and the green crosses show the singular values when both noise and fault (of scale $\bar{b} = 10^{-3}$) is added ($\GCEDMnoisy^{12, 3, 2}$). The 12 points are identical for all cases, which are sampled randomly inside a unit cube. When noise is present, the singular values increase except for the last singular value, which is close to zero (theoretically zero). When $m$ faults with magnitudes that are larger than the noise magnitude are added, the singular values up to the $d + 2m$ index increase compared to the non-fault case.}
    \label{fig:eigval_logplot}
\end{figure}

\begin{proposition}
    \label{prop:edmrank}
    The rank of the EDM $\EDM^{n,d,m}$ satisfies
    \begin{equation}
        \matrank(\EDM^{n, d, m}) \leq \min(d + 2 + 2m, n) 
    \end{equation}
\end{proposition}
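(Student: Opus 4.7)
The plan is to decompose the observed EDM into the fault-free noise-free EDM plus a fault-induced perturbation, bound the rank of each piece, and finish with the subadditivity of rank.

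First, let $R_{ij} = \|\singlepoint_i - \singlepoint_j\|$ denote the true Euclidean distances and $\satrange_{ij} = R_{ij} + \fault_i - \fault_j$ the noise-free observed ranges, so that $(\EDM^{n,d,m})_{ij} = \satrange_{ij}^2$ and the fault vector $\mathbf{f} \in \R^n$ has support of size $m$. The matrix $R^{\circ 2}$ of true squared distances is a standard EDM of $n$ points in $\R^d$, for which the classical identity
\begin{equation*}
R^{\circ 2} = \diag(\PointMat^\top \PointMat)\onesvec^\top + \onesvec \diag(\PointMat^\top \PointMat)^\top - 2 \PointMat^\top \PointMat
\end{equation*}
immediately gives $\matrank(R^{\circ 2}) \leq d + 2$. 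It therefore suffices to show that the perturbation $\EDM^{n,d,m} - R^{\circ 2}$ has rank at most $2m$.

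The key trick is to avoid the naive binomial expansion $\satrange_{ij}^2 = R_{ij}^2 + 2 R_{ij}(\fault_i - \fault_j) + (\fault_i - \fault_j)^2$ and instead apply the difference-of-squares factorization entrywise,
\begin{equation*}
\satrange_{ij}^2 - R_{ij}^2 = (\satrange_{ij} - R_{ij})(\satrange_{ij} + R_{ij}) = (\fault_i - \fault_j)(\satrange_{ij} + R_{ij}),
\end{equation*}
and then to split the factor $\fault_i - \fault_j$ so as to recognize the resulting matrix as a commutator,
\begin{equation*}
\EDM^{n,d,m} - R^{\circ 2} = \diag(\mathbf{f})(\satrange + R) - (\satrange + R)\diag(\mathbf{f}),
\end{equation*}
where $\satrange + R$ denotes the $n \times n$ matrix with entries $\satrange_{ij} + R_{ij}$. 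Since only $m$ entries of $\mathbf{f}$ are nonzero, $\matrank(\diag(\mathbf{f})) = m$, so each of the two terms on the right has rank at most $m$, giving $\matrank(\EDM^{n,d,m} - R^{\circ 2}) \leq 2m$. Combining with $\matrank(R^{\circ 2}) \leq d + 2$ and the trivial bound $\matrank(\EDM^{n,d,m}) \leq n$ yields the claim.

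The main subtlety lies in this perturbation step. The naive binomial expansion forces one to bound three separate Hadamard products, and the purely quadratic Hadamard term $(\fault_i - \fault_j)^{\circ 2}$ alone contributes a spurious additive constant of $3$ to the rank, producing only the weaker bound $d + 5 + 2m$. Fusing the cross and quadratic terms through the difference-of-squares factorization collapses them into a single commutator whose rank is additively controlled by $\matrank(\diag(\mathbf{f})) = m$ on each side, eliminating the spurious constant and producing the tight additive contribution of $2m$ from the $m$ fault satellites.
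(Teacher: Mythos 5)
Your overall decomposition --- a classical rank-$(d+2)$ Gram part plus a fault perturbation of rank at most $2m$ --- is exactly the structure of the paper's argument, but your justification of the $2m$ bound contains a genuine flaw. The right-hand side of your identity
\begin{equation*}
\EDM^{n,d,m} - R^{\circ 2} = \diag(\mathbf{f})(\satrange + R) - (\satrange + R)\diag(\mathbf{f})
\end{equation*}
is a commutator of a diagonal matrix with $\satrange+R$; if $\satrange+R$ is symmetric, this commutator is antisymmetric, whereas the left-hand side is a difference of two symmetric matrices and hence symmetric, so the identity would force the perturbation to vanish. The identity therefore only holds if you take $\satrange_{ij} = R_{ij} + \fault_i - \fault_j$ literally for every \emph{ordered} pair, which makes your $\EDM^{n,d,m}$ non-symmetric: the physical link $\{i,j\}$ yields a single clock-free range, so the EDM entry $(j,i)$ carries the same bias as $(i,j)$, not its negative (the paper's $\FaultMatSuper$ is explicitly symmetric; see the example in Equation~\eqref{eq:faultmatThree}). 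As written, your proof bounds the rank of a matrix that is not the EDM the algorithm actually constructs.

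The conclusion is nonetheless correct and easily repaired without the commutator: the perturbation $\Delta = \EDM^{n,d,m} - R^{\circ 2}$ satisfies $\Delta_{ij}=0$ whenever neither $i$ nor $j$ is a fault satellite, so $\Delta$ is supported on the union of the $m$ fault rows and the $m$ fault columns. Writing $\Delta = \Delta P_K + \Delta(\eye - P_K)$, where $P_K$ is the coordinate projection onto the fault indices, the first term has at most $m$ nonzero columns and the second at most $m$ nonzero rows, so $\matrank(\Delta)\le 2m$ regardless of the sign pattern or symmetry of the biases. Substituting this for your commutator step yields an argument that is arguably cleaner and more robust than the paper's own, which bounds $\matrank(\FaultMatSuper\hadamard(\SqrtMatSuper+\FaultMatSuper))$ through illustrative column-span factorizations and genericity caveats; the support argument needs neither.
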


\begin{proof}
    See Appendix~\ref{sec:app1}.
\end{proof}

\begin{proposition}
    \label{prop:geocenteredm}
    The rank of the GCEDM $\GCEDM^{n,d,m} = -\frac{1}{2} \centeringJ^n \EDM^{n, d, m} \centeringJ^n$  satisfies
    \begin{equation}
        \matrank(\GCEDM^{n,d,m}) \leq \min(d + 2m, n - 1) 
    \end{equation}
\end{proposition}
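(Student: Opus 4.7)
The plan is to reuse the structural decomposition of $\EDM^{n,d,m}$ furnished by the proof of Proposition~\ref{prop:edmrank}, and to exploit the annihilating action of the geometric centering matrix on the ones vector. The key observation is that the bound $d+2+2m$ in Proposition~\ref{prop:edmrank} is assembled from three kinds of rank-contributions: a term of rank at most $d$ coming from the Gram matrix $-2\PointMat^{\top}\PointMat$, a pair of rank-one terms of the form $\onesvec \mathbf{a}^{\top}$ and $\mathbf{b}\onesvec^{\top}$ arising from the identity $\|\singlepoint_i - \singlepoint_j\|^2 = \|\singlepoint_i\|^2 + \|\singlepoint_j\|^2 - 2\singlepoint_i^{\top}\singlepoint_j$ (together with the analogous squared-fault contributions), and a fault-related term of rank at most $2m$ supported on the fault indices. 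Centering on both sides kills exactly the middle pair.

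First, I would write $\EDM^{n,d,m}$ in the form produced by the proof of Proposition~\ref{prop:edmrank}, say
\begin{equation*}
\EDM^{n,d,m} \;=\; -2\,\PointMat^{\top}\PointMat \;+\; \onesvec\mathbf{a}^{\top} \;+\; \mathbf{b}\,\onesvec^{\top} \;+\; \mathbf{M}_{\text{fault}},
\end{equation*}
where $\matrank(\mathbf{M}_{\text{fault}}) \le 2m$. Then I would multiply by $\centeringJ^n$ on both sides and invoke the identity $\centeringJ^n \onesvec = \onesvec - \frac{1}{n}\onesvec(\onesvec^{\top}\onesvec) = \mathbf{0}$ (and its transpose) to eliminate the two rank-one $\onesvec$ terms, obtaining
\begin{equation*}
\GCEDM^{n,d,m} \;=\; \centeringJ^n \PointMat^{\top}\PointMat\,\centeringJ^n \;-\; \tfrac{1}{2}\,\centeringJ^n \mathbf{M}_{\text{fault}}\,\centeringJ^n .
\end{equation*}

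Second, I would bound the rank of the right-hand side by subadditivity of rank. Since $\matrank(\centeringJ^n \PointMat^{\top}\PointMat\,\centeringJ^n) \leq \matrank(\PointMat^{\top}\PointMat) \leq d$ and $\matrank(\centeringJ^n \mathbf{M}_{\text{fault}}\,\centeringJ^n) \leq \matrank(\mathbf{M}_{\text{fault}}) \leq 2m$, I obtain $\matrank(\GCEDM^{n,d,m}) \leq d + 2m$. Separately, since $\centeringJ^n$ is an orthogonal projector with one-dimensional kernel spanned by $\onesvec$, it has rank $n-1$, so any matrix sandwiched between two copies of $\centeringJ^n$ has rank at most $n-1$. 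Taking the minimum of the two bounds yields the claimed $\min(d+2m,\,n-1)$.

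The main obstacle is justifying that the cross-terms $2(\fault_i - \fault_j)\|\singlepoint_i - \singlepoint_j\|$ appearing in the expansion of $\satrange_{ij}^2$ are genuinely absorbed into the rank-$2m$ contribution $\mathbf{M}_{\text{fault}}$, rather than leaking into the Gram or $\onesvec$ blocks. The natural way to handle this is to factor $\mathbf{M}_{\text{fault}}$ through a selection matrix $\mathbf{E}_F \in \R^{n\times m}$ whose columns are the indicator vectors of the $m$ fault satellites; then every summand in $\mathbf{M}_{\text{fault}}$ has the form $\mathbf{E}_F \mathbf{u}^{\top} + \mathbf{v}\mathbf{E}_F^{\top}$ for appropriate vectors, making the rank-$2m$ bound transparent. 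This structural bookkeeping should already be carried out inside the proof of Proposition~\ref{prop:edmrank}, so Proposition~\ref{prop:geocenteredm} reduces to invoking that decomposition and applying $\centeringJ^n \onesvec = \mathbf{0}$.
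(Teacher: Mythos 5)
Your proposal is correct and follows essentially the same route as the paper's proof: decompose $\EDM^{n,d,m}$ into the Gram term, the two rank-one $\onesvec$ terms, and the fault term $\FaultMatSuper \hadamard (\SqrtMatSuper + \FaultMatSuper)$ of rank at most $2m$ (inherited from Proposition~\ref{prop:edmrank}), annihilate the $\onesvec$ terms via $\centeringJ^n \onesvec = \mathbf{0}$, and combine rank subadditivity with $\matrank(\centeringJ^n) = n-1$ to get $\min(d+2m,\,n-1)$. The bookkeeping you flag about the cross-terms $2\fault_{ij}\|\singlepoint_i - \singlepoint_j\|$ is indeed already handled inside the paper's proof of Proposition~\ref{prop:edmrank}, exactly as you anticipate.
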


\begin{proof}
    See Appendix~\ref{sec:app2}.
\end{proof}

\begin{corollary}
    \label{theorem:noisy}
        The GCEDM $\GCEDMnoisy^{n, d, m} = -\frac{1}{2} \centeringJ_n \EDMnoisy^{n, d, m} \centeringJ_n$ constructed from an EDM with the edges corrupted by Gaussian noise almost surely has rank
        \begin{equation}
            \matrank(\GCEDMnoisy^{n,d,m}) = n - 1 
        \end{equation}  
    \end{corollary}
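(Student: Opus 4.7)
The plan is to combine a deterministic upper bound on the rank with an almost-sure lower bound obtained from a polynomial genericity argument on the noise. For the upper bound, I would observe that the centering matrix $\centeringJ^n = \eye^n - \tfrac{1}{n}\onesvec \onesvec^\top$ is the orthogonal projector onto $\onesvec^\perp$, so it has rank $n-1$ with kernel spanned by $\onesvec$. Consequently $\matrank(\centeringJ^n A \centeringJ^n) \leq n-1$ for any $n \times n$ matrix $A$, which applied to $\EDMnoisy^{n,d,m}$ gives $\matrank(\GCEDMnoisy^{n,d,m}) \leq n-1$ deterministically, for every realization of the noise and faults.

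For the lower bound I would reduce the question to an $(n-1) \times (n-1)$ determinant. Let $U \in \R^{n \times (n-1)}$ have orthonormal columns spanning $\onesvec^\perp$, so that $\centeringJ^n = UU^\top$ and $\matrank(\GCEDMnoisy^{n,d,m}) = \matrank(U^\top \EDMnoisy^{n,d,m} U)$. Set $M = U^\top \EDMnoisy^{n,d,m} U$. Each entry $\EDMnoisy_{ij} = (\|\singlepoint_i - \singlepoint_j\| + \fault_{ij} + \noise_{ij})^2$ is a quadratic polynomial in $\noise_{ij}$, so $\det M$ is a polynomial $P$ in the $\binom{n}{2}$ independent noise variables $\{\noise_{ij}\}_{i<j}$, treating the node positions and fault magnitudes as fixed constants.

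The main obstacle is showing that $P$ is not identically zero. I would do so by exhibiting one explicit noise setting for which $\det M \neq 0$: fix any $n$ points $p_1, \dots, p_n \in \R^{n-1}$ in general position, so that their affine span equals $\R^{n-1}$; let $d_{ij} = \|p_i - p_j\|$; and substitute $\noise_{ij} := d_{ij} - \|\singlepoint_i - \singlepoint_j\| - \fault_{ij}$. With this choice $\EDMnoisy^{n,d,m}$ becomes exactly the EDM of $\{p_k\}$, whose GCEDM has rank equal to the dimension of the affine span of the points, namely $n-1$. Hence $\det M \neq 0$ at this noise value, proving that $P$ is a nonzero polynomial in $\{\noise_{ij}\}_{i<j}$.

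Finally, since the $\noise_{ij}$ are independent Gaussians, their joint distribution is absolutely continuous with respect to Lebesgue measure on $\R^{\binom{n}{2}}$, so the zero set of the nonzero polynomial $P$ has probability zero. Combined with the upper bound, this yields $\matrank(\GCEDMnoisy^{n,d,m}) = n-1$ almost surely, as claimed. An appealing feature of this argument is that it is entirely insensitive to the fault structure: the noise alone is generic enough to push the GCEDM up to its maximal attainable rank.
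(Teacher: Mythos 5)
Your proof is correct, and it takes a genuinely different and more rigorous route than the paper's. The paper argues by reinterpretation: it treats the Gaussian noise as a collection of small ``faults'' at all $n$ nodes (same sparsity pattern as $\FaultMatSuper$ with $m=n$), invokes Proposition~\ref{prop:geocenteredm} to get $\matrank(\GCEDMnoisy^{n,d,m}) \leq \min(d+2n,\,n-1) = n-1$, and then asserts equality almost surely by appealing to the informal ``equality holds outside degenerate cases'' remarks in the proofs of Propositions~\ref{prop:edmrank} and~\ref{prop:geocenteredm}. That lower-bound step is the weakest link in the paper's argument, since those non-degeneracy claims were themselves established only by illustrative rank decompositions. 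You instead split the claim cleanly: the upper bound $\matrank(\centeringJ^n A \centeringJ^n) \leq n-1$ is deterministic because $\centeringJ^n$ is the rank-$(n-1)$ projector onto $\onesvec^\perp$, and the lower bound follows from a polynomial-genericity argument --- $\det\bigl(U^\top \EDMnoisy^{n,d,m} U\bigr)$ is a polynomial in the $\binom{n}{2}$ noise variables that is not identically zero, as witnessed by the noise assignment that turns $\EDMnoisy^{n,d,m}$ into the EDM of $n$ affinely independent points in $\R^{n-1}$ (whose GCEDM has rank $n-1$), so its zero set is Lebesgue-null and hence Gaussian-null. What your approach buys is a self-contained, fully rigorous proof of the almost-sure \emph{equality} that does not depend on the fault structure or on the earlier propositions at all; what the paper's approach buys is economy and a unified narrative in which noise and faults are the same phenomenon at different scales. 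Both are valid, but yours actually closes the lower-bound gap that the paper leaves to intuition.
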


\begin{proof}
    See Appendix~\ref{sec:app3}.
\end{proof}
\subsubsection{Distribution of the 4th Singular Value of the Geometric Centered EDM}
\label{sec:distribution_singular}

The propositions proved in the previous sections indicate that we can detect fault satellites by observing the increase in the 4th singular value (since the dimension is 3) of the GCEDMs constructed from the ISR measurements. 
We use the following test statistic $\teststatistic$ to monitor if a fault satellite exists in a given subgraph.
\begin{equation}
  \newtext{\teststatistic = \lambda_4^2}
\end{equation}
where $\lambda_i$ is the $i$th singular value of the geometric centered EDM.
If the ISR measurements are completely noiseless, we can detect if a fault satellite exists within the graph by checking if $\teststatistic$ of the GCEDMs is not 0, since GCEDMs have rank 3 when no fault exists. 
However, when the ISR measurements are noisy, the 4th singular value increases regardless of faults, 
as shown in Figure \ref{fig:eigval_logplot} and Corollary \ref{theorem:noisy}. 
Therefore, to detect faults in the presence of noise, we need to set a threshold based on the singular value distributions of the noisy (but non-fault) GCEDMs to separate the fault from the noise.

\begin{editsection}{blue}
For $n=5$, we are able to compute the distribution of the test statistic analytically.
The distribution of the test statistic $\teststatistic/\newnewtext{s^2}$ follows a centralized chi-squared distribution with degree of freedom 1, as follows:
\begin{align}
    \frac{\teststatistic}{\newnewtext{s^2}} &\sim \chi^2 \left(1, 0 \right) \quad \text{for } n=5  
    \label{eq:chi2_teststatistic}
\end{align}
where the \newnewtext{squared scale of the test statistic} $\newnewtext{s^2}$ is given by the following equation:
\begin{align}
    \newnewtext{s^2} &=  
    \newnewtext{ \sum_{a=1}^2 \sum_{b=1}^2 \sum_{i=1}^5 \sum_{j=1}^5 \left(\sigma_m \mathbf{\tilde{D}}_{ij}\right)^2 \cdot 
    \left( \hat{U}_{i,a} \hat{V}_{j,b} + \hat{U}_{j,a} \hat{V}_{i,b} \right)^2} 
    \label{eq:scale} \\
    \mathbf{\hat{U}} &= \mathbf{J}_5 \mathbf{\tilde{U}}_{:, 4:5} \in \R^{5 \times 2} \quad \quad \mathbf{\tilde{U}}_{:, 4:5}: \text{4th and 5th column of } \tilde{U} \in \R^{5 \times 5}\\
    \mathbf{\hat{V}} &= \mathbf{J}_5 \mathbf{\tilde{V}}_{:, 4:5} \in \R^{5 \times 2} \quad \quad \mathbf{\tilde{V}}_{:, 4:5}: \text{4th and 5th column of } \tilde{V} \in \R^{5 \times 5}\\
    \mathbf{\tilde{U} \tilde{\Sigma} \tilde{V}}^{\top} &= \text{SVD}(\tilde{\mathbf{G}}) \quad \quad  \text{SVD}: \text{Singular Value Decomposition}, \quad \teststatistic = {\tilde{\Sigma}_{4,4}}^2 = (\lambda_4)^2
    \label{eq:scale_formula}
\end{align}
Above, $\tilde{\mathbf{G}}, \tilde{\mathbf{D}} \in \R^{5 times 5}$ are the (noisy) GCEDM and the distance matrix constructed from the measured ranges, respectively.
The derivation is given in Appendix \ref{sec:scale_distribution}. 
The \newnewtext{squared scale} of the distribution is proportional to the noise \newnewtext{variance} $\sigma_m^2$ (Figure~\ref{fig:sigma_chi2}).
%
Equation \eqref{eq:scale} can be modified to compute $\newnewtext{s^2}$ when the variance of noise is different for each measurement, by replacing $\sigma_m$ with $(\sigma_m)_{ij}$ (the noise for each range measurement).

When there is a clock jump at satellite $s_f$, the distribution of the test statistic $({\teststatistic})_{s_f}$ follows a non-central chi-squared distribution with 1 degree of freedom, 
with its \newnewtext{squared scale $s^2$} given by the same equation as above, and non-centrality parameter $\lambda_{s_f}$:
\begin{align}
  \frac{({\teststatistic})_{s_f}}{\newnewtext{s^2}} &\sim \chi^2 \left(1, \lambda_{s_f} \right) \quad \text{for } n=5
  \label{eq:fault_dist_ncx}
\end{align}
where the non-centrality parameter $\lambda_{s_f}$ is given by the following equation:
\begin{align}
   \lambda_{s_f} &= \frac{1}{\newnewtext{s^2}} \| \hat{\mathbf{U}}^{\top} \left(\mathbf{F} \circ \mathbf{D} \right) \hat{\mathbf{V}} \|_F^2 
   \label{eq:lambda} \\
   \mathbf{F} &= \begin{cases}
       f_{ij} &  i = s_f \ \text{or} \ j = s_f \\
       0             &  \text{otherwise}
   \end{cases}
   \label{eq:lambda_sf}
\end{align}
\newnewtext{where $\|\cdot\|_F$ represents the Frobenius norm.}
\newnewtext{This indicates that} when all biases added by faults have the same magnitude, the non-centrality parameter $\lambda$ is proportional to the square of the fault magnitude
as shown in Figure \ref{fig:fault_distribution} and Figure \ref{fig:fault_lambda}.
The derivation is given in Appendix \ref{sec:lambda_distribution}.

\begin{figure}[ht!]
  \centering
  \begin{subfigure}[b]{0.2\textwidth}
    \includegraphics[height=45mm, trim={20mm, 0mm, 10mm, 0mm}]{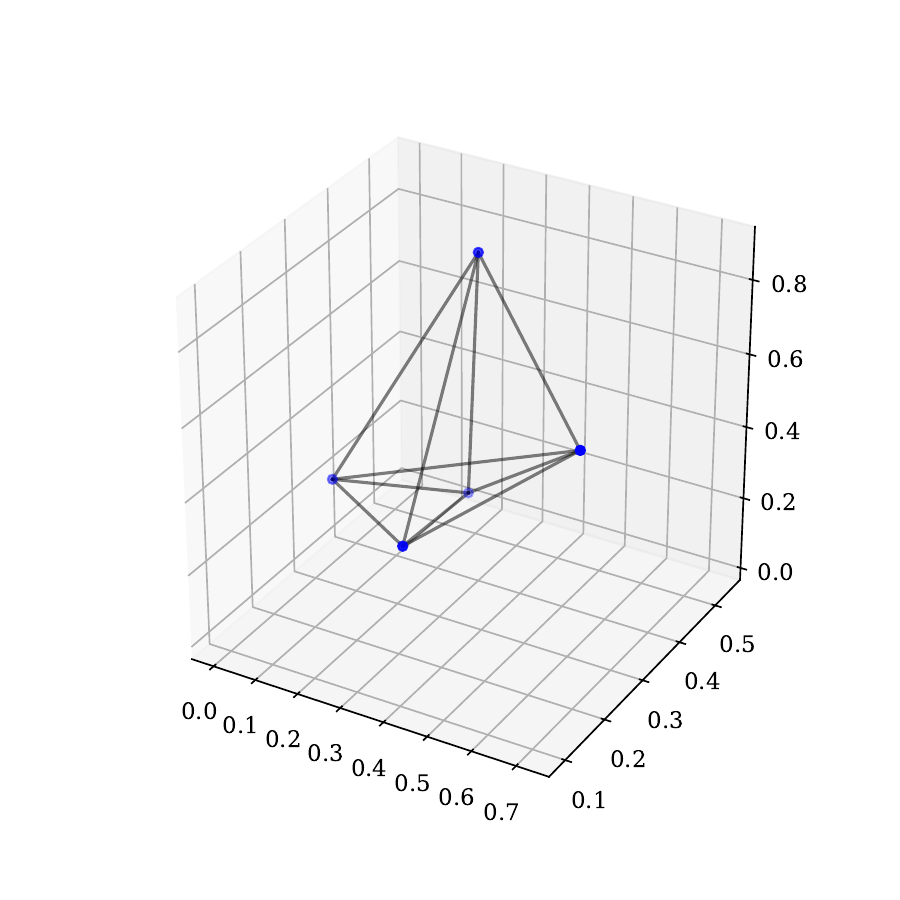}
    \newcaption{The \newnewtext{3D} points used for singular value simulation.}
  \end{subfigure}
  \hfill
  \begin{subfigure}[b]{0.38\textwidth}
    \includegraphics[height=40mm, trim={0mm, 0mm, 0mm, 0mm}]{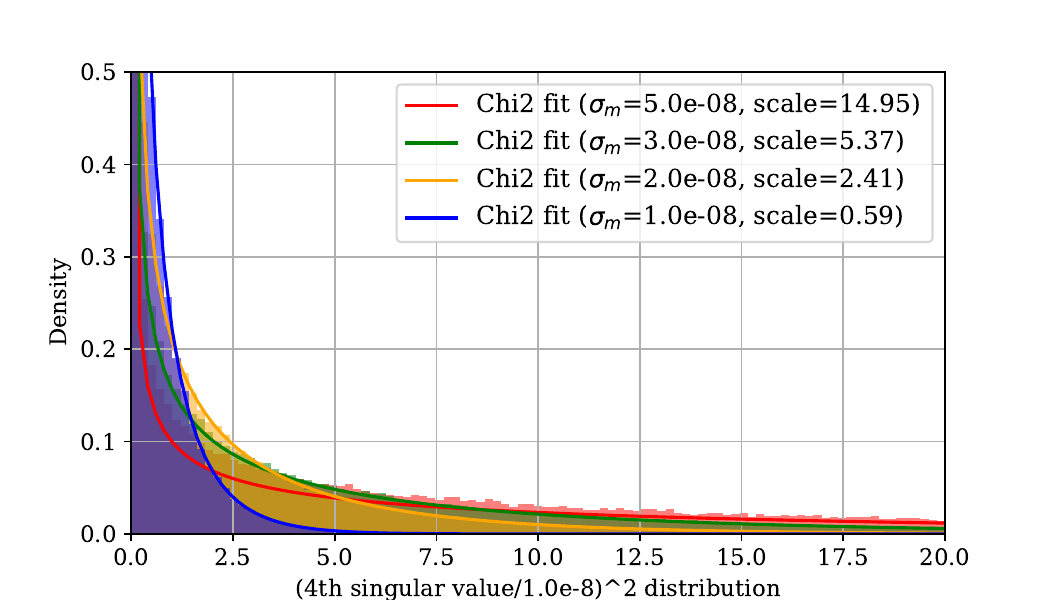}
    \newcaption{The distribution of the squared 4th singular value for different noise levels $\sigma_m$.}
  \end{subfigure}
  \hfill
  \begin{subfigure}[b]{0.38\textwidth}
    \includegraphics[height=40mm, trim={0mm, 0mm, 20mm, 0mm}]{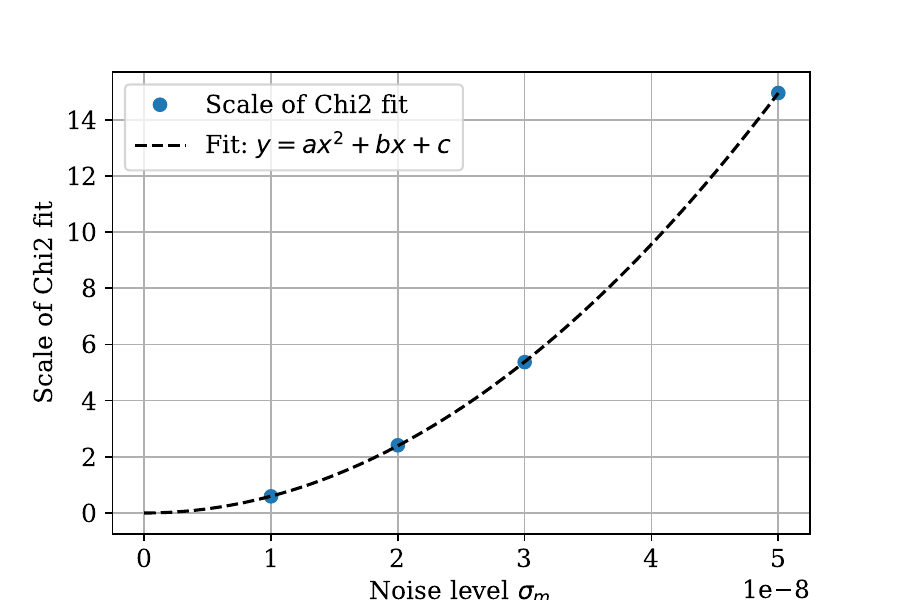}
    \newcaption{The scale of the chi squared distribution $s$ is proportional to the noise magnitude $\sigma_m^2$.}
    \label{fig:sigma_scale}
  \end{subfigure}
  \newcaption{The distribution of the test statistic $\teststatistic$ and its scale $\newnewtext{s^2}$ (squared scale of $\lambda_4$) for different noise levels $\sigma_m$. 
  The distribution of the test statistic $\teststatistic$ follows a chi-squared distribution with 1 degree of freedom.
  The scale $\newnewtext{s^2}$ is proportional to the noise magnitude $\sigma_m^2$.}
  \label{fig:sigma_chi2}
  \end{figure}

\begin{figure}[ht!]
\centering
\includegraphics[width=\textwidth, trim={0mm, 0mm, 0mm, 20mm}]{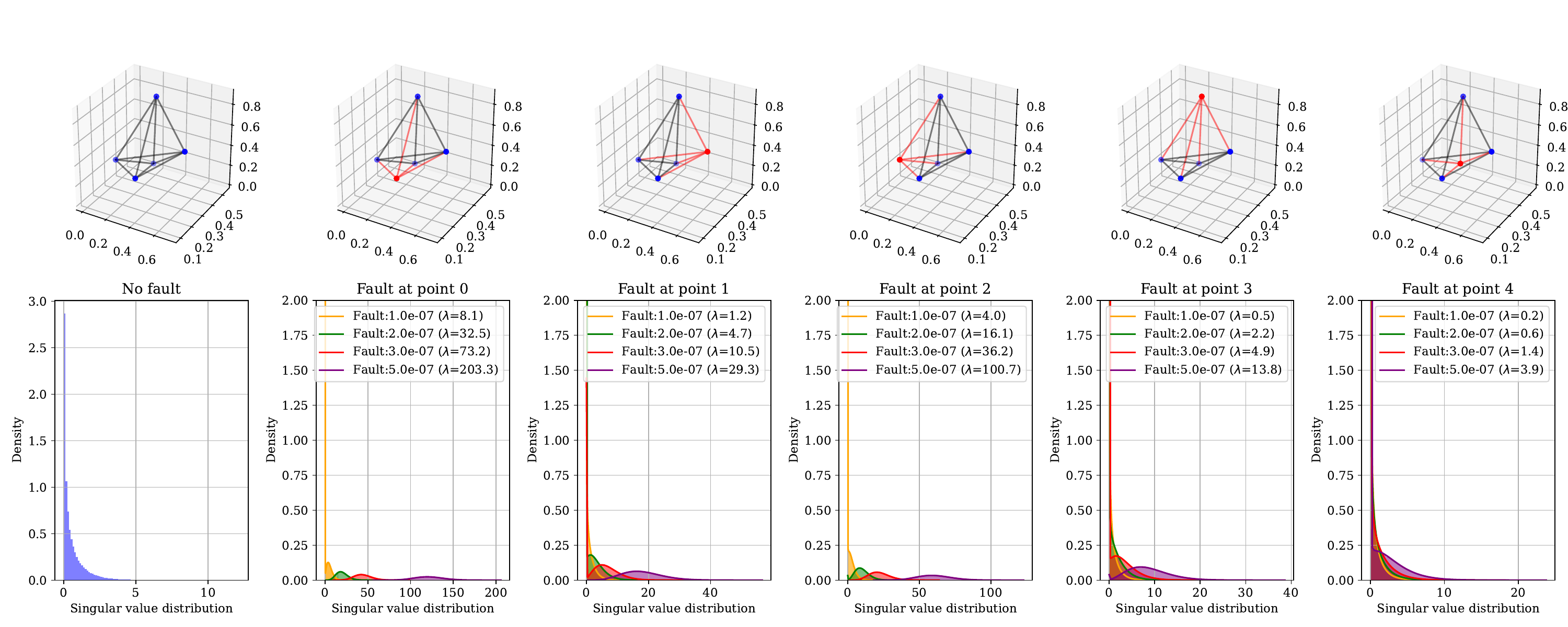}
\newcaption{The distribution of the test statistic $\teststatistic$ for different fault magnitudes and fault satellites. 
The top row shows the points in 3d space, where the red points and edges indicate the fault satellite and its connected edges, respectively.
The distribution of the test statistic $\teststatistic$ follows a non-central chi-squared distribution with 1 degree of freedom.
The non-centrality parameter $\lambda$ becomes larger as the fault magnitude increases.}
\label{fig:fault_distribution}
\end{figure}

\begin{figure}[ht!]
\centering
\includegraphics[width=0.5\textwidth]{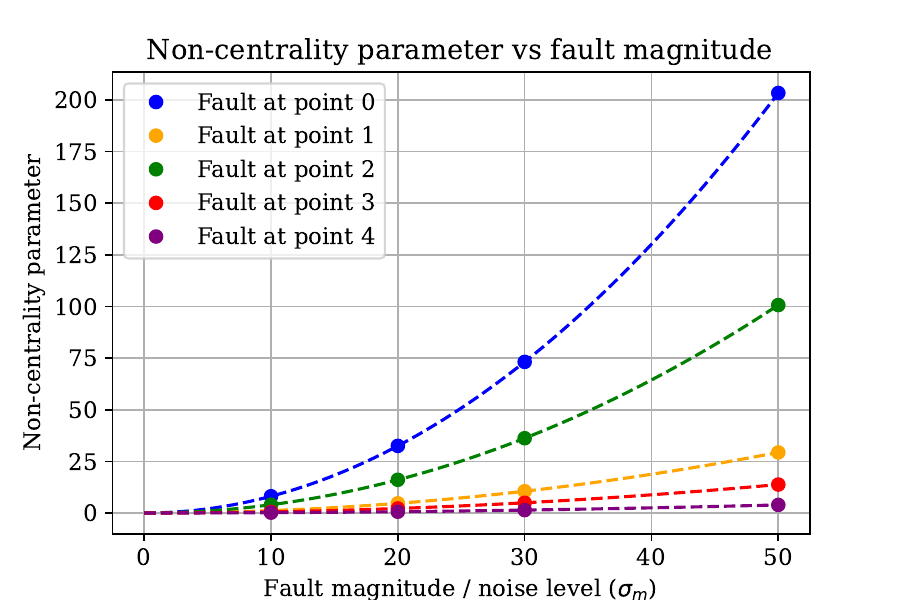}
\newcaption{The non-centrality parameter $\lambda$ of the distribution of the test statistic is proportional to the square of the fault magnitude $\bar{b}^2$.}
\label{fig:fault_lambda}
\end{figure}

Using equations \eqref{eq:chi2_teststatistic} and \eqref{eq:fault_dist_ncx}, we can compute the minimum detectable bias for satellite $i$ as follows:
\begin{equation}
    MDB(i) = \sqrt{\frac{\bar{\lambda}}{\lambda_i} \cdot \newnewtext{s^2}} \quad 
    \text{where} \quad P[\chi^2(1, \bar{\lambda}) \leq \chi^2_{\alpha}(1, 0)] = 1 - \gamma
\end{equation}
where $\alpha$ is the false alarm rate, $\gamma$ is the detection power, $\newnewtext{s^2}$ is the \newnewtext{squared scale} provided in \eqref{eq:scale}, and $\lambda_i$ is the non-centrality parameter for satellite $i$ fault case, provide in equation \eqref{eq:lambda}.

\newnewtext{
Figure \ref{fig:mdb_comparison} shows the computed minimum detectable biases for the two baseline methods (ephemeris comparison and data snooping) and the EDM-based fault detection, using the same geometry as Figures \ref{fig:sigma_chi2} and \ref{fig:fault_distribution}.
The MDB of the data snooping marks the lowest due to its optimality, while the MDB of the ephemeris comparison method largely depends on the ephemeris accuracy.
The MDB of the EDM-based detection depends on the relative geometry of each point with respect to other points, which affects the non-centrality parameter $\lambda_{sf}$. 
}

\begin{figure}[ht!]
    \centering
    \includegraphics[width=\linewidth]{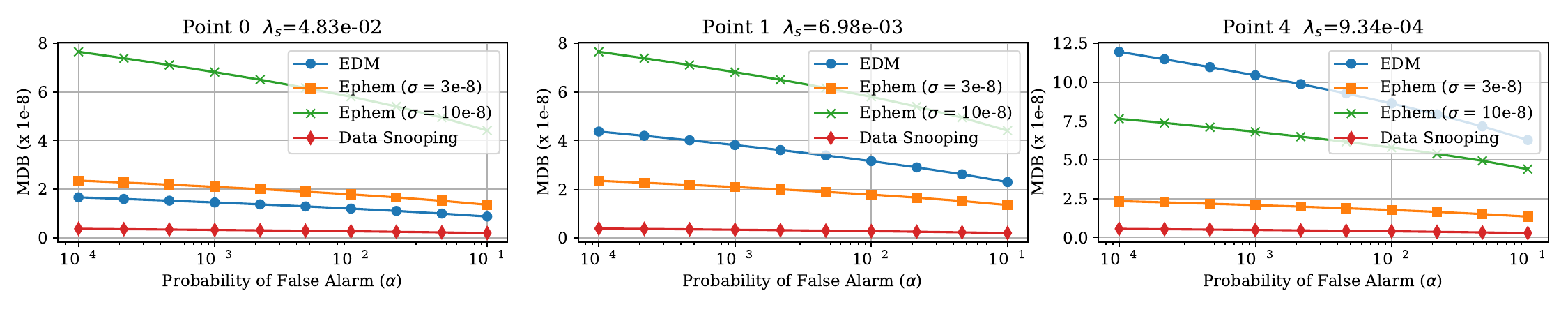}
    \caption{\newnewtext{The comparison of minimum detectable biases for different fault detection methods. The topology of the points and links is the same as Figure \ref{fig:sigma_chi2}. Detection power $\gamma$ is fixed to 0.8, and the noise level is fixed to 1e-8. $\lambda_s$ in the figure reprensents $\lambda_{sf}$ in Equation \ref{eq:lambda_sf}.}}
    \label{fig:mdb_comparison}
\end{figure}

\begin{newnewsection}

If desired, the test statistic can be extended for $n \geq 6$ as:
\begin{align}
    \lambda_{test} &= \sum_{i=4}^{n-1} \lambda_i^2
\end{align}
The readers are referred to Appendix \ref{sec:generalization_test} for the distribution of this test statistic under nominal conditions and the existence of faults.
\end{newnewsection}

\end{editsection}

\subsection{Fault Detection Algorithm Using Euclidean Distance Matrix}
\label{sec:algorithm}

\subsubsection{Clique Listing}
\label{sec:clique-finding}
In order to construct an EDM and GCEDM, we need a set of range measurements between all pairs of $n$~satellites. However, due to the occultation by the planetary body and attitude constraints, the satellites in the constellation are not fully connected with ISRs in most cases.
Therefore, the first step of the fault detection algorithm is to find a set of $k$-clique subgraphs, which are fully connected subgraphs of $k$~satellites. 
\newtext{For the graph to be 2-vertex redundantly rigid, and to use the derived distribution \eqref{eq:chi2_teststatistic}, we are interested in cliques with $k = 5$.}
An example of 5-cliques subgraphs is shown in Figure \ref{fig:cliques_example}.

Various exact algorithms to find all $k$-cliques have been proposed~\citep{Li2020OrderingHF}. In this paper, we used the Chiba-Nishizeki Algorithm (Albo) due to its simplicity of implementation~\citep{Chiba1985}. For each node $v$, Arbo expands the list of $k$-cliques by recursively creating a subgraph induced by $v$'s neighbors. The readers are referred to \citep{Li2020OrderingHF} for the pseudocode and summary of the $k$-clique listing algorithm.
Note that the clique finding algorithm does not have to be executed online. Since the availability of the ISRs can be predicted beforehand \newtext{when scheduling the ISLs} using a coarse predicted orbit, we can estimate the topology of the ISLs for future time epochs. 
We can compute the list of $k$-cliques of the predicted topologies for future time epochs (e.g., one orbit), and uplink them to satellites intermittently. 
If some of these links were actually not available in orbit for some reason, we can remove the $k$-cliques that contain the missing links from the list.

\begin{figure}[htb!]
    \centering
    \begin{subfigure}[b]{0.24\textwidth}
        \centering
        \includegraphics[width=0.99\linewidth]{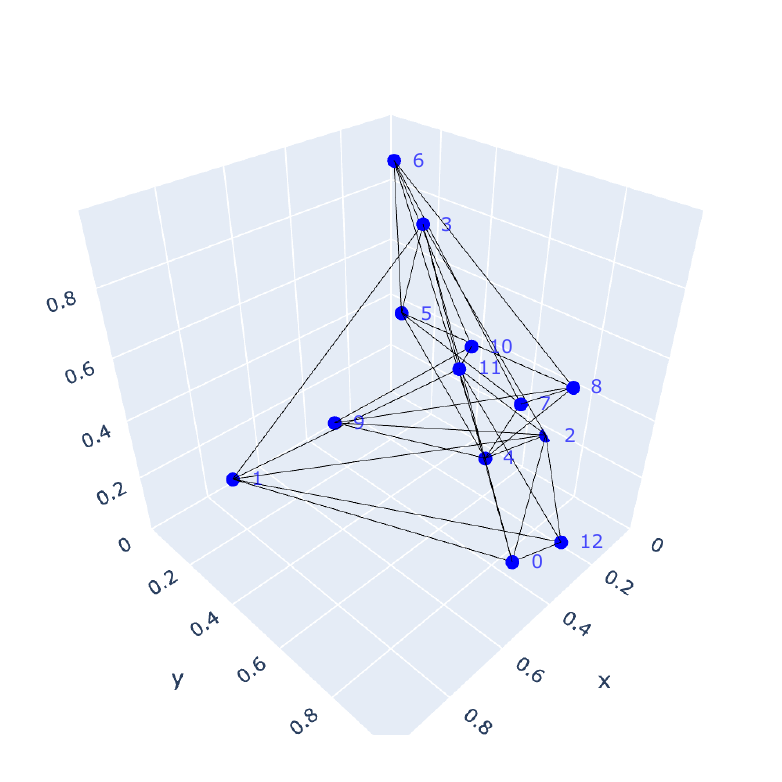}
        \subcaption{The topology of the entire graph}
    \end{subfigure}
    \hfill
    \begin{subfigure}[b]{0.24\textwidth}
        \centering
        \includegraphics[width=0.99\linewidth]{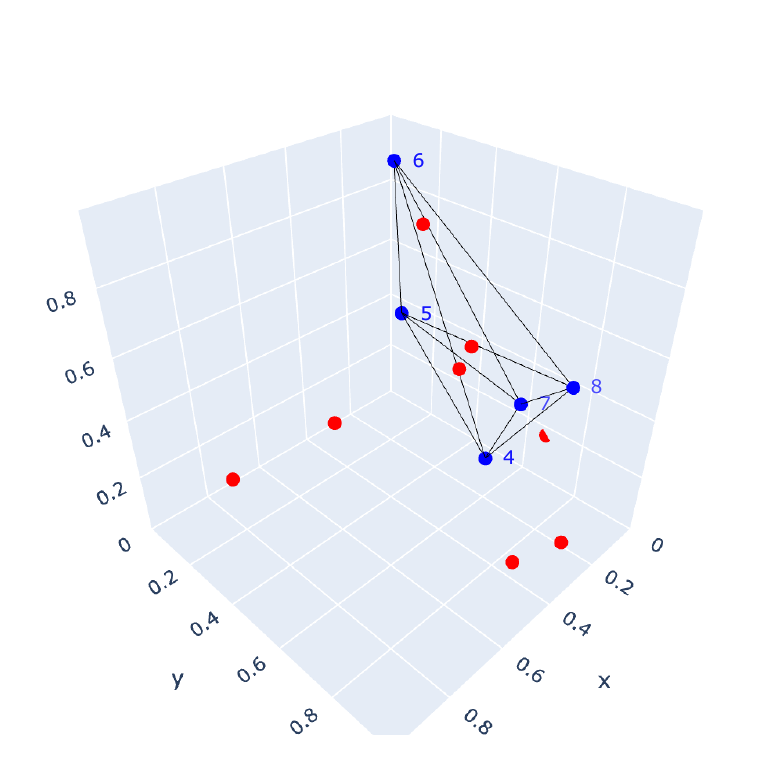}
        \subcaption{5-clique subgraph 1}
    \end{subfigure}
    \hfill
    \begin{subfigure}[b]{0.24\textwidth}
        \centering
        \includegraphics[width=0.99\linewidth]{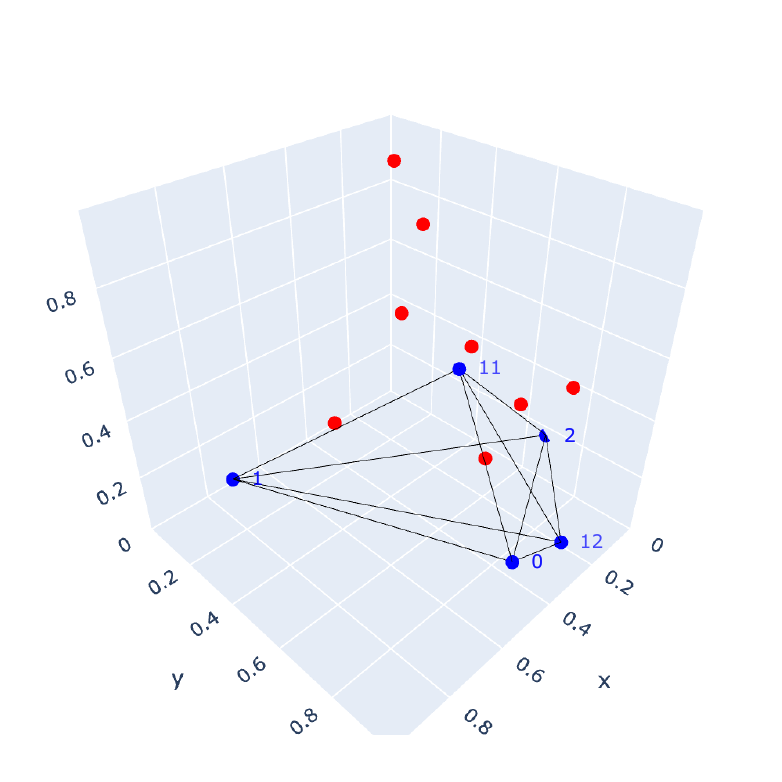}
        \subcaption{5-clique subgraph 2}
    \end{subfigure}
    \hfill
    \begin{subfigure}[b]{0.24\textwidth}
        \centering
        \includegraphics[width=0.99\linewidth]{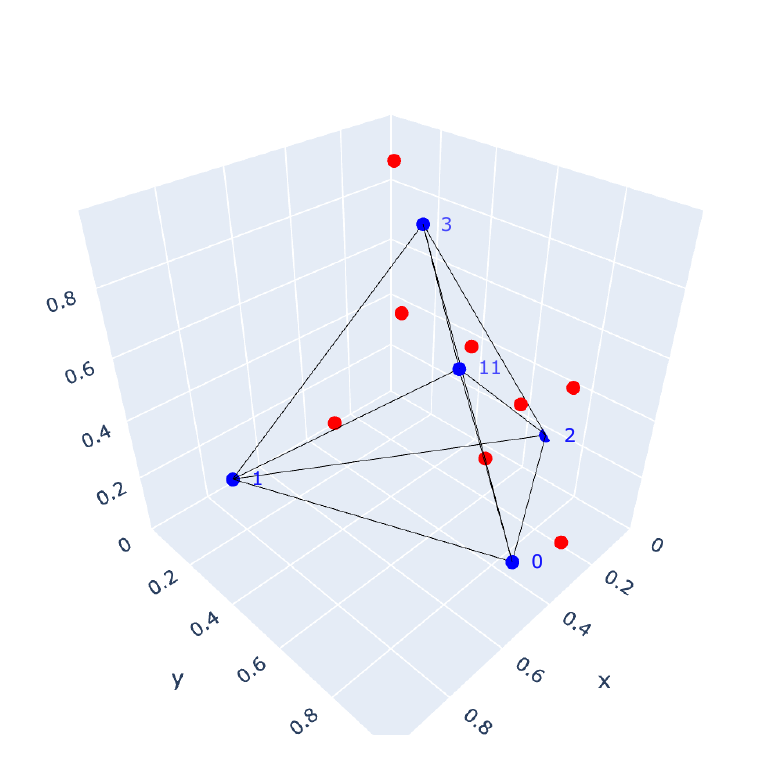}
        \subcaption{5-clique subgraph 3}
    \end{subfigure}
    \caption{Listing 5-cliques within the graph of 13 nodes \newtext{that are not fully connected.}}
    \label{fig:cliques_example}
\end{figure}

\begin{editsection}{blue}

\subsubsection{Online Fault Detection}
\label{sec:fd_algorithm}
We define the ``fault satellite'' as the satellite that has a clock jump, and the ``fault-free satellite'' as the satellite that does not have a clock jump.
We design a test where we consider the following hypotheses:
\begin{equation}
    \begin{aligned}
        H_0: \ & \text{All satellites are fault-free (null hypothesis)} \\
        H_i: \ & \text{The satellite } i \text{ is a fault satellite} \\
    \end{aligned}
\end{equation}
Let $\mathcal{L}^{G}_{t}$ be the list of all 5-cliques of the satellite network as a graph at time step $t$, and $N_s$ the total number of satellites.
We propose an online fault detection algorithm, which uses $\mathcal{L}^{G}_{t}$. 
The procedure of the detection algorithm is as follows. 

\textbf{Algorithm:}
\begin{enumerate}
\item First, we compute the scaled test statistic $\gamma_{test, G^{\prime}} = \lambda_4^2/\newnewtext{s^2}$ for each 5-clique subgraph $G^{\prime}$ in $\mathcal{L}^{G}_{t}$, using Eq.\eqref{eq:scale}.
\item For each satellite $i$ in the graph, repeat the following steps (a) to (c):
\begin{enumerate}
    \item Compute the sum of all the scaled test statistics for subgraphs that do not include satellite $i$.
        \begin{equation}
            \gamma_{\text{test}}^i = \sum_{G^{\prime} \in \mathcal{L}^{G}_{t}, i \notin G^{\prime}} \gamma_{test, G^{\prime}}.
        \end{equation}
    \item Compute the threshold $\teststatisticthreshold^i$ for the sum of the scaled test statistics $\gamma_{test}^i$ for a given false alarm rate $\alpha$ and (user-defined) threshold margin $\eta_{\alpha}$.
        \begin{equation}
            \teststatisticthreshold^i = {\eta}_{\alpha} \cdot \chi_{\alpha}^2 (N_{G, i}, 0)
        \end{equation}
        where $N_{G, i}$ is the number of subgraphs that do not include satellite $i$, $\eta_{\alpha}$ is the threshold margin, 
        and $\chi_{\alpha}^2 (N_{G, i}, 0)$ is the $\alpha$-quantile of the central chi-squared distribution with $N_{G, i}$ degrees of freedom.
    \item Normalize the test statistic by dividing it by the threshold $\teststatisticthreshold^i$.
        \begin{equation}
            \teststatistic^i = \frac{\gamma_{test}^i}{\teststatisticthreshold^i}.
        \end{equation}
\end{enumerate}
\item If for \newnewtext{each} satellite $i$ \newnewtext{that has} $\teststatistic^i < 1$, we accept the null hypothesis, and otherwise we reject the null hypothesis.
We identify the fault satellite $s_f$ as the one with the smallest $\teststatistic^i$.
\begin{equation}
    s_f = \text{argmin}_{i} \teststatistic^i
\end{equation}
\end{enumerate}

The proposed fault detection algorithm has two hyperparameters that affect its performance. 

False Alarm Rate $\alpha$: This is the probability of a false alarm, which is the probability of detecting a satellite as faulty when it is actually fault-free.

Threshold Margin ${\eta}_{\alpha}$: This inflates the threshold $\teststatisticthreshold$ to ${\eta}_{\alpha} \cdot \teststatisticthreshold$, to account for the 
increase in the variance of the sum of the test statistics, due to the correlation between the test statistics of the subgraphs.

We need to set the threshold margin $\eta_{\alpha}$ to a value larger than 1, 
to account for the increase in the variance of the distribution of $\gamma_{test}^i$,
Due to the correlation between the test statistics of the subgraphs.
The correlation between the test statistics of the subgraphs is caused by the fact that the same set of range measurements is used to compute the test statistics between different subgraphs.
We compute the test statistics for subgraphs that do not contain satellite $i$, instead of subgraphs that contain $i$, to avoid having strong correlations between the test statistics of the subgraphs, which would require a larger threshold margin $\eta_{\alpha}$.

The above test only considers a maximum of one fault satellite. 
One possible extension to the test to consider multiple fault satellites is to run the algorithm iteratively,
removing the detected fault satellite from the graph after each iteration until no fault is detected~\citep{knowles2024greedy}. 
Alternatively, we can consider $_{N_s}C_{N_f}$ combinations of fault satellites, where $N_f$ is the number of fault satellites, and compute the test for each combination.
The detection performance in the case of multiple faults will be performed in future work.

\newnewtext{
One practical consideration of the proposed approach is that, for satellite constellations, the fourth singular value of the GCEDM can become extremely small ($\sim 10^{-7}$) due to the large inter-satellite distances (in orders of $10^{6}$ m) compared to range measurement accuracy (in orders of sub-meters to meters).
This can lead to values that approach machine precision when operating in single precision, necessitating the use of double-precision arithmetic.
}

\end{editsection}
\begin{newnewsection}{red}
\subsubsection{Fault Detectable Graphs with Ephemeris Augmentation}
\label{sec:detectable_subgraphs}

In scenarios where the links are sparse or the number of satellites is small, the graph may lack a sufficient number of 5-cliques to perform robust fault detection using only ISR measurements. To mitigate this, we can augment the graph by ``filling in'' missing edges using range estimates derived from satellite ephemerides. This hybrid approach utilizes a combination of measured edges (ISR) and computed edges (ephemeris) to construct redundantly rigid subgraphs for fault detection.

However, a fault (specifically a clock phase jump) appears only in the physical ISR measurements and does not affect the ranges computed purely from ephemeris data. Consequently, for an augmented graph to reliably detect faults, the topology of the measured edges must satisfy a specific connectivity condition.

\begin{proposition}
    \label{prop:fault_detectable_ephemeris}
    Consider a redundantly rigid weighted graph $G = \langle V, E, W \rangle$, where the edge set is partitioned into measured edges $E_m$ (ISR measurements) and computed edges $E_e$ (derived from ephemeris), such that $E = E_m \cup E_e$. Let a fault at satellite $v \in V$ be defined as a bias that corrupts the weights of all incident edges in $E_m$. The graph $G$ is \textit{fault disprovable} (i.e., the graph becomes unrealizable under a fault) if and only if every vertex $v \in V$ is incident to at least one measured edge in $E_m$.
\end{proposition}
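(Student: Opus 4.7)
The plan is to prove the biconditional separately, reducing the sufficiency direction to Proposition \ref{prop:fd_redudantly_rigd}. The central observation to pin down first is that, by definition of the fault model, a clock phase jump at vertex $v$ perturbs only the weights of edges in $E_m$ incident to $v$; ephemeris-computed edges in $E_e$ carry no fault signal because they are generated from the predicted state and never touch the faulty measurement hardware. Consequently, a vertex with no measured incidences is effectively ``invisible'' to any realizability-based test.

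For necessity I would argue by contrapositive. Suppose some vertex $v \in V$ has no incident edge in $E_m$. Then a clock jump of arbitrary magnitude at $v$ leaves every weight of $G$ unchanged, so the hybrid weighted graph is pointwise identical to the nominal one and is therefore still realizable by the true satellite positions. This exhibits a fault configuration that is indistinguishable from the fault-free case by realizability, contradicting fault disprovability.

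For sufficiency, assume every vertex is incident to at least one measured edge, and suppose a fault occurs at some $v$. Because $v$ has at least one neighbor through a measured edge, at least one edge weight of $G$ is corrupted by a nonzero bias. I would then invoke the same argument underlying Proposition \ref{prop:fd_redudantly_rigd}: since $G$ is $k$-vertex ($k \geq 2$) redundantly rigid, the subgraph obtained by deleting $v$ is still rigid, so the positions of the remaining vertices are generically fixed up to congruence by the intact ephemeris and measured edges alone. A generic perturbation of any single edge incident to $v$ therefore cannot be absorbed by any placement of $v$, so realizability fails with probability one. Combined with the trivial forward direction (in the absence of faults, the true positions satisfy all distance constraints), this establishes fault disprovability.

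The main obstacle is formalizing sufficiency in the hybrid setting without re-deriving Proposition \ref{prop:fd_redudantly_rigd} from scratch. The cleanest route is to note that the original proof of that proposition relies only on (i) each fault perturbing at least one edge incident to the faulty vertex, and (ii) the underlying graph being redundantly rigid; once the measured-edge coverage condition is imposed, both premises hold verbatim, regardless of whether the remaining edges come from ISR measurements or ephemeris. A subtle point worth highlighting explicitly is the ``with probability one'' qualifier: it carries a genericity assumption on the underlying point configuration (matching the one in Proposition \ref{prop:fd_redudantly_rigd}) that rules out measure-zero degenerate geometries in which a bias could, by coincidence, be exactly compensated by a congruent replacement of the faulty vertex.
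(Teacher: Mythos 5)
Your proposal is correct and follows essentially the same route as the paper's proof: necessity is argued identically (a vertex with no incident measured edge is invisible to any realizability test, since ephemeris edges carry no fault signal), and sufficiency likewise rests on redundant rigidity forcing the biased measured edge into contradiction with the constraints imposed by the rest of the graph, with the same probability-one/genericity caveat. The only cosmetic difference is that you route sufficiency through the vertex-deletion and sphere-intersection machinery of Proposition~\ref{prop:fd_redudantly_rigd}, whereas the paper phrases it as the redundant edge's length being structurally determined by $E \setminus \{e_{meas}\}$; both are the same underlying argument.
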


\begin{proof}
   See Appendix \ref{sec:fault_detectable_subgraph_proof}.
\end{proof}

We denote the $k$-node subgraphs that satisfy the connectivity condition in Proposition \ref{prop:fd_red_rigid2} (i.e., every vertex is incident to at least one measured edge) as \textit{fault-detectable $k$-node subgraphs}.

In scenarios where the graph is sparse or the number of satellites is small, the standard 5-clique list $\mathcal{L}^G_t$ described in Section \ref{sec:fd_algorithm} may be empty or insufficient for robust detection. In such cases, we replace $\mathcal{L}^G_t$ with the list of fault-detectable 5-node subgraphs, denoted as $\mathcal{L}^{FD}_t$. These subgraphs are constructed by identifying sets of 5 nodes and ``filling in'' missing measured edges with computed edges derived from the ephemeris to form fully connected 5-cliques.

To account for the difference in error characteristics between measured and computed edges, we modify the computation of the test statistic variance $s^2$. In Equation \eqref{eq:scale_formula} (and the generalized form in Equation \eqref{eq:generalized_scale}), the uniform measurement noise variance $(\sigma_m)_{ij}^2$ is replaced by an edge-dependent variance $(\sigma_{link})_{ij}^2$:

\begin{equation}
    (\sigma_{link})_{ij}^2 = 
    \begin{cases} 
        \sigma_m^2 & \text{if } (i,j) \in E_m \quad \text{(Measured Edge)} \\
        2\sigma_r^2 & \text{if } (i,j) \in E_e \quad \text{(Computed Edge)}
    \end{cases}
\end{equation}
where $\sigma_m^2$ is the variance of the ISR measurement noise and $\sigma_r^2$ is the variance of the satellite position error (assuming independent position errors, the variance of the computed range error is approximately $2\sigma_r^2$).

This substitution drastically increases the number of subgraphs available for fault detection, particularly in sparse constellations or during orbital phases with limited visibility. 
It will also enable the detection of ephemeris faults.
However, this improved availability comes at a cost: the minimum detectable bias (MDB) for these subgraphs becomes sensitive to the magnitude of ephemeris errors ($\sigma_r$), rather than being driven solely by the typically smaller measurement noise ($\sigma_m$).

\end{newnewsection}

\section{Satellite Fault Detection Simulation}
\label{sec:simulation}

\subsection{Simulation Configuration}

\begin{editsection}{blue}
\subsubsection{Orbits and Links}
To validate the proposed fault detection algorithm, we consider the following two constellations.

\begin{enumerate}
\item GPS constellation of 31 satellites:
The satellite orbits are initialized using the Two-Line Element (TLE) data from 2025/4/3 from the CelesTrak website~\citep{celestrak_gp_data_formats_2025}.
The 3-dimensional plots of the constellation and the available ISLs at $t=0$ are shown in Figure \ref{fig:earth_orbit}. 

\item A notational lunar constellation of \newnewtext{9} satellites: 
\newnewtext{We consider a constellation of 9 satellites in 3 orbital planes of elliptical lunar frozen orbit (ELFO) with a 30-hour orbital period, as a potential extension of the Lunar Communications Relay and Navigation Systems (LCRNS)~\citep{LCRNS2025}.}
The orbital elements of the constellation are shown in Table \ref{tab:moon_constellation}, and
the 3-dimensional plots of the constellation and the available ISLs at $t=0$ are shown in Figure \ref{fig:moon_orbit}. 
\end{enumerate}

For both constellations, the orbits are propagated in the two-body propagator without considering perturbations.
The availability of the links is calculated assuming that the links are available when it is not occulted by the central body, and the angle between the line of sight and the satellite-body center vector is less than $\phi_{max}$, as illustrated in Figure \ref{fig:visibility}.
The cutoff angle $\phi_{max}$ is set to 60 degrees for the Earth case~\citep{Zhang2024} and \newnewtext{90} degrees for the Moon case.
To compute body occultation, we set an additional mask of 1000 km for the Earth case, and 100 km for the Moon case to avoid the link being corrupted by ionospheric delays or terrain blockage\newnewtext{, respectively}.
We assume that we can obtain the range measurements from all available links at each time step. 
The scheduling of the links will be investigated in future work. 

The \newnewtext{scatter plot of the number of 5 cliques in the constellation containing each satellite} is shown in Figure \ref{fig:earth_nclique} and Figure \ref{fig:moon_nclique} for the Earth and Moon cases, respectively.
\newnewtext{For the lunar constellation, we observe that the number of available 5-cliques is significantly lower (below 30), limiting the use of EDM-based fault detection from ISR alone. Therefore, for the lunar case, we used the ephemeris augmentation described in Section \ref{sec:detectable_subgraphs} to increase the number of subgraphs that can be used for fault detection. The number of fault-detectable 5-node subgraphs are shown in \ref{fig:moon_detectable}}

\begin{figure}[ht!]
    \centering
    \includegraphics[width=0.4\textwidth]{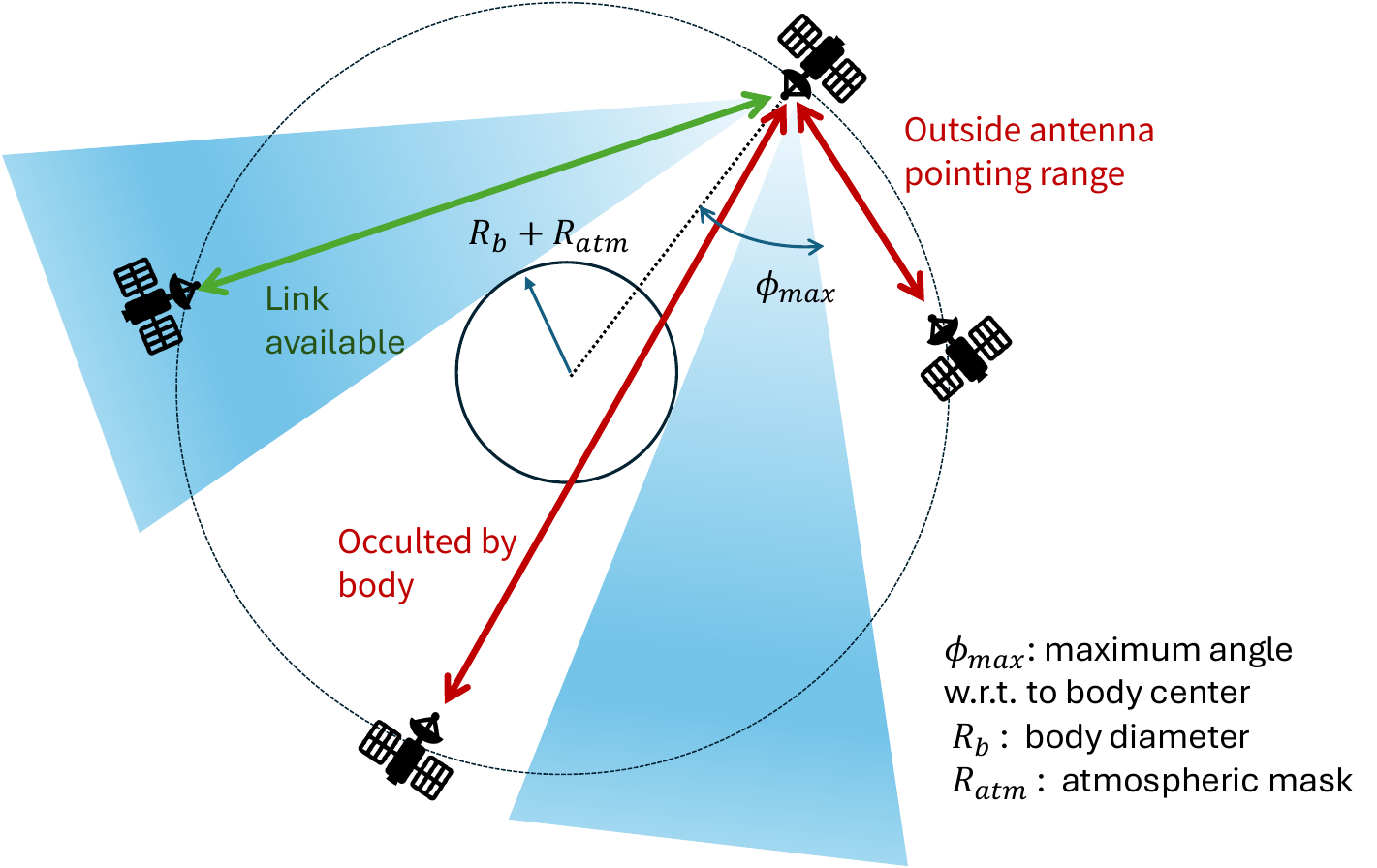}
    \newcaption{The visibility of the link between two satellites is calculated assuming that the link is visible when it is not occulted by the central body (including the atmospheric mask), 
    and the angle between the line of sight and the satellite-body center vector is less than $\phi_{max}$.}
    \label{fig:visibility}
\end{figure}

\begin{figure}[ht!]
    \centering
    \begin{subfigure}[b]{0.45\textwidth}
        \centering
        \includegraphics[width=\linewidth, trim={0mm, 0mm, 0mm, 10mm}]{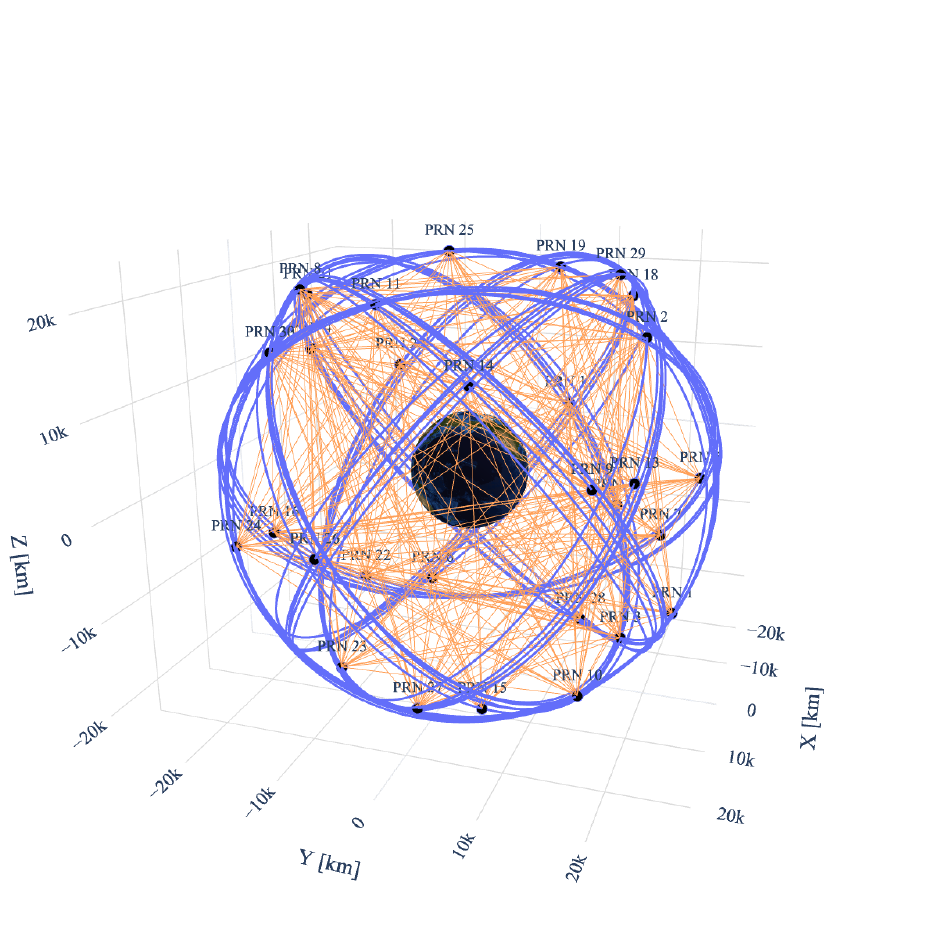}
        \subcaption{\newnewtext{Orbital planes with satellite position (black points) and links (orange line) at the initial time epoch. 
        The constellation consists of 31 satellites in six orbital planes.}} 
        \label{fig:earth_orbit}
    \end{subfigure}
    \hfill
    \begin{subfigure}[b]{0.53\textwidth}
        \centering
        \includegraphics[width=\linewidth]{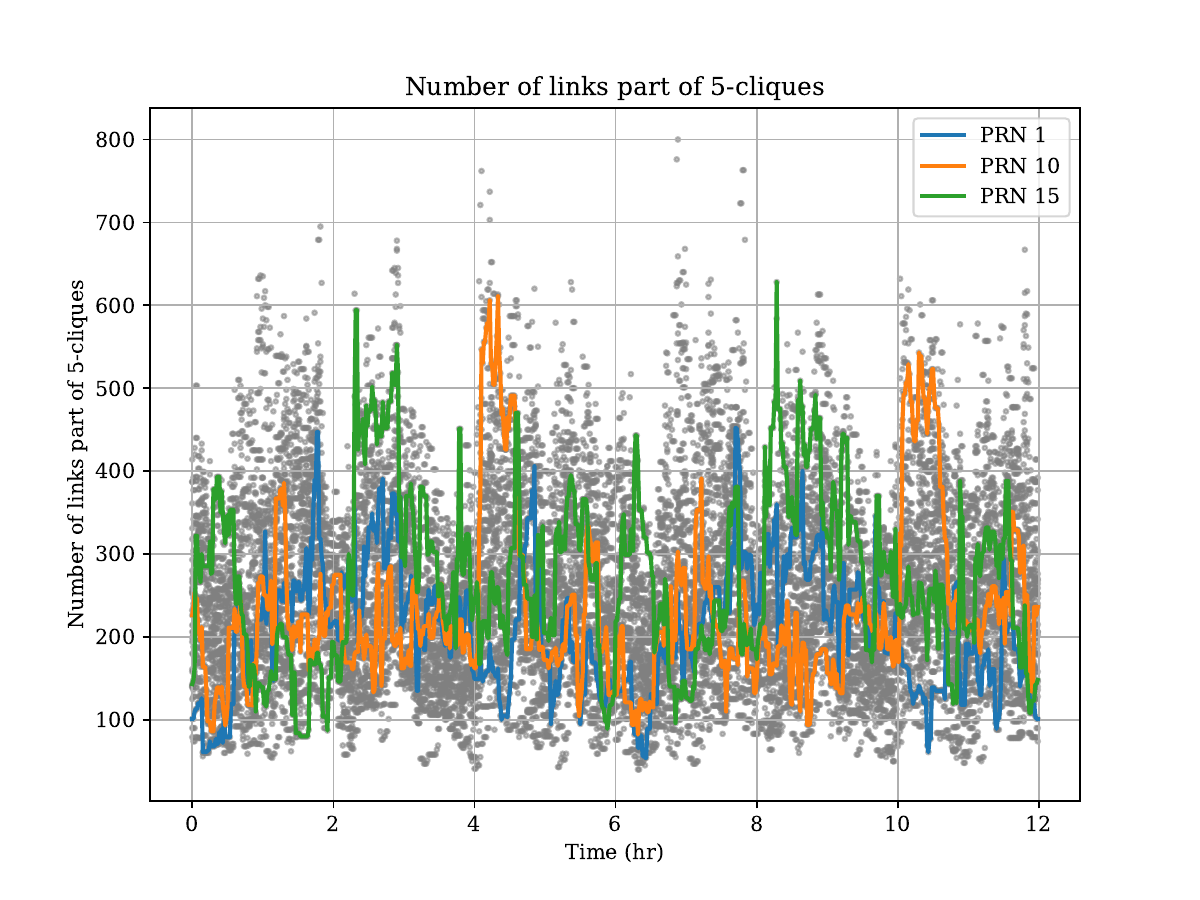}
        \subcaption{The scatter plot of the number of 5-cliques for each satellite in the constellation over full orbit. 
        The number of 5-cliques that contain PRN 1, 10, and 15 are shown in blue, orange, and green, respectively.
        Each satellite has more than 30 self-containing subgraphs throughout the orbit.}
        \label{fig:earth_nclique}
    \end{subfigure}
    \newcaption{A GPS constellation around Earth with 31 satellites. The orbit is initialized using the TLE data from CelesTrak.}
    \label{fig:orbit_earth_case}
\end{figure}

\begin{table}[ht!]
 \newnewcaption{Orbital elements of the Waker/ELFO hybrid lunar constellation (Case 2)}
 \label{tab:moon_constellation}
\begin{tblr}{
    width=\textwidth,
    colspec={X[1,c] X[1,c] | *{6}{X[2,c]}},
    row{even} = {white,font=\small},
    row{odd} = {bg=black!10,font=\small},
    row{1} = {bg=black!20,font=\bfseries\small},
    row{2} = {bg=black!20,font=\bfseries\small},
    hline{Z} = {1pt,solid,black!60},
    rowsep=3pt
}
\SetCell[r=2]{c} Plane & \SetCell[r=2]{c} PRN  & Semi-Major Axis & Eccentricity & Inclination  & RAAN &  Argument of Periapsis & 
Mean Anomaly  \\
&  & $a$ [km] & $e$ []  & $i$ [deg] & $\Omega$ [deg] & $\omega$ [deg]  & $M$ [deg]       \\ \hline
1 & 1 - 3 & 11314.7 & 0.56 & 56.8 &  206.6 & 90 & [0, 120, 240] \\
2 & 4 - 6 & 11314.7 & 0.56 & 46.9 &  321.2 & 98.1 & [40, 160, 280] \\
3 & 7 - 9 & 11314.7 & 0.56 & 46.9 &  91.9 & 81.9 & [80, 200, 320] \\ 
\end{tblr}
\end{table}

\begin{figure}[ht!]
    \centering
    \begin{subfigure}[b]{0.5\textwidth}
        \centering
        \includegraphics[width=\linewidth, trim={0mm, 0mm, 0mm, 10mm}]{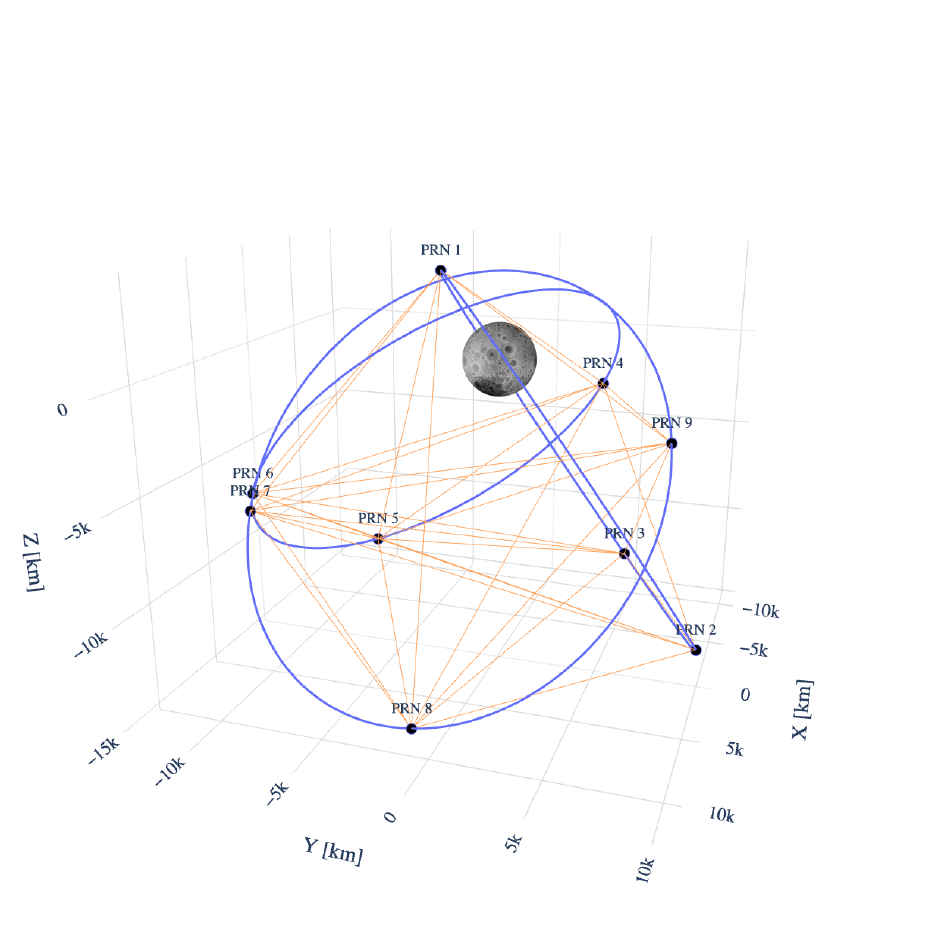}
        \subcaption{Orbital planes with satellite position (black points) and links (orange line) at the initial time epoch. 
        The constellation consists of \newnewtext{9 satellites in the Walker constellation with 3 planes}} 
        \label{fig:moon_orbit}
    \end{subfigure}
    \hfill
    \begin{subfigure}[b]{0.45\textwidth}
        \centering
        \includegraphics[width=\linewidth]{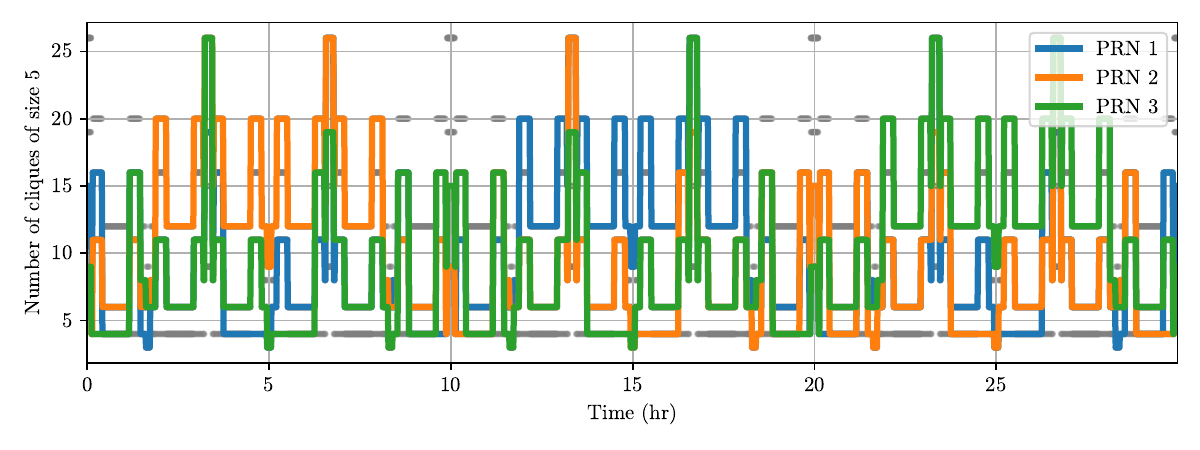}
        \subcaption{\newnewtext{The scatter plot of the number of self-containing 5-cliques for each satellite in the constellation. 
        The number of 5-cliques that contain PRN 1, 2, and 3 are shown in blue, orange, and green, respectively.}}
        \label{fig:moon_nclique}
        \vfill
        \includegraphics[width=\linewidth]{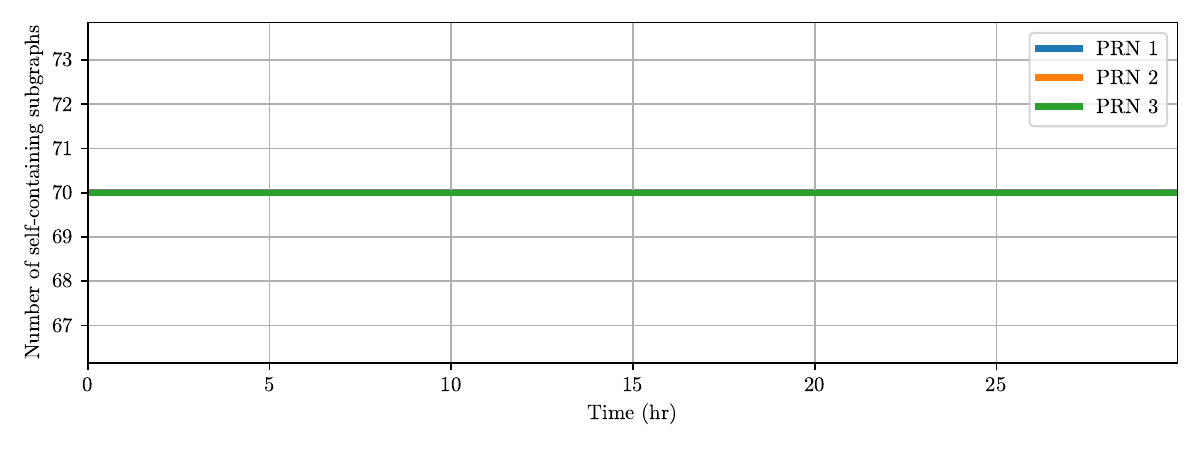}
        \subcaption{\newnewtext{The scatter plot of the number of self-containing fault-detectable 5-sat subgraphs for each satellite in the constellation. All satellites have the same number of self-containing fault-detectable subgraphs.}}
        \label{fig:moon_detectable}
    \end{subfigure}
    \newnewcaption{A lunar constellation around the Moon with 9 satellites }
    \label{fig:orbit_moon_case}
\end{figure}

\end{editsection}

\begin{editsection}{blue}

\subsubsection{Parameters}
We tested combinations of different detection and simulation parameters: 
\begin{itemize}
\item Test methods: \newnewtext{``ephemeris comparison''}, `Data snooping', and `EDM'
\item Fault magnitudes: \newnewtext{$\bar{b}$ = 2, 4, \ldots, 20 m}
\item Ratio of fault links (the probability that the link connected to a fault satellite is biased): $r_f = 0.2, 1.0$
\item Number of fault satellites: $N_f = $ 0, 1.
\item \newnewtext{Ephemeris accuracy: $\sigma_m = \{\SI{1}{m}, \SI{2}{m}\}$ for Earth, $\sigma_m = \{\SI{2}{m}, \SI{3}{m}, \SI{4}{m} \}$ for Moon.}
\item \newnewtext{Input probability of false alarm} for the fault detection algorithms: 
    \begin{equation*}
        \alpha=0.001, 0.002, 0.003, 0.005, 0.008, 0.013, 0.022, 0.036, 0.06, 0.1
    \end{equation*}
\end{itemize}
For each of these cases, we run 5000 Monte-Carlo analyses with randomly sampled initial times (within one orbital period) and a set of fault satellites. 
For all cases, we set the standard deviation of the two-way noise measurement error to \newtext{$\sigma_m = 0.5$} m, based on the results of the Beidou constellation~\citep{Zhang2024}.
\end{editsection}
The inflation rate of $\eta_{\alpha} = 1.5$ is used for the Earth scenario and $\eta_{\alpha}=5.0$ is used for the lunar scenario.

\subsubsection{Evaluation Metrics}
We compute the following metrics to evaluate the performance of the fault detection algorithm
\begin{align}
    \newnewtext{\text{Probability of Missed Detection} \ P_{md}}  &= \frac{FN}{TP + FN} \\
    \newnewtext{\text{Probability of False Alarm} \ P_{fa}} &= \frac{FP}{FP + TN}
\end{align}
where $TP, FN, FP, TN$ are the total number of true positives (fault satellite classified as fault), false negatives (fault satellite classified as non-fault), false positives (non-fault satellite classified as fault), and true negatives (non-fault satellite classified as non-fault). 
A \newnewtext{lower $P_{md}$} is desired not to miss fault satellites, and a lower \newnewtext{$P_{fa}$} is desired to prevent false alarms, detecting normal satellites as a fault.

\begin{newnewsection}{red}

\subsection{Fault Detection Results}

\subsubsection{Case 1: GPS Constellation}
\label{sec:earth_sim}
Figure~\ref{fig:earth_fd} summarizes the fault detection performance for the GPS constellation under different parameter combinations.

\textit{a) No fault}

When no satellites are faulty, the false alarm probability $P_{fa}$ of all methods remains below the user-specified threshold $\alpha = 0.001$--$0.1$. Among the three approaches, the \newnewtext{data snooping} method and the EDM-based method achieve the lowest $P_{fa}$, indicating strong robustness in fault-free conditions.

\textit{b) Single faulty satellite with fault ratio = 1}

When the fault ratio is 1, meaning that all inter-satellite ranges (ISRs) involving the faulty satellite are corrupted, the data snooping method yields the best performance. Owing to the optimality of its statistical test under this assumption, it achieves nearly perfect fault detection ($P_{fa} = P_{md} = 0$) for fault magnitudes as small as 2~m, as shown in Figure~\ref{fig:earth_fd1_Pfa_Pmd}.  
The $P_{fa}$--$P_{md}$ trade-off curve of the EDM test lies between those of data snooping and the ephemeris comparison method, as illustrated in the same figure.

The performance of the \newnewtext{ephemeris comparison} method improves as the ephemeris accuracy increases, as shown in Figure~\ref{fig:earth_fd1_1.0_0.001}. For $\alpha = 0.001$ and ephemeris accuracy $\sigma_m = 1$~m, perfect fault detection ($P_{fa} = P_{md} = 0$) is achieved when the fault magnitude exceeds 5~m. In contrast, with lower ephemeris accuracy ($\sigma_m = 2$~m), a larger fault magnitude (approximately 8~m) is required to drive both $P_{fa}$ and $P_{md}$ to zero.

\textit{c) Single faulty satellite with fault ratio = 0.2}

When the fault ratio is reduced to 0.2, corresponding to a scenario in which only a subset of links is biased (e.g., due to link scheduling), the assumptions underlying the data snooping test are violated. Consequently, data snooping is no longer optimal. Nevertheless, for small fault magnitudes, it still provides the most favorable $P_{fa}$--$P_{md}$ trade-off among the three methods.

For larger fault magnitudes (e.g., $\geq 15$~m), the ephemeris comparison method outperforms the other two approaches. This behavior arises because, when corrupted edges are sparse, both the EDM and data snooping methods exhibit a higher probability of misattributing the fault to non-faulty satellites. As in the previous cases, the EDM method consistently demonstrates intermediate performance between ephemeris comparison and data snooping.

\end{newnewsection}

\begin{figure}[p!]
    \captionsetup[subfigure]{justification=centering}
    \centering
    \begin{subfigure}[b]{\textwidth}
        \centering
        \includegraphics[width=0.6\linewidth]{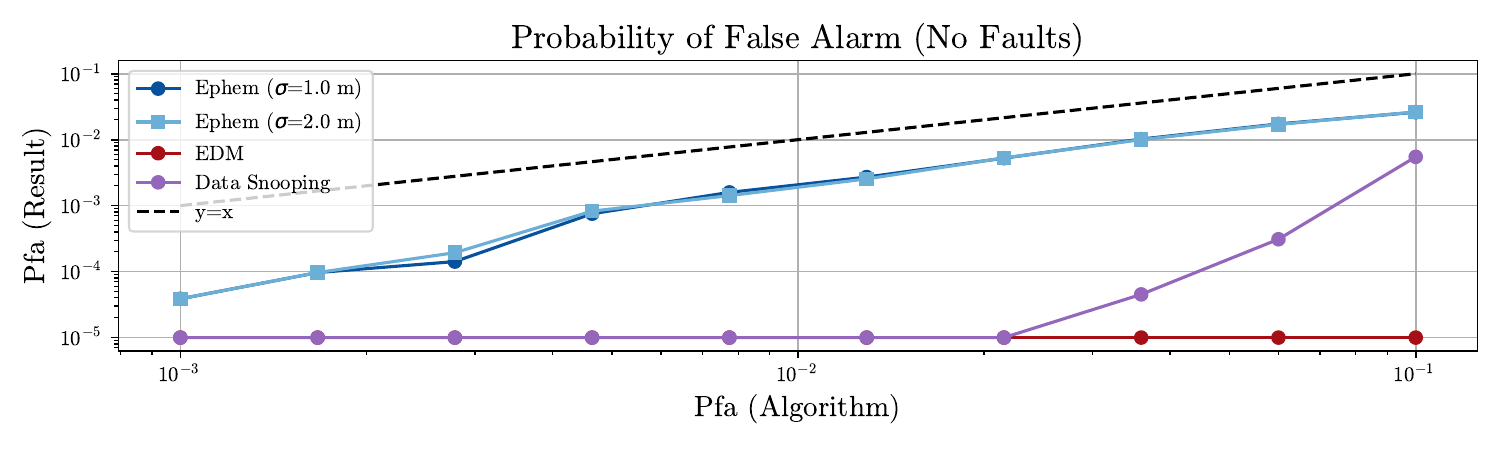}
        \subcaption{\newnewtext{[Earth] Probability of false alarm when there are no faults.}}
        \label{fig:earth_fd0}
    \end{subfigure}
    \begin{subfigure}[b]{0.33\textwidth}
        \centering
        \includegraphics[width=0.9\linewidth]{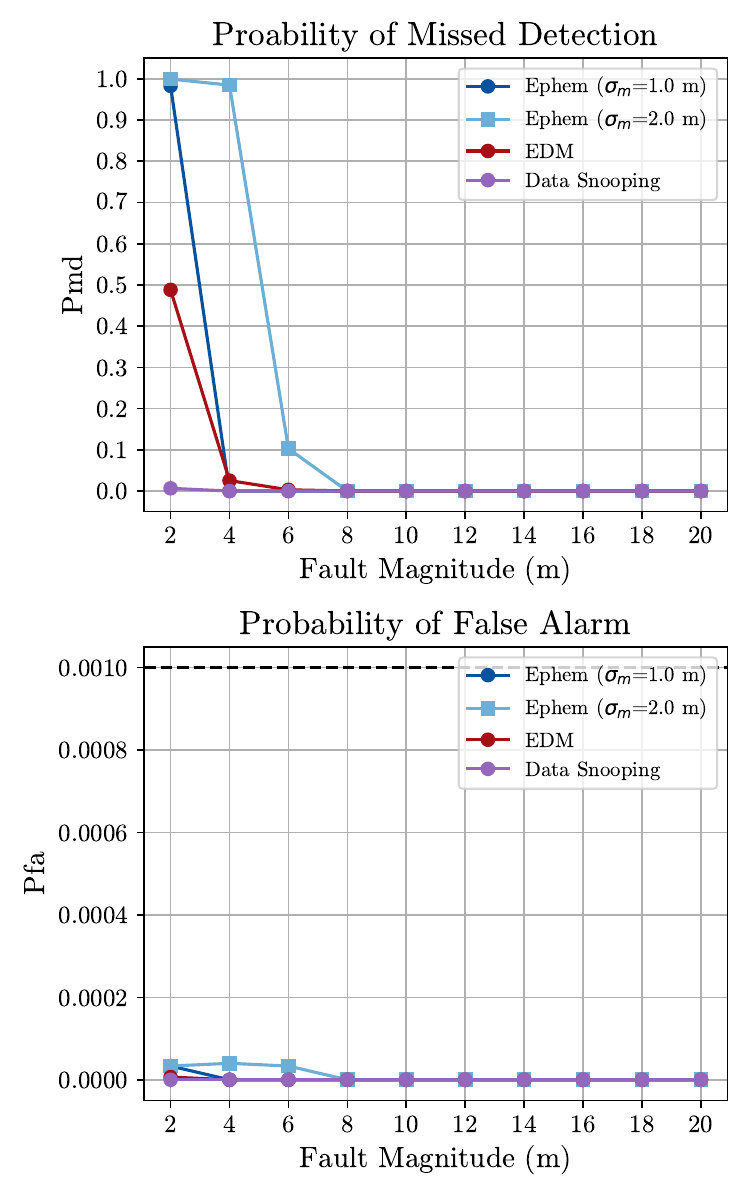}
        \subcaption{\newnewtext{[Earth] 1 fault, $\alpha$: 0.001, fault ratio: 1}}
        \label{fig:earth_fd1_1.0_0.001}
    \end{subfigure}
    \begin{subfigure}[b]{0.33\textwidth}
        \centering
        \includegraphics[width=0.9\linewidth]{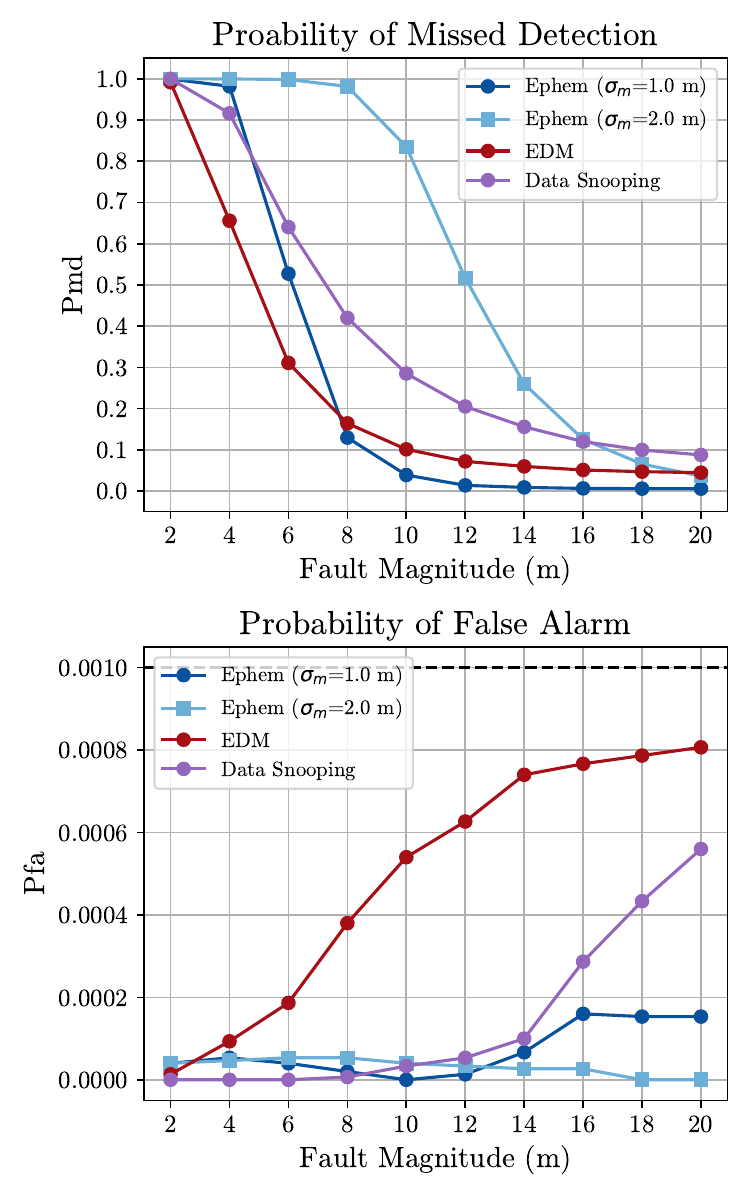}
        \subcaption{\newnewtext{[Earth] 1 fault, $\alpha$: 0.001, fault ratio: 0.2}}
        \label{fig:earth_fd1_0.2_0.001}   
    \end{subfigure}
    \begin{subfigure}[b]{0.33\textwidth}
        \centering
        \includegraphics[width=0.9\linewidth]{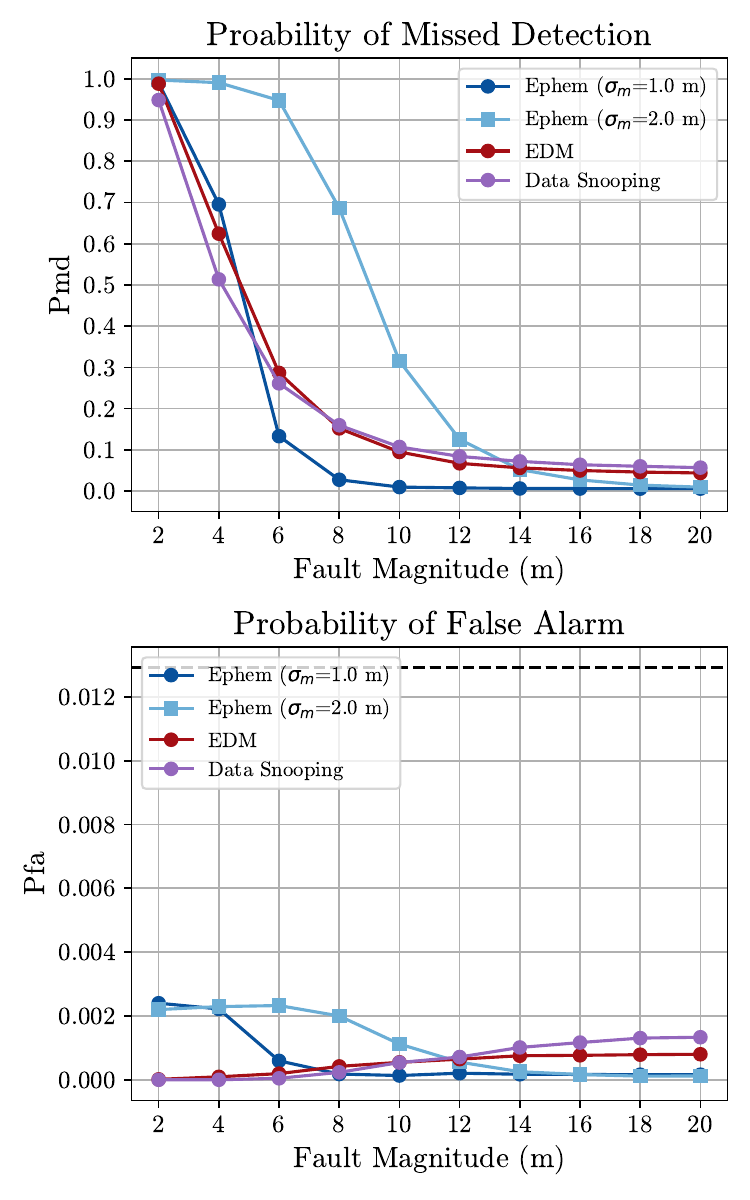}
        \subcaption{\newnewtext{Earth] 1 fault, $\alpha$: 0.013, fault ratio: 0.2}}
        \label{fig:earth_fd1_0.2_0.01}
    \end{subfigure}
    \begin{subfigure}[b]{\textwidth}
        \centering
        \includegraphics[width=0.95\linewidth]{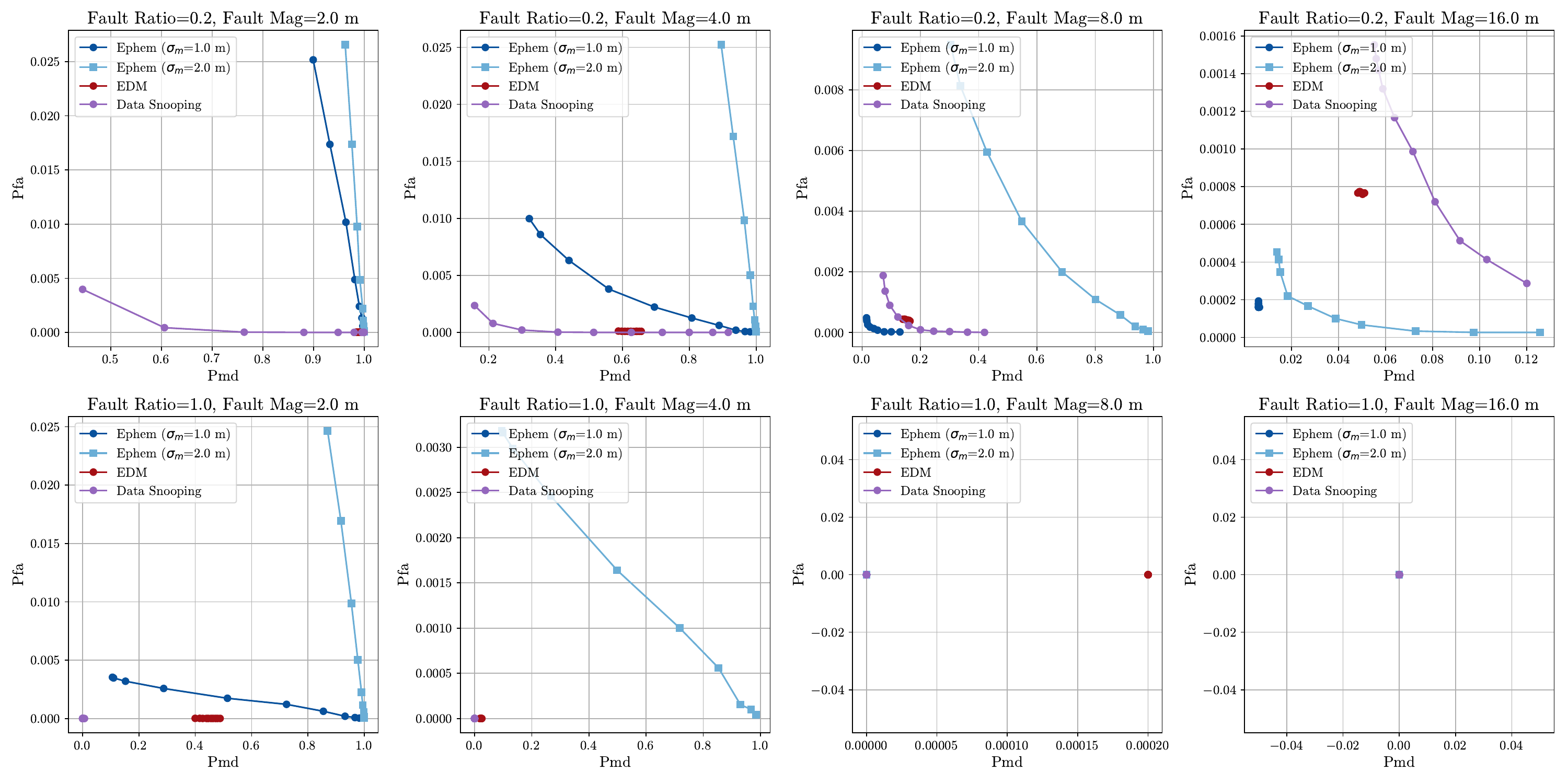}
        \subcaption{\newnewtext{[Earth] $P_{fa}$ vs $P_{md}$ for different fault magnitudes (from left to right: 2m, 4m, 8m, 16m) and input $P_{fa}$ (varied inside each subplot)}}
        \label{fig:earth_fd1_Pfa_Pmd}
    \end{subfigure}
    
    \newcaption{\newnewtext{The fault detection results for the Earth case with 0 or 1 fault satellite. 
        When there is no fault, the $P_{fa}$ of all methods went below the user-defined $P_{fa}$ $\alpha$ = 0.001 - 0.01.
        When the fault ratio is 1.0, the data snooping showed the lowest $P_{md}$ and $P_{fa}$, followed by the EDM approach and ephemeris approach.
    }}
    \label{fig:earth_fd}
\end{figure}

\begin{figure}[p!]
    \captionsetup[subfigure]{justification=centering}
    \centering
    \begin{subfigure}[b]{\textwidth}
        \centering
        \includegraphics[width=0.6\linewidth]{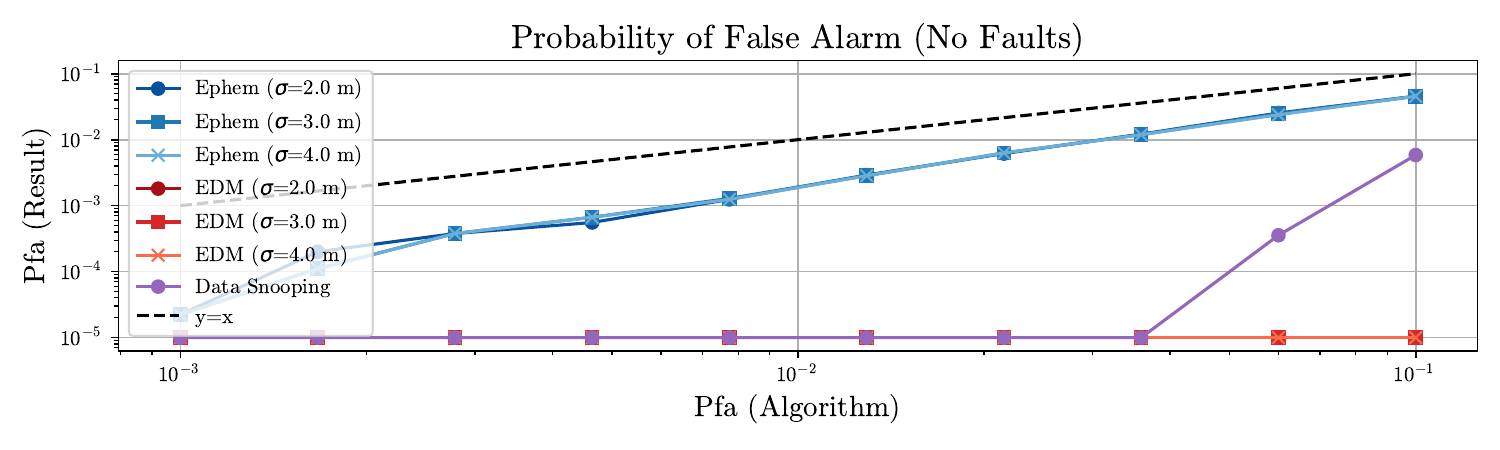}
        \subcaption{\newnewtext{[Moon] Probability of false alarm when there are no faults.}}
        \label{fig:moon_fd0}
    \end{subfigure}
    \begin{subfigure}[b]{0.33\textwidth}
        \centering
        \includegraphics[width=0.9\linewidth]{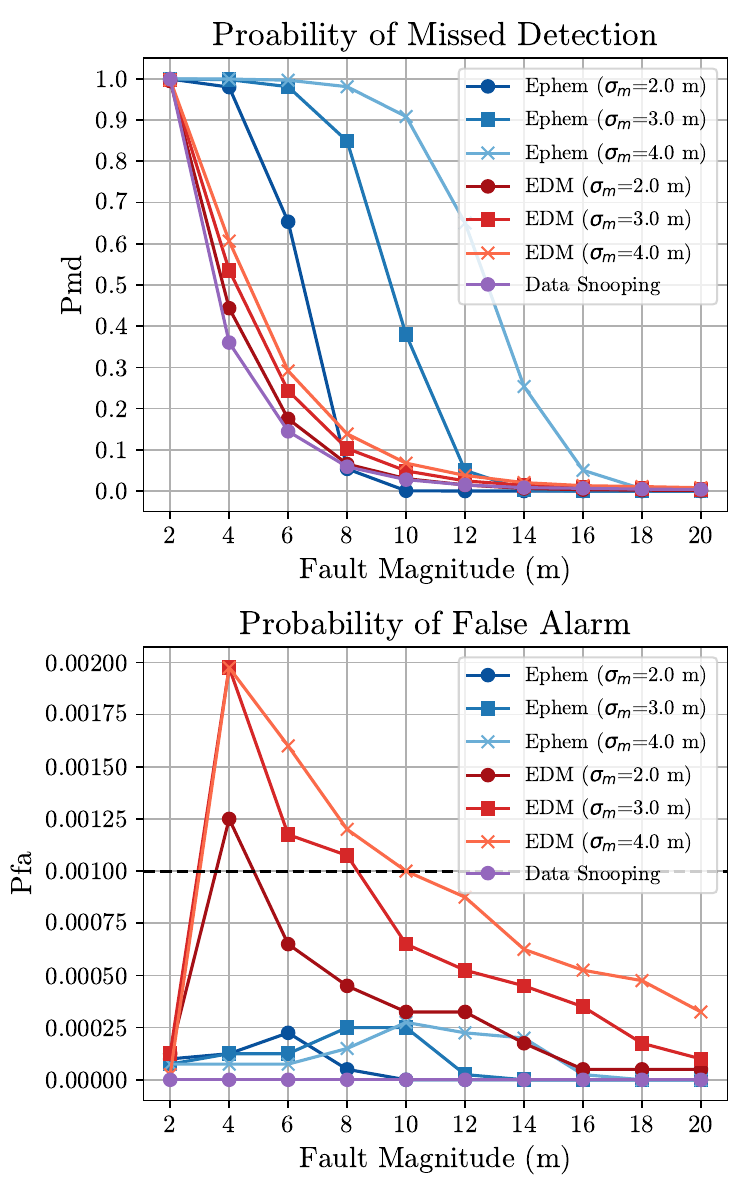}
        \subcaption{\newnewtext{[Moon] 1 fault, $\alpha$: 0.001, fault ratio: 1}}
        \label{fig:moon_fd1_1.0_0.001}
    \end{subfigure}
    \begin{subfigure}[b]{0.33\textwidth}
        \centering
        \includegraphics[width=0.9\linewidth]{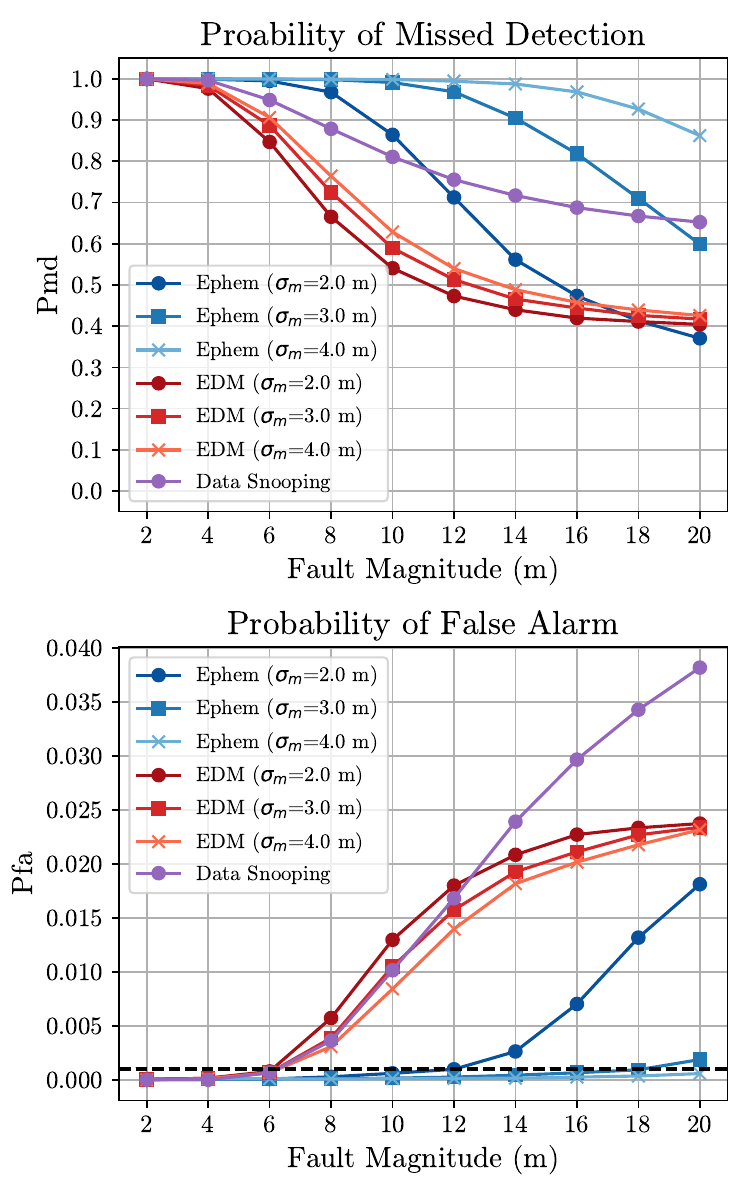}
        \subcaption{\newnewtext{[Moon] 1 fault, $\alpha$: 0.001, fault ratio: 0.2}}
        \label{fig:moon_fd1_0.2_0.001}   
    \end{subfigure}
    \begin{subfigure}[b]{0.33\textwidth}
        \centering
        \includegraphics[width=0.9\linewidth]{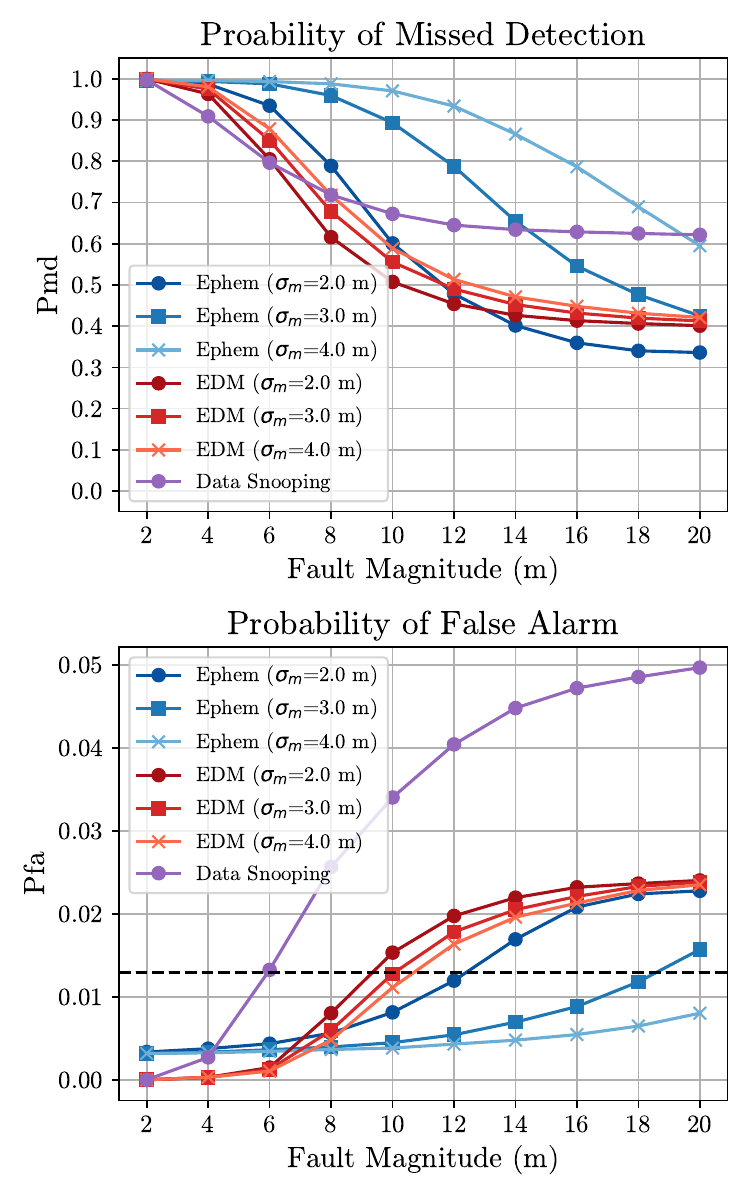}
        \subcaption{\newnewtext{[Moon] 1 fault, $\alpha$: 0.013, fault ratio: 0.2}}
        \label{fig:moon_fd1_0.2_0.01}
    \end{subfigure}
    \begin{subfigure}[b]{\textwidth}
        \centering
        \includegraphics[width=0.95\linewidth]{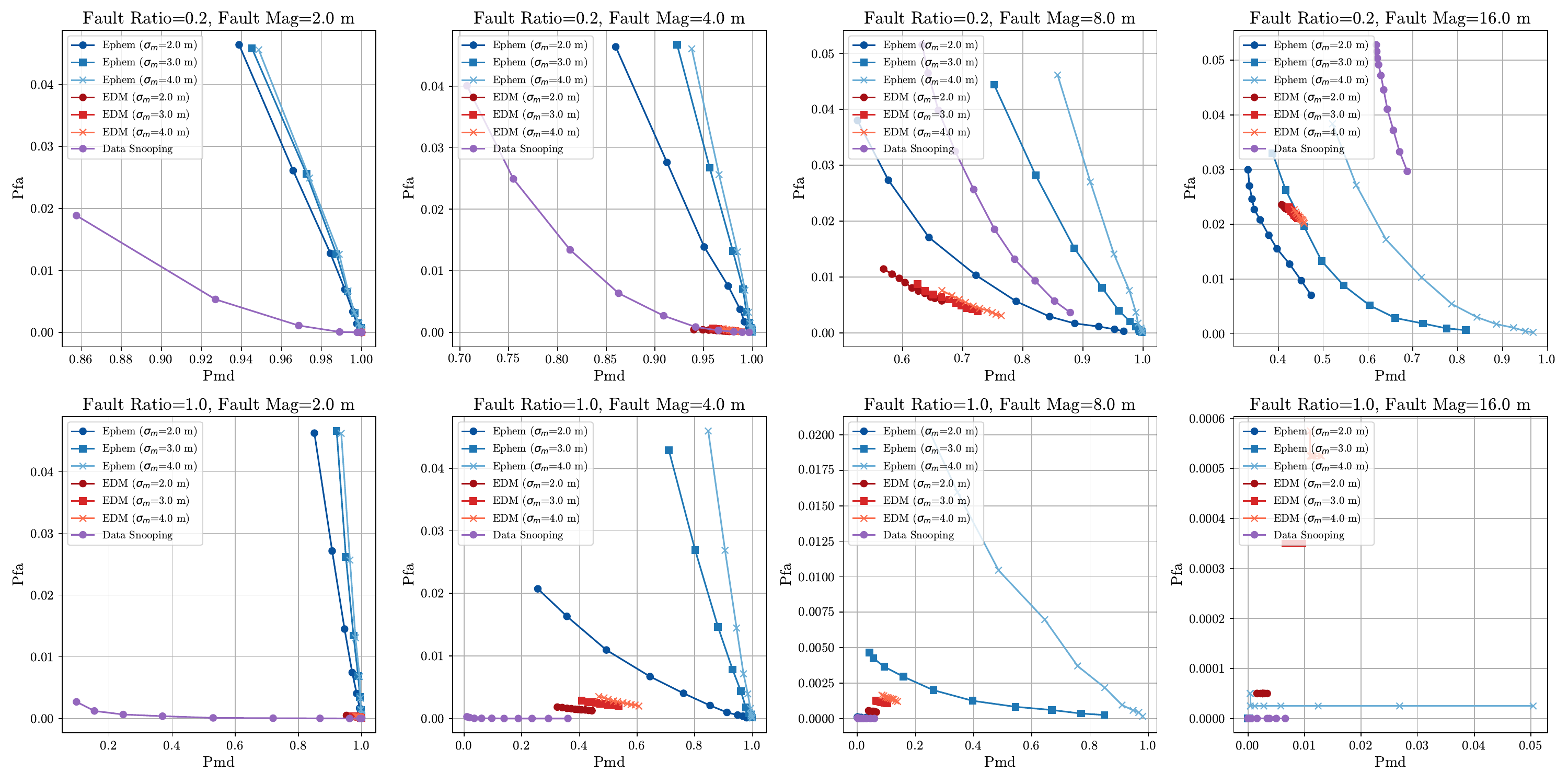}
        \subcaption{\newnewtext{[Moon] $P_{fa}$ vs $P_{md}$ for different fault magnitudes (from left to right: 2m, 4m, 8m, 16m) and input $P_{fa}$ (varied inside each subplot)}}
        \label{fig:moon_fd1_Pfa_Pmd}
    \end{subfigure}
    \newcaption{\newnewtext{The fault detection results for the lunar case with 0 or 1 fault satellite. 
        When there is no fault, the $P_{fa}$ of all methods went below the user-defined $P_{fa}$ ($\alpha$ = 0.001 - 0.1).
        For fault ratio of 1.0, the data snooping showed the lowest $P_{md}$ and $P_{fa}$, followed by the EDM approach and then the ephemeris approach.
        When the fault ratio is 0.2, the EDM and Ephemeris approach provides better $P_{fa}$ and $P_{md}$ for large fault magnitudes.
    }}
    \label{fig:moon_fd}
\end{figure}

\begin{newnewsection}{red}
\subsubsection{Case 2: Lunar Constellation}
\label{sec:moon_sim}

Figure~\ref{fig:moon_fd} summarizes the fault detection performance for the lunar constellation under different parameter combinations. Since the \newnewtext{ephemeris augmentation} strategy described in Section~\ref{sec:detectable_subgraphs} is employed for the EDM approach, its detection performance also becomes explicitly dependent on the assumed ephemeris accuracy.

\textit{a) No fault}

Consistent with the GPS case, when no satellites are faulty, the false alarm probability $P_{fa}$ of all methods remains below the user-specified threshold $\alpha = 0.001$--$0.1$. Among the three approaches, the \newnewtext{data snooping} method and the EDM-based method achieve the lowest $P_{fa}$, indicating robust behavior under fault-free lunar operating conditions.

\textit{b) Single faulty satellite with fault ratio = 1}

When the fault ratio is 1.0, the data snooping method again yields the best overall performance, owing to the optimality of its statistical test under the assumption that all links associated with the faulty satellite are corrupted. However, due to the reduced number and sparsity of inter-satellite links in the lunar constellation compared to the GPS case, a larger fault magnitude (approximately $\geq 10$~m) is required for data snooping to achieve nearly perfect fault detection ($P_{fa} \approx P_{md} \approx 0$), as shown in Figures~\ref{fig:moon_fd1_1.0_0.001} and~\ref{fig:moon_fd1_Pfa_Pmd}.

As in the terrestrial case, the $P_{fa}$--$P_{md}$ trade-off curve of the EDM approach lies between those of data snooping and the ephemeris comparison method. The performance of the \newnewtext{ephemeris comparison} method improves with increasing ephemeris accuracy, although its sensitivity to ephemeris errors is more pronounced in the lunar case due to weaker geometric redundancy.

\textit{c) Single faulty satellite with fault ratio = 0.2}

When the fault ratio is reduced to 0.2, corresponding to scenarios in which only a subset of inter-satellite links is corrupted, the assumptions underlying the data snooping test are violated. As a result, the performance of data snooping degrades significantly, with elevated $P_{md}$ observed even for moderate fault magnitudes.

In contrast, both the EDM and ephemeris comparison methods exhibit improved robustness in this sparse-fault regime. For larger fault magnitudes (e.g., $\geq 8$--10~m), these two approaches achieve lower $P_{md}$ while maintaining $P_{fa}$ near or below the prescribed threshold. The EDM method consistently demonstrates intermediate performance, outperforming data snooping for sparse faults while remaining less sensitive to ephemeris inaccuracies than the ephemeris comparison method when high-accuracy ephemerides are not available.
\end{newnewsection}

\section{Conclusion}
\begin{newnewsection}{red}  
\label{sec:conclusion}
This paper presented a novel framework for detecting satellite clock phase jumps using inter-satellite range measurements, grounded in rigid graph theory and Euclidean distance matrix analysis.
The proposed method models the constellation as a weighted graph and evaluates the realizability of vertex-redundantly rigid subgraphs to identify clock jumps that manifest as structured biases in the inter-satellite ranges.

We established the necessary graph-topological conditions for fault detectability and derived analytical expressions for the distribution of the singular values of the geometric-centered Euclidean distance matrix (GCEDM) under both nominal and faulty conditions.
These results enabled the development of statistically rigorous hypothesis tests with controllable false-alarm probabilities and analytically characterized detection performance.
To address sparse measurement geometries, we introduced an ephemeris-augmented formulation that increases fault detectability when inter-satellite connectivity is limited, at the cost of increased sensitivity to ephemeris uncertainty.
Simulation studies using both a terrestrial GPS constellation and a notional lunar constellation demonstrated that the proposed method remains effective across a wide range of fault magnitudes, link topologies, and operational scenarios.

Overall, the proposed approach offers a geometry-driven perspective on range-based fault detection through the lens of graph rigidity, complementing conventional residual-based methods.
Future work will investigate extensions that incorporate modifications to data-snooping methods, multi-fault detection, and integration with link scheduling to maximize fault observability and detection performance.

\end{newnewsection}

\section*{acknowledgements}
We acknowledge Derek Knowles for reviewing the paper.
This material is based upon work supported by The Nakajima Foundation and the National Science Foundation under Grant No. DGE-1656518. 

\nocite{*}
\printbibliography[title=References]

\appendix

\begin{editsection}{blue}

\section{Computing the Test Threshold Using Imhof's Method}
\label{sec:imhof}
In this section, we describe the procedure to obtain the critical value for the hypothesis test in section 3.1.
The correlation between the two residuals that share the same satellites can be described as~\citep{xu2011gnss}
\begin{equation}
\begin{aligned}
        \rho(g_{i, j}, g_{i, k}) &= \frac{\text{cov}(g_{i, j}, g_{i, k})}{\sqrt{\text{cov}(g_{i, j}, g_{i, j})\ \cdot \text{cov}(g_{i, k}, g_{i, k})}}
        &= \frac{\beta \sigma_r^2}{2 \sigma_r^2 + \sigma_m^2}
\end{aligned}
\end{equation}
where $\beta$ is the cosine of the angle between two line-of-sight vectors $e_{i, j}, e_{i, k}$, that satisfies 
\begin{equation}
   \cos(\phi_{max}) < \cos(\beta) < \cos(0) = 1
\end{equation}
The covariance matrix $\Sigma$ of the $l_i$ normalized residuals $\tilde{g}_{i, j}  = g_{i, j} / \sqrt{2 \sigma_r^2 + \sigma_m^2}$ obtained at satellite $i$ can be described as 
\begin{equation}
    \Sigma_{i, j} = \begin{cases}
        1   &  i = j \\
        \frac{\beta \sigma_r^2}{2 \sigma_r^2 + \sigma_m^2} & i \neq j 
    \end{cases}
\end{equation}
Using the Imhof's method, the cumulative distribution function (CDF) value of the test statistic $T_i = \sum_{j=1}^{l_i} \tilde{g}_{ij}^2$ is given as~\citep{imhof1961computing}
\begin{equation}
    F_{\Sigma}(q) = P (T_i \leq q) = \frac{1}{2} - \frac{1}{\pi}\ \int_{0}^{\infty} \Im \Bigl(e^{-itq} \cdot \phi(t) \Bigr) \frac{dt}{t}
    \label{eq:cdf_imhof}
\end{equation}
where $\Im (\cdot)$ denotes the imaginary part, and the characteristic function $\phi(t)$ is defined as
\begin{equation}
    \phi(t) = -\prod_{k=1}^{l_i} \sqrt{1 - 2 i t \lambda_k}
\end{equation}
where $\lambda_k (k=1, \ldots, l_i)$ are the eigenvalues of the covariance matrix $\Sigma$.
However, it is computationally expensive to evaluate equation \eqref{eq:cdf_imhof} at every satellite at every timestep.
Therefore, \citet{xu2011gnss} suggests to use the conservative bound for the cross-terms of $\Sigma_{i, j}$, as follows
\begin{equation}
    \tilde{\Sigma}_{i, j} = \begin{cases}
        1   &  i = j \\
        \frac{\sigma_r^2}{2 \sigma_r^2 + \sigma_m^2} & i \neq j 
    \end{cases}
\end{equation} 
Then, we can pre-compute $F_{\tilde{\Sigma}}(q)$ for $l_i=1, \ldots, N_s$, and a pre-determined false alarm rate $\alpha = 1-q$, and store them in a table.
The test procedure using this conservative bound is
\begin{equation}
    \text{Accept} \ \mathcal{H}_0 \ \text{if} \ \text{max}_{j \in \{1, \ldots, n\}} \ T_i < F_{\tilde{\Sigma}}(1 - \alpha)
\end{equation}
otherwise
\begin{equation}
    \text{Accept} \ \mathcal{H}_k \ \text{where} \ k = \text{argmax}_{j \in \{1, \ldots, n\}} \ \frac{T_j}{\sqrt{l_j}}
\end{equation}

\end{editsection}
\begin{editsection}{blue}
\section{Proof of Proposition 4.1 }
\label{sec:app21}

\begin{proof}
Let $r_{uv}$ be the observed inter-note distances.
Consider a $k$-vertex redundantly rigid graph $G$ with weight $W(e_{uv}) = r_{uv}$ that is $d$-embeddable.
Let $\mathbf{q} \colon V \in \R^3$ be a generic realization (i.e, it does not lie on a plane or line, unless forced by the ranges) of the points with true inter-node distances  $\bar{r}_{uv}$ given by
\begin{equation}
    \bar{r}_{uv} = \| \mathbf{q}_u - \mathbf{q}_v \|
\end{equation}
and $p \colon V \in \R^3$ be the generic realization of the points with the observed inter-note distances $r_{uv}$
\begin{equation}
    r_{uv} = \| \mathbf{p}_u - \mathbf{p}_v \| = \| \mathbf{q}_u - \mathbf{q}_v \| + f_{uv}
\end{equation}
where $f_{uv} \neq 0$ if $u$ or $v$ is the faulty satellite, and $f_{uv}=0$ otherwise.

Assume the following statement:
\begin{equation}
    G \ \text{contains} \ m \ (1 \leq m \leq k-1) \ \text{faulty nodes}
    \label{eq:assumption_fault}
\end{equation}
Without the loss of generality, we can assume the indices of the faulty nodes are $1, \ldots, m$ for $m \leq k-1$.
Let $G^{\prime} = G \ \backslash \ \{1, \ldots, m \}$ a graph without the $m \leq k-1$ faulty nodes. 
Since $G$ is $k$ vertex redundantly rigid, $G^{\prime}$ is rigid in $\R^d$. 
In addition, since $G^{\prime}$ contains no fault nodes, we have
\begin{equation}
    \bar{r}_{uv} = r_{uv}, \quad u, v \neq \{1, \ldots, m\}
\end{equation}
Therefore, we may choose coordinates so that 
\begin{equation}
    \mathbf{p}_j = \mathbf{q}_j \quad j \neq \{1, \ldots, m\}
\end{equation}
For each fault node $i \in \{1, \ldots, m\}$, its neighbor $j \in \mathcal{N}(i), \ \mathcal{N}(i) = \{ k | (i, k) \in G^{\prime} \}$ satisfies
\begin{equation}
    \| \mathbf{p}_u - \mathbf{p}_v \|= \| \mathbf{q}_i - \mathbf{q}_j \| + f_{ij} \quad (f_{ij} \neq 0)
\end{equation}
Hence, $\mathbf{p}_i$ must lie in the intersection 
\begin{equation}
    \bigcap_{j\in \mathcal{N}(i)} \Bigl\{\,x\in\R^3: \|x - q_j\| = \|q_i - q_j\| + f_{ij} \Bigr\}.
\end{equation}
Geometrically, this is the common intersection of $\|\mathcal{N}(i)\|$ spheres in $\R^{3}$.
It is known that every vertex in a $k$-vertex redundantly rigid graph must have degree at least $d + k - 1$~\citep{Jordan2022extremal}. 
Since $m \leq k-1$, each fault node has at maximum $k-2$ edges with other fault nodes (which are not included in $G^{\prime}$), and therefore we have $\|\mathcal{N}(i)\| \geq (d + k - 1) - (k - 2) = 3 - 1 + 2 = 4$.

Let $\{j_1, \ldots, j_4 \} \in \mathcal{N}(i)$, and 
\begin{equation}
    \mathbf{r} = \begin{bmatrix} 
    \|q_i - q_{j_1}\| + f_{i{j_1}} & \cdots & \|q_i - q_{j_4}\| + f_{i{j_4}}
    \end{bmatrix}^{\top} \in \R^4
\end{equation}
Define a smooth map
\begin{equation}
F:\R^3 \ \longrightarrow \ \R^4 \quad 
F(x) = \begin{bmatrix}
    \|x-q_{i_1}\| &
    \cdots        &
    \|x - q_{i_4}\|
\end{bmatrix}
\end{equation}
The spheres intersect if and only if
\begin{equation}
    \mathbf{r} \in  F(\R^3)
    \label{eq:r_in_F}
\end{equation}
Since the Jacobian of $F$ can have rank at most 3, its image $F(\R^3) \subset \R^4$ is at most a 3-dimensional immersed submanifold of $\R^4$. Any smooth embedding of a $k$-dimensional manifold into $\R^n$ has $n$-dimensional Lebesgue measure 0 whenever $k < n$~\citep{GuilleminPollack1974}, and since the law of random radii vector $\mathbf{r}$ is assumed continuous with respect to $4$-dimensional Lebesgue measure, the probability that equation~\eqref{eq:r_in_F} is satisfied is 0.
This contradicts the statement that the graph $G$ is $d$-embeddable with the observed ranges, at probability 1. 
Since we derived a falsehood, the assumption~\eqref{eq:assumption_fault} that the graph contains $ 1 \leq m \leq k-1$ fault nodes is false, at probability 1.
Therefore, if the graph is $k$-vertex rigid and is $d$-embeddable, and the graph contains at most $k-1$ faulty nodes, the graph does not contain any fault nodes, at probability 1.
\end{proof}

\end{editsection}
\section{Proof of Proposition 4.3 }
\label{sec:app1}

\begin{proof} 
    As discussed in \citet{dokmanic2015euclidean}, the noiseless- and faultess-EDM, $\EDM^{n, d, 0}_{ij} = \| \singlepoint_i - \singlepoint_j\|^2$, based on a collection of points $\PointMat \in \R^{d \times n}$ can be constructed with Equation~\eqref{eq:edmnoisless}.
    \begin{equation}
        \EDM^{n, d, 0} = \onesvec \, \diag(\PointMat^\top \PointMat)^\top - 2 \PointMat^\top \PointMat + \diag(\PointMat^\top \PointMat) \onesvec^\top \label{eq:edmnoisless}
    \end{equation}
    where $\diag(\PointMat^\top \PointMat) \in \R^n$ is the vector formed from the diagonal entries of $\PointMat^\top \PointMat$.
    The first and last matrices of Equation~\eqref{eq:edmnoisless} are rank 1 matrices by construction. 
    These rank 1 matrix each contribute to the matrix rank except for degenerate cases where the points are equally spread apart ($\PointMat \onesvec = \zeromat$). 
    The middle term includes the Gram matrix $\PointMat^\top \PointMat$, which is rank $d$ except for degenerate cases where the points lie on a lower dimensional hyperplane (i.e., $\matrank(\PointMat^\top \PointMat) = 2$ if the points lie on a plane).
    So, in line with \cite{dokmanic2015euclidean}, $\matrank(\EDM^{n, d, 0}) \leq d + 2$, where the condition holds with equality outside of the aforementioned degenerate cases. 

    During a fault, we encounter cross terms with the bias following Equation~\eqref{eq:edmwithfault}.
    \begin{equation}
        \begin{aligned}
            \EDM^{n, d, m}_{ij} &= (\| \singlepoint_i - \singlepoint_j\| + \fault_{ij})^2 =  \| \singlepoint_i - \singlepoint_j\|^2 + 2\| \singlepoint_i - \singlepoint_j\| \fault_{ij} + \fault_{ij}^2 \\
            &= \| \singlepoint_i - \singlepoint_j\|^2 + \fault_{ij} \left( 2\| \singlepoint_i - \singlepoint_j\| + \fault_{ij} \right) \\
            &= \| \singlepoint_i - \singlepoint_j\|^2 + \fault_{ij} \left(\sqrtterm_{ij} + \fault_{ij} \right)
            \label{eq:edmwithfault}
        \end{aligned}
    \end{equation}
    Where $\sqrtterm_{ij}$ is two times the elementwise square root of the corresponding entry in the EDM. 
    As a matrix, we can write $\SqrtMatSuper_{ij} = s_{ij}$.
    We can write the bias term $\FaultMatSuper_{ij} = b_{ij}$ as a matrix as well.
    Both $\SqrtMatSuper$ and $\FaultMatSuper$ are real, symmetric matrices.  
    Then, we arrive at Equation~\eqref{eq:faultaddedmatrix}.
    \begin{equation}
        \EDM^{n, d, m} = \EDM^{n, d, 0} + \FaultMatSuper \hadamard (\SqrtMatSuper + \FaultMatSuper)
        \label{eq:faultaddedmatrix}
    \end{equation}
    where $\hadamard$ is the element-wise or Hadamard product. 
    For the sake of illustration, consider the case with three faults \newtext{of equal magnitude and opposite signs} realized at satellites $k=\{2, 3, 4\}$ for $n > 7$, without loss of generality.
    \begin{equation}
        \FaultMat^{n, \{2, 3, 4\}} = \faultmag \begin{bmatrix}
            0 & 1 & 1 & 1 & 0 & \cdots & 0 \\
            1 & 0 & 2 & 2 & 1 & \cdots & 1 \\
            1 & 2 & 0 & 2 & 1 & \cdots & 1 \\
            1 & 2 & 2 & 0 & 1 & \cdots & 1 \\
            0 & 1 & 1 & 1 & 0 & \cdots & 0 \\
            \vdots & \vdots & \vdots & \vdots & \vdots & \ddots & \vdots \\
            0 & 1 & 1 & 1 & 0 & \cdots & 0
        \end{bmatrix} = \faultmag \begin{bmatrix}
            0 & 1 & 1 & 1 \\
            1 & 0 & 2 & 2 \\
            1 & 2 & 0 & 2 \\
            1 & 2 & 2 & 0 \\
            0 & 1 & 1 & 1 \\
            \vdots & \vdots & \vdots & \vdots \\
            0 & 1 & 1 & 1\\
        \end{bmatrix} \begin{bmatrix}
            1 & 0 & 0 & 0 & 1 & \cdots & 1 \\
            0 & 1 & 0 & 0 & 0 & \cdots & 0 \\
            0 & 0 & 1 & 0 & 0 & \cdots & 0 \\
            0 & 0 & 0 & 1 & 0 & \cdots & 0 \\
        \end{bmatrix}
        \label{eq:faultmatThree}
    \end{equation}
    In the rank-decomposition of $\FaultMatSuper$, the column matrix has the first column has a one in each fault entry and a zero otherwise.
    This column spans all the $n - m$ fault-free columns. 
    The remaining $m$ columns correspond to the columns in $\FaultMatSuper$ with faults and have a zero at the fault entry. 
    \newtext{In the example above, the entry $2$ is illustrative. In general, the entry will depend on the fault magnitudes, which may be different.
    With probability zero, the faults across satellites exactly cancel out and lead to degeneracy in the fault detection, as discussed below.} 
    $\FaultMatSuper$ achieves full rank when the number of faults reaches $m = n-1$.  
    In general, $\matrank(\FaultMatSuper) = \min(1 + m, n)$ if $m > 0$ and $\matrank(\FaultMat^{n, 0}) = 0$.
    By similar argument, $\matrank(\FaultMatSuper \hadamard \FaultMatSuper) = \matrank(\FaultMatSuper)$ since we will have one column indicating the fault entries and $m$ columns associated with the columns of $\FaultMatSuper \hadamard \FaultMatSuper$ with faults, which are the same columns as $\FaultMatSuper$.
    This overall rank relationship often holds even if the faults are of different non-zero magnitudes.
    However, if the faults cancel out, the rank will drop.
    For example, in the case above, if $b_2 = 2 \faultmag$ and $b_3 = b_4 = - \faultmag$, then $\matrank(\FaultMatSuper) = 3$.
    Therefore, $\matrank(\FaultMatSuper) \leq \min(1 + m, n)$ for arbitrary fault sizes.
    Nevertheless, the term $\matrank(\FaultMatSuper \hadamard \FaultMatSuper)$ can still be rank $m + 1$ even if $\matrank(\FaultMatSuper) < m + 1$ since the squaring can remove the linear cancellation.
    
    However, the span of the columns of $\FaultMatSuper \hadamard \SqrtMatSuper$ will generally span the columns $\FaultMatSuper \hadamard \FaultMatSuper$. 
    Again, for the sake of illustration, we extend the example above with faults at satellites $k=\{2, 3, 4\}$ for $n > 7$, without loss of generality.
    \begin{equation}
    \begin{aligned}
        \FaultMat^{n, \{2, 3, 4\}} \hadamard \SqrtMatSuper &= \faultmag \begin{bmatrix}
            0      &   s_{12} &   s_{13} &   s_{14} & 0      & \cdots & 0 \\
            s_{12} & 0        & 2 s_{23} & 2 s_{24} & s_{25} & \cdots & s_{2n} \\
            s_{13} & 2 s_{23} & 0        & 2 s_{34} & s_{35} & \cdots & s_{3n} \\
            s_{14} & 2 s_{24} & 2 s_{34} & 0        & s_{45} & \cdots & s_{4n} \\
            0      &   s_{25} &   s_{35} &   s_{45} & 0      & \cdots & 0 \\
            \vdots & \vdots & \vdots & \vdots & \vdots & \ddots & \vdots \\
            0      &   s_{2n} &   s_{3n} &   s_{4n} & 0 & \cdots & 0
        \end{bmatrix} \\
        &= \faultmag \begin{bmatrix}
            0      &   s_{12} &   s_{13} &   s_{14} & 0      & 0 \\
            s_{12} & 0        & 2 s_{23} & 2 s_{24} & s_{25} & s_{26} \\
            s_{13} & 2 s_{23} & 0        & 2 s_{34} & s_{35} & s_{36} \\
            s_{14} & 2 s_{24} & 2 s_{34} & 0        & s_{45} & s_{46} \\
            0      &   s_{25} &   s_{35} &   s_{45} & 0      & 0 \\
            \vdots &   \vdots &   \vdots &   \vdots & \vdots & \vdots \\
            0      &   s_{2n} &   s_{3n} &   s_{4n} & 0      & 0 \\
        \end{bmatrix} \begin{bmatrix}
            1 & 0 & 0 & 0 & 0 & 0 & \rrefleftover_{17} & \cdots & \rrefleftover_{1n} \\
            0 & 1 & 0 & 0 & 0 & 0 & 0 & \cdots & 0 \\
            0 & 0 & 1 & 0 & 0 & 0 & 0 & \cdots & 0 \\
            0 & 0 & 0 & 1 & 0 & 0 & 0 & \cdots & 0 \\
            0 & 0 & 0 & 0 & 1 & 0 & \rrefleftover_{57} & \cdots & \rrefleftover_{5n} \\
            0 & 0 & 0 & 0 & 0 & 1 & \rrefleftover_{67} & \cdots & \rrefleftover_{6n} \\
        \end{bmatrix}
    \end{aligned}
        \label{eq:faultmatThreeLeftOver}
    \end{equation}
    where $\rrefleftover_{ij}$ are leftover terms from the row-reduction. 
    In the rank-decomposition of $\FaultMatSuper \hadamard \SqrtMatSuper$, $m$ columns correspond to the columns in $\FaultMatSuper$ with faults and have a zero at the fault entry, just as before.
    However, now, one column is generally not enough to span the fault-free columns since the entries in $\SqrtMatSuper$ are not linearly related.
    For example, $(s_{12}, s_{13}, s_{14})$ is generally not a linear scaling of $(s_{25}, s_{35}, s_{45})$.
    These fault-free columns constitute an $m$ dimensional subspace, from which will need $m$ columns to fully span.
    $\FaultMatSuper \hadamard \SqrtMatSuper$ achieves full rank when the number of faults reaches $m = \lceil n/2 \rceil$, where $\lceil \cdot \rceil$ is the ceiling function.
    At that point, the subspace of fault-free columns is large enough that the fault columns will contribute to the span.
    Therefore, $\matrank(\FaultMatSuper \hadamard \SqrtMatSuper) \leq \min(2m, n)$ where we do not have equality when $\SqrtMatSuper$ is degenerate (i.e., if the fault satellites are above or below a plane of fault-free satellites). 

    The columns needed to span $\FaultMatSuper \hadamard \SqrtMatSuper$ are the same as those needed for $\FaultMatSuper \hadamard \FaultMatSuper$, with the same sparsity pattern.
    Therefore, we will have
    \begin{equation}
        \matrank(\FaultMatSuper \hadamard (\SqrtMatSuper + \FaultMatSuper)) = \matrank(\FaultMatSuper \hadamard \SqrtMatSuper)  \leq \min(2m, n)
    \end{equation}
    Lastly, the subspace spanned with the fault contributions is distinct from the subspace spanned by the EDM.
    Therefore, 
    \begin{equation}
        \matrank(\EDM^{n, d, m}) \leq \min(d + 2 + 2m, n)
    \end{equation}
    where the equality holds in non-degenerate cases.

\end{proof}

\section{Proof of Proposition 4.4}
\label{sec:app2}

\begin{proof}
    First, by rank inequality $\matrank(AB) \leq \min(\matrank(A), \matrank(B))$ for arbitrary matrices $A$ and $B$. 
    The rank of the geometric centring matrix is $\matrank(\centeringJ^n) = n - 1$.
    So, $\matrank(\GCEDM^{n,d,k}) \leq n - 1$.
    However, this bound is too loose when there are few faults.
    Expanding the expression for geometric centering yields Equation~\eqref{eq:geocenteringexpanded}.
    \begin{equation}
        \begin{aligned}
            \GCEDM^{n,d,m} &= -\frac{1}{2} \centeringJ^n \EDM^{n, d, m} \centeringJ^n \\
            &= -\frac{1}{2} \centeringJ^n (\EDM^{n, d, 0} + \FaultMatSuper \hadamard (\SqrtMatSuper + \FaultMatSuper)) \centeringJ^n \\
            &= -\frac{1}{2} \centeringJ^n (\onesvec \, \diag(\PointMat^\top \PointMat)^\top - 2 \PointMat^\top \PointMat + \diag(\PointMat^\top \PointMat) \onesvec^\top) \centeringJ^n - \frac{1}{2} \centeringJ^n (\FaultMatSuper \hadamard (\SqrtMatSuper + \FaultMatSuper)) \centeringJ^n
        \end{aligned} \label{eq:geocenteringexpanded}
    \end{equation}

    First, geometric centering removes the two rank 1 matrices, as shown in Equations~\eqref{eq:geocenterterm1} and \eqref{eq:geocenterterm3} \citep{dokmanic2015euclidean}.
    \begin{align}
        -\frac{1}{2} \centeringJ^n (\onesvec \, \diag(\PointMat^\top \PointMat)^\top ) \centeringJ^n &= -\frac{1}{2} (\onesvec \, \diag(\PointMat^\top \PointMat)^\top  - \onesvec \, \diag(\PointMat^\top \PointMat)^\top ) (\eye^n - \frac{1}{n} \onesvec {\onesvec}^{\top}) = 0 \label{eq:geocenterterm1} \\
        -\frac{1}{2} \centeringJ^n (\diag(\PointMat^\top \PointMat) \onesvec^\top) \centeringJ^n &= -\frac{1}{2} (\eye^n - \frac{1}{n} \onesvec {\onesvec}^{\top}) (\diag(\PointMat^\top \PointMat) \onesvec^\top  - \diag(\PointMat^\top \PointMat) \onesvec^\top) = 0 \label{eq:geocenterterm3}
    \end{align}
    For the $-2 \PointMat^\top \PointMat$ term, notice that the mean of the points is $\meanPoints = \frac{1}{n} \PointMat \onesvec \in \R^d$. Using this property yields Equation~\eqref{eq:geocenterterm2xTx}, which is a Gram matrix of the form $\PointMat_c^\top \PointMat_c$ for the point matrix $\PointMat_c$ centered about the origin \citep{dokmanic2015euclidean}.
    This shift will not change the rank, meaning $\matrank(\PointMat^\top \PointMat) = \matrank(\PointMat_c^\top \PointMat_c)$.
    \begin{equation}
        -\frac{1}{2} \centeringJ^n (-2 \PointMat^\top \PointMat) \centeringJ^n = (\PointMat - \meanPoints {\onesvec}^{\top})^\top (\PointMat - \meanPoints {\onesvec}^{\top}) = \PointMat_c^\top \PointMat_c \label{eq:geocenterterm2xTx}
    \end{equation}
    The remaining term is $\frac{1}{2} \centeringJ^n (\FaultMatSuper \hadamard (\SqrtMatSuper + \FaultMatSuper)) \centeringJ^n$. 
    By rank inequality, 
    \begin{equation}
        \matrank(\centeringJ^n (\FaultMatSuper \hadamard (\SqrtMatSuper + \FaultMatSuper)) \centeringJ^n) \leq \min(\matrank(\FaultMatSuper \hadamard (\SqrtMatSuper + \FaultMatSuper)), \matrank(\centeringJ^n)) = \min(2m, n - 1)
    \end{equation}
    So, we are left with
    \begin{equation}
        \matrank(\GCEDM^{n,d,k}) \leq \min(d + 2m, n - 1) 
    \end{equation}

\end{proof}
\section{Proof of Corollary 4.4.1}
\label{sec:app3}
\begin{proof}
    In terms of rank, the noise acts as many small faults on each satellite, with the same sparsity structure as $\FaultMatSuper$ in Proposition~\ref{prop:edmrank}, almost surely with $m = n$ since there is zero probability mass that the randomly sampled noise is exactly zero or exactly cancel out.
    Using Proposition~\ref{prop:geocenteredm}, 
    \begin{equation}
        \matrank(\GCEDMnoisy^{n,d,m}) = \matrank(\GCEDM^{n,d,n}) \leq \min(d + 2n, n - 1) = n - 1
    \end{equation}
    Therefore, succinctly, $\matrank(\GCEDMnoisy^{n,d,m}) = n - 1$, almost surely.
\end{proof}

\begin{editsection}{blue}

\section{The Derivation of the Scale of Chi-squared Distribution}
\label{sec:scale_distribution}

Let $\mathbf{D} \in \R^{5 \times 5}$ the distance matrix between the nodes where $D_{ij} = \|\mathbf{x}_i - \mathbf{x}_j\|$, and $E \in \R^{5 \times 5}$ the perturbation matrix (the noise matrix) where $E_{ij} = \omega_{ij} \sim \mathcal{N}(0, \sigma_m^2)$. 
The observed noisy distance matrix is $\tilde{\mathbf{D}} = \mathbf{D} + \mathbf{E}$.
The EDM is $\mathbf{D} \circ \mathbf{D}$, where $\circ$ is the Hadamard (element-wise) product.
The observed (noisy) GCEDM is 
\begin{equation}
\begin{aligned}
    \tilde{\mathbf{G}} &= -\frac{1}{2} \centeringJ^5 (\tilde{\mathbf{D}} \circ \tilde{\mathbf{D}}) \centeringJ^5 \\
    &= -\frac{1}{2} \mathbf{J}^5 \left[(\mathbf{D} \circ \mathbf{D}) + 2 (\mathbf{D} \circ \mathbf{E}) + (\mathbf{E} \circ \mathbf{E}) \right] \mathbf{J}^5 \\
    &\approx \mathbf{G} + \mathbf{\Delta G}  \quad (\because \mathbf{E} \ll \mathbf{D})
    \label{eq:delta_G}
\end{aligned}
\end{equation}
where $\mathbf{\Delta G} = \mathbf{J^5} (\mathbf{D} \circ \mathbf{E}) \mathbf{J^5}$. 
\newnewtext{The second-order error term $(\mathbf{E} \circ \mathbf{E})$ is approximated as zero in the third line. }
Here, $\mathbf{G}$ has rank 3, and $\mathbf{\tilde{G}}$ has rank 4, as proved in Corollary 4.4.1. Therefore, $\text{rank}(\mathbf{\Delta G}) \geq 4 - 3 = 1$, by rank subadditivity.
Denote the singular value decomposition of G as 
\begin{equation}
    \mathbf{G} = \mathbf{U \Sigma V^\top} 
    = \begin{bmatrix}
    \mathbf{U_1} \ | \ \mathbf{U_0}
    \end{bmatrix}
    \begin{bmatrix}
    \mathbf{\Sigma_3}  & \mathbf{0}_{3 \times 2} \\
    \mathbf{0}_{2 \times 3} & \mathbf{0}_{2\times2}  \\
    \end{bmatrix}
    \begin{bmatrix}
    \mathbf{V}_1 \ | \  \mathbf{V}_0
    \end{bmatrix}^{\top}
    \quad (\mathbf{U_1, V_1} \in \R^{5 \times 3} \text{ and } \mathbf{U_0, V_0} \in \R^{5 \times 2})
\end{equation}
From the SVD of the perturbed matrix $\mathbf{\tilde{G} = \tilde{U} \tilde{\Sigma} \tilde{V}^\top}$, we obtain
\begin{equation}
\begin{aligned}
    \mathbf{\tilde{\Sigma}} &= \mathbf{\tilde{U}^\top \tilde{G} \tilde{V}} \\
    &= \mathbf{(U + \Delta U)^{\top} (G + \Delta G) (V + \Delta V)} \\
    &= \mathbf{\Sigma + U^{\top} (\Delta G) V  
    + U^{\top} G (\Delta V) + \Delta U^{\top} G V 
    + \left[(\Delta U)^{\top} \Delta G V + U^{\top} \Delta G \Delta V
    + (\Delta U)^{\top} \Delta G \Delta V  \right] } 
    \\ 
    &\approx \mathbf{\Sigma + U^{\top} (\Delta G) V
    + \Sigma V^{\top} (\Delta V) + \Delta U^{\top} U \Sigma} \quad (\because \text{2nd and 3rd order perturbation 
 } \approx 0) \\
    &= \begin{bmatrix}
        \mathbf{\Sigma_3 + U_1^\top \Delta G V_1 + \Delta_{11}} &
        \mathbf{U_1^\top \Delta G V_0 + \Delta_{12}} \\
        \mathbf{U_0^\top \Delta G V_1 + \Delta_{21}} &
        \mathbf{U_0^\top \Delta G V_0}
    \end{bmatrix} \quad (\mathbf{\Delta_{11}, \Delta_{12}, \Delta_{21}}: \text{ perturbation from } \mathbf{\Sigma V^{\top} (\Delta V) + \Delta U^{\top} U \Sigma)} \\
    &= \begin{bmatrix}
        \mathbf{\Sigma_3 + U_1^\top \Delta G V_1 + \Delta_{11}} &
        \mathbf{0}_{3 \times 2} \\
        \mathbf{0}_{2 \times 3} &
        \mathbf{U_0^\top \Delta G V_0}
    \end{bmatrix}
    \label{eq:sigma_pert}
\end{aligned}
\end{equation}
$\mathbf{U_0^\top \Delta G V_0}$ has rank 1 since $\mathbf{\tilde{G}}$ has rank 4. 
Thus, the 4th singular value of $\mathbf{\tilde{\mathbf{G}}}$ satisfies
\begin{equation}
    \lambda_4^2 = \tilde{\Sigma}_{44}^2 = \| \mathbf{U_0^\top \Delta G V_0} \|_F^2
    \label{eq:sigma4_frob}
\end{equation}


\begin{newnewsection}{red}
We observe that the expectation of the matrix elements inside the norm is zero:
\begin{equation}
\begin{aligned}
    \mathbb{E} [(\mathbf{U}_0^\top \Delta \mathbf{G} \mathbf{V}_0)_{ij}] 
    = \left( \mathbf{U}_0^\top \mathbf{J}^5 (\mathbf{D} \circ \mathbb{E}[\mathbf{E}]) \mathbf{J}^5 \mathbf{V}_0 \right)_{ij}
    = 0  \quad (\because \mathbb{E}[E_{ij}] = 0).
\end{aligned}
\end{equation}
Moreover, the term $(\mathbf{U}_0^\top \Delta \mathbf{G} \mathbf{V}_0)_{ij}$ is a linear combination of the elements of $\mathbf{E}$, where $E_{ij}$ follows a Gaussian distribution. Since a linear combination of Gaussian variables is also Gaussian, the term itself is normally distributed.
Combined with the zero-mean property, the squared 4th singular value $\lambda_4^2$ follows a scaled central chi-squared distribution with 1 degree of freedom ($k$), where $\lambda_4/s^2$ follows a chi-squared distribution.
\end{newnewsection}
\newnewtext{
Since the mean of a chi-squared distribution can be written as $k s^2$ (k=1), $s^2$ is given by}
\begin{equation}
\begin{aligned}
    \newnewtext{s^2} &= \newnewtext{\mathbb{E} [ \lambda_4^2 ]} \\
    &= \mathbb{E} \left[ \| \mathbf{U_0 ^{\top} J^5 (D \circ E) J^5 V_0} \|_F^2 \right] \\
    &= \mathbb{E} \left[  \|(\mathbf{\hat{U}^{\top} (D \circ E) \hat{V}} \|_F^2 \right] \quad (\hat{U} =  J^5 U_0, \hat{V} = J^5 V_0 \in \R^{5 \times 2}) \\
    &= \sum_{a=1}^2 \sum_{b=1}^2 \left( \mathbb{E} \left[ \left( \sum_{i=1}^5 \sum_{j=1}^5 \hat{U}_{i, a} D_{ij} E_{ij} \hat{V}_{j, b} \right)^2 \right] \right)
    \label{eq:scale_from_mean}
\end{aligned}
\end{equation}
The distribution of the product of the two elements in the noise matrices can be represented as
\begin{equation}
    \mathbb{E} [E_{ij} E_{kl}] = \begin{cases}
        \sigma_m^2  & \text{if} \  (i, j) = (k, l) \ \text{or} \ (i, j) = (l, k) \\
        0 & \text{otherwise}
    \end{cases}
\end{equation}
Therefore, the \newnewtext{squared scale of $\lambda_4$} is
\begin{equation}
\begin{aligned}
    \newnewtext{s^2} &= \newnewtext{\sum_{a=1}^2 \sum_{b=1}^2 \sum_{i=1}^5 \sum_{j=1}^5 \sigma_m^2 D_{ij}^2 \left( \hat{U}_{i,a} \hat{V}_{j,b} + \hat{U}_{j,a} \hat{V}_{i,b} \right)^2}
    \label{eq:scale_computed}
\end{aligned}
\end{equation}   

In addition, by comparison of the right bottom element of equation \eqref{eq:sigma_pert}, we get
\begin{equation}
\begin{aligned}
    \mathbf{\tilde{U}_0^{\top} (G + \Delta G) \tilde{V}_0} &= \mathbf{U_0^\top \Delta G V_0} 
    \label{eq:UGV_invariant}
\end{aligned}
\end{equation}
where $\tilde{U}_0, \tilde{V}_0$ are the 4th and 5th columns of $\tilde{U}, \tilde{V}$. Therefore, by plugging equation \eqref{eq:sigma4_frob} into \eqref{eq:UGV_invariant}, we obtain the same scale as equation \eqref{eq:scale_computed} by replacing $D$ and $U_0, V_0$ with the perturbed distance matrix $\tilde{D}$ and singular vectors $\tilde{U}_0, \tilde{V}_0$.
Therefore, we obtain
\begin{equation}
\begin{aligned}
    \newnewtext{s^2} &= 
    \newnewtext{ \sum_{a=1}^2 \sum_{b=1}^2 \sum_{i=1}^5 \sum_{j=1}^5 \left(\sigma_m \mathbf{\tilde{D}}_{ij}\right)^2 \cdot 
    \left( \hat{U}_{i,a} \hat{V}_{j,b} + \hat{U}_{j,a} \hat{V}_{i,b} \right)^2}  \quad (\mathbf{\hat{U} =  J^5 \tilde{U}_0, \hat{V} = J^5 \tilde{V}_0} \in \R^{5 \times 2})
    \label{eq:scale_computed_value}
\end{aligned}
\end{equation}   

\end{editsection}
\begin{editsection}{blue}

\section{The Derivation of the Non-centrality Parameter When Fault Exists}
\label{sec:lambda_distribution}

Let the fault satellite $s_f$, and all the ranges connected to the fault satellite is biased by $\bar{f}$.
Using the same notation as Appendix F, the perturbed distance matrix $\tilde{\mathbf{D_f}}$ is 
\begin{equation}
    \tilde{\mathbf{D}_f} = \mathbf{D} + \mathbf{F} + \mathbf{E}
\end{equation}
where
\begin{equation}
    \mathbf{F}_{ij} = \begin{cases}
        f_{ij} &  i = s_f \ \text{or} \ j = s_f \\
        0        &  \text{otherwise}
    \end{cases}
\end{equation}
Therefore, the perturbed GCEDM is 
\begin{equation}
\begin{aligned}
    \mathbf{\tilde{G}}^{f} &= - \frac{1}{2} \mathbf{J}^5 (\tilde{\mathbf{D}}_f \circ \tilde{\mathbf{D}}_f) \mathbf{J}^5 
    \approx \mathbf{G} + \mathbf{\Delta G^f} + \mathbf{\Delta G}
\end{aligned} 
\end{equation}
where
\begin{equation}
    \mathbf{\Delta G^f} = \mathbf{J^5} (\mathbf{F} \circ \mathbf{D}) \mathbf{J^5}
\end{equation}
From Collloary 4.4.1, $\tilde{G}^{f}$ has rank 4. 
Following similar discussion as Appendix F, the 4th singular value of $\tilde{G}^f$ satisfies
\begin{equation}
\begin{aligned}
    \lambda_4^2 &= \Sigma_{44}^2 = \| \mathbf{U_0}^T \mathbf{\Delta G  V_0} +  \mathbf{U_0}^T \mathbf{\Delta G^{f}} \mathbf{V_0} \|_F^2 
\end{aligned}
\end{equation}


\begin{newnewsection}{red}
We analyze the expectation of the matrix term inside the norm:
\begin{equation}
    \begin{aligned}
        \mathbb{E} [ \mathbf{U}_0^\top (\mathbf{\Delta G} + \mathbf{\Delta G}^{f}) \mathbf{V}_0 ] 
        &= \mathbb{E} [ \mathbf{U}_0^\top \mathbf{\Delta G} \mathbf{V}_0 ] + \mathbf{U}_0^\top \mathbf{\Delta G}^{f} \mathbf{V}_0 \\
        &= \mathbf{0} + \mathbf{U}_0^\top \mathbf{J}^5 (\mathbf{D} \circ \mathbf{F}) \mathbf{J}^5 \mathbf{V}_0 \\
        &\neq \mathbf{0} \quad (\because \mathbf{D} \circ \mathbf{F} \neq \mathbf{0})
    \end{aligned}
\end{equation}
The underlying variable is the sum of a zero-mean Gaussian term ($\mathbf{U}_0^\top \mathbf{\Delta G} \mathbf{V}_0$) and a non-zero constant bias term ($\mathbf{U}_0^\top \mathbf{\Delta G}^{f} \mathbf{V}_0$).
Because the underlying variable is a Gaussian with a non-zero mean, its squared Frobenius norm $\lambda_4^2$ follows a non-central chi-squared distribution.
\end{newnewsection}

\newnewtext{The mean of this non-central chi-squared distribution is}
\begin{equation}
\begin{aligned}
        s^2(k + \lambda) = \mathbb{E}[\lambda_4^2] 
        &= \mathbb{E} \left[ \|\mathbf{U_0}^T \mathbf{\Delta G  V_0} \|_F^2 \right] + \mathbb{E} \left[\sum_{i=1}^2 \sum_{j=1}^2 (\mathbf{U_0}^T \mathbf{\Delta G  V_0})_{ij} (\mathbf{U_0}^T \mathbf{\Delta G^f  V_0})_{ij} \right] + \|\mathbf{U_0}^T \mathbf{\Delta G^f  V_0} \|_F^2  \\
        &= \newnewtext{s^2} + \|\mathbf{U_0}^T \mathbf{\Delta G^f  V_0} \|_F^2 \quad (\because \text{Eq.} \ \eqref{eq:scale_from_mean}, \ \mathbb{E} [(U_0^T \Delta G  V_0)_{ij}] = 0) \\
        &= \newnewtext{s^2} \left( 1 + \frac{1}{\newnewtext{s^2}} \|\mathbf{U_0} ^{\top} \mathbf{J} (\mathbf{D} \circ \mathbf{F}) \mathbf{J V_0} \|_F^2 \right)
\end{aligned} 
\end{equation}
\newnewtext{where $\newnewtext{s^2}$ is the squared scale given in \eqref{eq:scale_computed_value}}

Therefore, the non-centrality parameter is given by
\begin{equation}
    \lambda_{s_f} = \frac{1}{\newnewtext{s^2}} \|\mathbf{U_0} ^{\top} \mathbf{J^5} (\mathbf{D} \circ \mathbf{F}) \mathbf{J^5 V_0}\|_F^2
    \label{eq:non_central_formula}
\end{equation}

\end{editsection}
\begin{newnewsection}{red}

\section{Generalization of the Test Statistic Distribution for \texorpdfstring{$n \ge 5$}{n >= 5}}
\label{sec:generalization_test}

We generalize the derivation from Appendix F to an arbitrary number of satellites $n \ge 5$. We define the generalized test statistic $\gamma_{test}$ as the total energy of the noise subspace singular values:
\begin{equation}
    \gamma_{test} = \sum_{k=4}^{n-1} \tilde{\lambda}_k^2
\end{equation}
where $\tilde{\lambda}_k$ are the singular values of the observed (noisy) GCEDM $\tilde{G}$. This statistic is equivalent to the squared Frobenius norm of the matrix projected onto the active noise subspace.

Since the true noiseless matrix $G$ is unknown, we approximate the active noise subspace using the singular vectors of the measured matrix $\tilde{G}$. Let the SVD of the observed matrix be $\tilde{G} = \tilde{U} \tilde{\Sigma} \tilde{U}^\top$. We partition the singular vectors $\tilde{U}$ as follows:
\begin{equation}
    \tilde{U} = \begin{bmatrix}
        \underbrace{\tilde{u}_1 \quad \tilde{u}_2 \quad \tilde{u}_3}_{\tilde{U}_S} & \underbrace{\tilde{u}_4 \quad \ldots \quad \tilde{u}_{n-1}}_{\tilde{U}_{noise}} & \underbrace{\tilde{u}_n}_{\tilde{u}_{\mathbf{1}}}
    \end{bmatrix}
    \label{eq:u_tilde_vector}
\end{equation}
Here, $\tilde{U}_S$ spans the signal subspace. The vectors $\tilde{u}_4, \ldots, \tilde{u}_{n-1}$ span the active noise subspace. The final vector $\tilde{u}_n$ corresponds to the structural null direction parallel to $\mathbf{1}$. Due to geometric centering ($\tilde{G} \mathbf{1} = \mathbf{0}$), the singular value associated with $\tilde{u}_n$ remains strictly zero.

We approximate the projection of the noise perturbation $\Delta G$ using this observed basis $\tilde{U}_{noise}$. Let $M_{noise} \approx \tilde{U}_{noise}^\top \Delta G \tilde{U}_{noise}$. Since $\Delta G$ is linear in the Gaussian measurement noise $E$, $M_{noise}$ is a symmetric matrix of zero-mean Gaussian variables. The sum of squares of its independent elements follows a Chi-squared distribution with degrees of freedom $k$:
\begin{equation}
    k = \frac{(n-4)(n-3)}{2}
\end{equation}
(e.g., $k=1$ for $n=5$). The scaled test statistic follows $\frac{\gamma_{test}}{s^2} \sim \chi^2(k, 0)$, where the squared scale parameter $s^2$ is computed using the observed singular vectors $\tilde{u}_a$ (columns of $\tilde{U}_{noise}$) and the observed distances $\tilde{D}_{ij}$:
\begin{equation}
    s^2 \approx \frac{1}{k} \sum_{a=4}^{n-1} \sum_{b=4}^{n-1} \left( \sum_{i=1}^n \sum_{j=1}^n (\sigma_m \tilde{D}_{ij})^2 (\tilde{u}_{i, a} \tilde{u}_{j, b} + \tilde{u}_{j, a} \tilde{u}_{i, b})^2 \right)
    \label{eq:generalized_scale}
\end{equation}
This approximation holds because the perturbation of the singular vectors corresponding to the noise-subspace is small (Equation \eqref{eq:sigma_pert}), allowing $\tilde{U}_{noise}$ (obtained from $\mathbf{\tilde{G}}$) to serve as a valid approximation for the true noise subspace $\hat{U}_N$ (of $\mathbf{G}$) in the variance computation.
Note that for $n=5$, equation \eqref{eq:generalized_scale} reduces to the result derived in Equation    \eqref{eq:scale_computed_value}.

When a fault occurs, the GCEDM includes a deterministic bias term $\Delta G^f = J^n (D \circ F) J^n$. The projection onto the noise subspace becomes $M_{total} = M_{noise} + M_{bias}$, where $M_{bias} = \hat{U}_N^\top \Delta G^f \hat{U}_N$. The test statistic $\gamma_{test} \approx \| M_{total} \|_F^2$ then follows a non-central Chi-squared distribution:
\begin{equation}
    \frac{\gamma_{test}}{s^2} \sim \chi^2(k, \lambda_{nc})
\end{equation}
The non-centrality parameter $\lambda_{nc}$ can be computed as:
\begin{equation}
    \lambda_{nc} = \frac{1}{s^2} \| M_{bias} \|_F^2 = \frac{1}{s^2} \sum_{a=1}^{n-4} \sum_{b=1}^{n-4} (\hat{u}_a^\top \Delta G^f \hat{u}_b)^2
\end{equation}
For $n=5$, this reduces to the result $\lambda_{s_f} = \frac{1}{s^2} (u_0^\top \Delta G^f u_0)^2$ derived in Equation \eqref{eq:non_central_formula}.

\end{newnewsection}
\begin{newnewsection}{red}

\section{Proof of Proposition 4.5}
\label{sec:fault_detectable_subgraph_proof}
\begin{proof}
We assume the graph $G$ is redundantly rigid. A graph is fault disprovable if the injection of a fault bias makes the set of edge constraints inconsistent (i.e., the graph cannot be embedded in Euclidean space).

Let $v_f \in V$ be a satellite experiencing a clock fault. This fault introduces a bias $b \neq 0$ to the measured ranges. The weights of the edges incident to $v_f$ are given by:
\begin{equation}
    W(u, v_f) = 
    \begin{cases} 
        \| \mathbf{x}_u - \mathbf{x}_{v_f} \| + b & \text{if } (u, v_f) \in E_m \\
        \| \mathbf{x}_u - \mathbf{x}_{v_f} \| & \text{if } (u, v_f) \in E_e
    \end{cases}
\end{equation}
where the measurement noise and ephemeris error are not considered in this proof.

\textit{Case 1: Sufficient Condition.} Assume $v_f$ has at least one incident edge $e_{meas} = (u, v_f) \in E_m$. Since $G$ is redundantly rigid, the distance between $u$ and $v_f$ is structurally constrained by the remaining edges in $E \setminus \{e_{meas}\}$. The faulty weight $W(e_{meas})$ includes the bias $b$, creating a geometric contradiction with the constraints imposed by the rest of the graph. Thus, the graph becomes unrealizable, and the fault is detectable.

\textit{Case 2: Necessary Condition.} Assume $v_f$ has no incident edges in $E_m$ (i.e., $\text{deg}_{E_m}(v_f) = 0$). In this case, all edges connected to $v_f$ are from the set $E_e$. The weights of these edges are derived solely from the ephemeris, which is independent of the instantaneous clock jump fault. Consequently, the geometry of $v_f$ in the graph remains consistent with the ephemeris model, regardless of the actual clock fault. The graph remains realizable, and the fault cannot be detected by analyzing the rigidity or embeddability of $G$.

Therefore, for the graph to be fault-disprovable for any single satellite fault, it is necessary that every node $v \in V$ has at least one incident edge in $E_m$.
\end{proof}

\end{newnewsection}

\end{document}